\newcommand{\Naturals}{\mathbb{N}}
\newcommand{\Reals}{\mathbb{R}}
\newcommand{\RR}{\mathbb{R}}
\newcommand{\E}{\mathbb{E}}
\newcommand{\V}{\mathbb{V}}
\newcommand{\prob}{\mathbb{P}}
\newcommand{\defeq}{\mathrel{\overset{\makebox[0pt]{\mbox{\normalfont\tiny\sffamily def}}}{=}}}
\newcommand{\trans}{\top}
\newcommand{\uvec}{\mathbf{u}}
\newcommand{\thetavec}{\boldsymbol{\theta}}
\newcommand{\phivec}{\boldsymbol{\phi}}
\newcommand{\alphavec}{\boldsymbol{\alpha}}
\newcommand{\Actions}{\mathcal{A}}
\newcommand{\action}{a}
\newcommand{\States}{\mathcal{S}}
\newcommand{\state}{s}
\newcommand{\Pfcn}{\mathrm{P}}
\newcommand{\Value}{\mathcal{V}}
\newcommand{\bpolicy}{\mu}
\newcommand{\tpolicy}{\pi}
\newtheorem{theorem}{Theorem}[section]
\newtheorem{corollary}{Corollary}[theorem]
\newtheorem{lemma}[theorem]{Lemma}
\newtheorem{proposition}[theorem]{Proposition}
\newtheorem{assumption}{Assumption}
\newcommand{\norm}[1]{\left\lVert#1\right\rVert}
\DeclareMathOperator{\tr}{tr}
\newcommand{\Cov}{\mathrm{Cov}}
\DeclarePairedDelimiterX{\expectarg}[1]{[}{]}{%
  \ifnum\currentgrouptype=16 \else\begingroup\fi
  \activatebar#1
  \ifnum\currentgrouptype=16 \else\endgroup\fi
}
\newcommand{\innermid}{\nonscript\;\delimsize\vert\nonscript\;}
\newcommand{\activatebar}{%
  \begingroup\lccode`\~=`\|
  \lowercase{\endgroup\let~}\innermid 
  \mathcode`|=\string"8000
}
\newcommand{\xiwer}{X_{\mathrm{IR}}}
\newcommand{\xbciwer}{X_{\mathrm{BC}}}
\newcommand{\xbciwerb}[1]{X_{\mathrm{BC}}^{(#1)}}
\newcommand{\xwis}{X_{\mathrm{WIS}^*}}
\newcommand{\xis}{X_{\mathrm{IS}}}
\newcommand{\xwindow}[1]{X_{#1}}
\newcommand{\Var}{\mathrm{Var}}
\newcommand{\bsize}{n}
\newcommand{\unisample}{{z_j}}
\newcommand{\muB}[1]{\mu_{#1}}
\begin{document}

\title{Importance Resampling for Off-policy Prediction}
\author{%
  Matthew Schlegel \\
  University of Alberta\\
  \texttt{mkschleg@ualberta.ca} \\
  \And
  Wesley Chung \\
  McGill University\\
  \texttt{wchung@ualberta.ca} \\
  \And
  Daniel Graves \\
  Huawei \\
  \texttt{daniel.graves@huawei.com} \\
  \And
  Jian Qian \\
  University of Alberta \\
  \texttt{jq1@ulberta.ca} \\
  \And
  Martha White \\
  University of Alberta \\
  \texttt{whitem@ulberta.ca} \\
}

\maketitle

% \todo[inline,caption={}]{
%   Extra (or changes to motivation) Motivation items that were points of contention prior:
%   \begin{itemize}
%   \item Focus on state value functions (for this algorithm)
%   \item Static target policies
%   \item Assumptions in theorem 3.4. This could use some revisiting due to prior confusion.
%   \end{itemize}

%   Other modifications:
%   \begin{itemize}
%   \item Move proofs of theorems to appendix
%   \item Trim and reformat maths for new format.
%   \item Trim plots and add some new analysis/experiments building towards the DNN.
%   \item Bring experimental details inline with Joelle's checklist (as requested by the NeurIPS Committee)
%   \item Add (initially redacted) link to code.
%   \item Clean code a bit, add easy to run scripts.
%   \end{itemize}
% }

\begin{abstract}
% MARTHAC:
  % Importance sampling is a common approach to off-policy learning in reinforcement learning.  While it is consistent and unbiased, it can result in high variance updates to the parameters for the value function. Weighted importance sampling (WIS) has been explored to reduce variance for off-policy policy evaluation, but only for linear value function approximation. In this work, we explore a resampling strategy, as an alternative to reweighting. We propose Importance Resampling (IR) for off-policy learning, that resamples experience from a replay buffer and applies a standard on-policy update. The approach avoids using importance sampling ratios in the update, instead correcting the distribution before the update. We characterize the bias and consistency of IR, particularly compared to WIS. We demonstrate in several microworlds that IR has improved sample efficiency and parameter sensitivity, as compared to IS and several WIS estimators. We conclude with a demonstration showing IR improves over IS for learning a value function from images in a racing car simulator.
% MATT
% MARTHAC: I more minorly changed what you wrote, Matt. I only defined WIS, and I removed the sentence:
% "Due to the high variance of importance sampling, several variance reduction techniques have been developed following the reweighting strategy." This too vague, and not really saying too much. Instead, we want to make sure the reader knows we consider variance reduction strategies, which are not explicitly mentioned later in the abstract
%
  Importance sampling (IS) is a common reweighting strategy for off-policy prediction in reinforcement learning.  While it is consistent and unbiased, it can result in high variance updates to the weights for the value function. In this work, we explore a resampling strategy as an alternative to reweighting. We propose Importance Resampling (IR) for off-policy prediction, which resamples experience from a replay buffer and applies standard on-policy updates. The approach avoids using importance sampling ratios in the update, instead correcting the distribution before the update. We characterize the bias and consistency of IR, particularly compared to Weighted IS (WIS). We demonstrate in several microworlds that IR has improved sample efficiency and lower variance updates, as compared to IS and several variance-reduced IS strategies, including variants of WIS and V-trace which clips IS ratios. We also provide a demonstration showing IR improves over IS for learning a value function from images in a racing car simulator.
\end{abstract}

\section{Introduction}

% P1: Motivate off-policy
% \matt{Do we want to motivate from the standpoint of multiple policies? It could be good enough to suggest learning off-policy is valuable due to sample efficiency...}

An emerging direction for reinforcement learning systems is to learn many predictions, formalized as value function predictions contingent on many different policies. The idea is that such predictions can provide a powerful abstract model of the world. 
%Learning value functions, for many target policies, is important for learning systems in reinforcement learning.
%For each target policy, the value function returns the expected return from a state, and can provide useful information about (long-term) outcomes under different behaviors.
Some examples of systems that learn many value functions are the Horde architecture composed of General Value Functions (GVFs) \citep{sutton2011horde,modayil2014multi}, systems that use options \citep{sutton1999between,schaul2015universal}, predictive representation approaches \citep{sutton2005temporal,schaul2013better,silver2017thepredictron} and systems with auxiliary tasks \citep{jaderberg2017reinforcement}. 
 Off-policy learning is critical for learning many value functions with different policies, because it enables data to be generated from one behavior policy to update the values for each target policy in parallel. 

The typical strategy for off-policy learning is to reweight updates using importance sampling (IS). For a given state $\state$, with action $\action$ selected according to behavior $\mu$, the IS ratio is the ratio between the probability of the action under the target policy $\pi$ and the behavior: $\frac{\tpolicy(\action|\state)}{\bpolicy(\action|\state)}$. The update is multiplied by this ratio, adjusting the action probabilities so that the expectation of the update is as if the actions were sampled according to the target policy $\pi$. Though the IS estimator is unbiased and consistent \citep{kahn1953, rubinstein2016simulation}, it can suffer from high or even infinite variance due to large magnitude IS ratios, in theory \citep{andradottir1995} and in practice \citep{precup2001offpolicy,mahmood2014weighted,mahmood2017multi}.

There have been some attempts to modify off-policy prediction algorithms to mitigate this variance.\footnote{There is substantial literature on variance reduction for another area called off-policy policy evaluation, but which estimates only a single number or value for a policy (e.g., see \citep{thomas2016data}). The resulting algorithms differ substantially, and are not appropriate for learning the value function.} 
Weighted IS (WIS) algorithms have been introduced \citep{precup2001offpolicy,mahmood2014weighted, mahmood2015off}, which normalize each update by the sample average of the ratios. These algorithms improve learning over standard IS strategies, but are not straightforward to extend to nonlinear function approximation. In the offline setting, a reweighting scheme, called importance sampling with unequal support \citep{thomas2017importance}, was introduced to account for samples where the ratio is zero, in some cases significantly reducing variance.  
Another strategy is to rescale or truncate the IS ratios, as used by V-trace \citep{espeholt2018impala} for learning value functions and Tree-Backup \citep{precup2000eligibility}, Retrace \citep{munos2016safe} and ABQ \citep{mahmood2017multi} for learning action-values. Truncation of IS-ratios in V-trace can incur significant bias, and this additional truncation parameter needs to be tuned. 

An alternative to reweighting updates is to instead correct the distribution before updating the estimator using weighted bootstrap sampling: resampling a new set of data from the previously generated samples \citep{smith1992a, arulampalam2002}. Consider a setting where a buffer of data is stored, generated by a behavior policy. Samples for policy $\tpolicy$ can be obtained by resampling from this buffer, proportionally to $\frac{\tpolicy(\action|\state)}{\bpolicy(\action|\state)}$ for state-action pairs $(\state,\action)$ in the buffer. 
In the sampling literature, this strategy has been proposed under the name Sampling Importance Resampling (SIR) \citep{rubin1988using, smith1992a, gordon1993}, and has been particularly successful for Sequential Monte Carlo sampling \citep{gordon1993,skare2003improved}. Such resampling strategies have also been popular in classification, with over-sampling or under-sampling typically being preferred to weighted (cost-sensitive) updates \citep{lopez2013aninsight}.

A resampling strategy has several potential benefits for off-policy prediction.\footnote{We explicitly use the term prediction rather than policy evaluation to make it clear that we are not learning value functions for control. Rather, our goal is to learn value functions solely for the sake of prediction. } 
Resampling could even have larger benefits for learning approaches, as compared to averaging or numerical integration problems, because updates accumulate in the weight vector and change the optimization trajectory of the weights. For example, very large importance sampling ratios could destabilize the weights. This problem does not occur for resampling, as instead the same transition will be resampled multiple times, spreading out a large magnitude update across multiple updates. On the other extreme, with small ratios, IS will waste updates on transitions with very small IS ratios. 
By correcting the distribution before updating, standard on-policy updates can be applied.
The magnitude of the updates vary less---because updates are not multiplied by very small or very large importance sampling ratios---potentially reducing variance of stochastic updates and simplifying learning rate selection.
We hypothesize that resampling (a) learns in a fewer number of updates to the weights, because it focuses computation on samples that are likely under the target policy and (b) is less sensitive to learning parameters and target and behavior policy specification.
% MARTHAC: Not clear why we want to say this actually
%Two important questions arise from this hypothesis which are addressed in this work: how can resampling (SIR) be extended for use in prediction and if the hypothesized advantages manifest theoretically or empirically.

In this work, we investigate the use of resampling for online off-policy prediction for known, unchanging target and behavior policies. We first introduce Importance Resampling (IR), which samples transitions from a buffer of (recent) transitions according to IS ratios. These sampled transitions are then used for on-policy updates. We show that IR has the same bias as WIS, and that it can be made unbiased and consistent with the inclusion of a batch correction term---even under a sliding window buffer of experience. We provide additional theoretical results characterizing when we might expect the variance to be lower for IR than IS. We then empirically investigate IR on three microworlds and a racing car simulator, learning from images, highlighting that (a) IR is less sensitive to learning rate than IS and V-trace (IS with clipping) and (b) IR converges more quickly in terms of the number of updates.

%and (c) IR can obtain further benefits when using an intermediate policy for resampling combined with importance sampling with this policy.

% We find that IR is more sample efficient---learning more quickly in terms of number of updates---and has reduced sensitivity in terms of learning rate parameters, for both fixed parameters and within the RMSProp optimizer. 

\section{Background}

% MARTHAC: changed cumulr to big C for consistency to S, etc
\newcommand{\cumul}{c}
\newcommand{\cumulr}{C}
\newcommand{\stater}{S}
\newcommand{\actionr}{A}
\renewcommand{\Value}{V}

We consider the problem of learning General Value Functions (GVFs) \citep{sutton2011horde}. The agent interacts in an environment defined by a set of states 
$\States$, a set of actions $\Actions$ and Markov transition dynamics, with probability $\Pfcn(\state'|\state,\action)$ of transitions to state $\state'$ when taking action $\action$ in state $\state$. A GVF is defined for policy $\pi: \States \!\times \!\Actions \!\rightarrow\! [0,1]$, cumulant $\cumul: \States\! \times \!\Actions \!\times\! \States\! \rightarrow\! \RR$ and continuation function $\gamma: \States \!\times\! \Actions \!\times \!\States \rightarrow [0,1]$, with $\cumulr_{t+1} \defeq  \cumul(\stater_t, \actionr_t, \stater_{t+1})$ and  $\gamma_{t+1} \defeq  \gamma(\stater_t, \actionr_t, \stater_{t+1})$ for a (random) transition $(\stater_t, \actionr_t, \stater_{t+1})$. The value for a state $s \in \States$ is
\begin{align*}
  \Value(\state) \defeq \mathbb{E}_\pi\left[ G_t | \stater_t = \state \right] &
  % MARTHAC: lets noti nclude this ugly equation, since we dont need it
%= \mathbb{E}_\pi\Big[ \sum_{i=1}^\infty \Big( \prod_{j=0}^{i-1} \gamma_{t+j} \Big)  \cumulr_{t+i}  | \stater_t = \state \Big]
&\text{where }  G_t \defeq \cumulr_{t+1} + \gamma_{t+1} \cumulr_{t+2} + \gamma_{t+1} \gamma_{t+2} \cumulr_{t+3} + \ldots
.
\end{align*}
The operator $\mathbb{E}_\pi$ indicates an expectation with actions selected according to policy $\pi$. GVFs encompass standard value functions, where the cumulant is a reward.
% MARTHAC: not strictly true, value functions can be episodic. I'd say the key thing is reward anywya 
% and the continuation function is a constant.
Otherwise, GVFs enable predictions about discounted sums of others signals into the future, when following a target policy $\pi$. 
These values are typically estimated using parametric function approximation, with weights $\theta \in \RR^d$ defining approximate values $\Value_\theta(\state)$. 

In off-policy learning, transitions are sampled according to behavior policy, rather than the target policy. 
To get an unbiased sample of an update to the weights, the action probabilities need to be adjusted. Consider on-policy temporal difference (TD) learning, with update $ \alpha_t\delta_t\nabla_\theta \Value_{\theta}(s)$ for a given $S_t = s$, 
%%
%\vspace{-0.1cm}
%\begin{equation*}
 %\alpha_t\delta_t\nabla_\theta \Value_{\theta}(s)
%\end{equation*}
%
for learning rate $\alpha_t \in \RR^+$ and TD-error $\delta_t \defeq C_{t+1} + \gamma_{t+1}\Value_{\theta}(S_{t+1}) -  \Value_{\theta}(s)$. 
If actions are instead sampled according to a behavior policy $\mu: \States \times \Actions \rightarrow [0,1]$, then we can use importance sampling (IS) to modify the update, giving the off-policy TD update $\alpha_t\rho_t\delta_t\nabla_\theta \Value_{\theta}(s)$
%%
%\vspace{-0.1cm}
%\begin{equation*}
%\alpha_t\rho_t\delta_t\nabla_\theta \Value_{\theta}(s)
%\end{equation*}
%\vspace{-0.1cm}
%%
for IS ratio $\rho_t \defeq \frac{\tpolicy(\actionr_t | \stater_t)}{\bpolicy(\actionr_t | \stater_t)}$. 
Given state $\stater_t = \state$, if $\mu(a | s) > 0$ when $\pi(a | s) > 0$, then the expected value of these two updates are equal. To see why, notice that
\begin{equation*}
  \mathbb{E}_\mu\left[\alpha_t\rho_t\delta_t\nabla_\theta \Value_{\theta}(s) |S_t = s\right]
  =  \alpha_t\nabla_\theta \Value_{\theta}(s)\mathbb{E}_\mu\left[\rho_t\delta_t |S_t = s\right]
\end{equation*}
which equals $\mathbb{E}_\pi\left[\alpha_t\rho_t\delta_t\nabla_\theta \Value_{\theta}(s) |S_t = s\right]$ because
% \vspace{-0.2cm}
%
\begin{align*}
\mathbb{E}_\mu\left[\rho_t\delta_t |\stater_t = \state\right] 
% &= \sum_{\action \in \Actions} \mu(\action | \state) \mathbb{E}\left[\rho_t\delta_t |\stater_t = \state, \actionr_t = \action \right]\\ 
&= \sum_{\action \in \Actions} \mu(\action | \state) \frac{\tpolicy(\action | \state)}{\bpolicy(\action | \state)} \mathbb{E}\left[\delta_t |\stater_t = \state, \actionr_t = \action \right]
% &= \sum_{\action \in \Actions} \tpolicy(\action | \state) \mathbb{E}\left[\delta_t |\stater_t = \state, \actionr_t = \action \right] \\
= \ \mathbb{E}_\pi\left[\delta_t |\stater_t = \state\right].
\end{align*}
%
% MARTHAC: Mabye we don't need to say this
%Other on-policy updates can also be modified with IS ratios to adjust these action probabilities. 

Though unbiased, IS can be high-variance. A lower variance alternative is Weighted IS (WIS). For a batch consisting of transitions $\{(s_i, a_i, s_{i+1}, c_{i+1}, \rho_i)\}_{i=1}^n$, batch WIS uses a normalized estimate for the update.
%with normalization $\sum_{i=1}^n \rho_i$. 
For example, an offline batch WIS TD algorithm, denoted WIS-Optimal below, would use update $ \alpha_t \frac{\rho_t}{\sum_{i=1}^n \rho_i} \delta_t\nabla_\theta \Value_{\theta}(s)$.
%
%\vspace{-0.1cm}
%\begin{equation*}
 %\alpha_t \frac{\rho_t\delta_t\nabla_\theta \Value_{\theta}(s)}{\sum_{i=1}^n \rho_i}
%\end{equation*}
%
Obtaining an efficient WIS update is not straightforward, however, when learning online and has resulted in algorithms in the SGD setting (i.e. $n=1$) specialized to tabular \citep{precup2001offpolicy} and linear functions \citep{mahmood2014weighted,mahmood2015off}.
We nonetheless use WIS as a baseline in the experiments and theory.

%\section{Off-policy TD with an experience replay buffer}
%\section{Resampling strategies for off-policy policy evaluation}\label{sec:resampling_offpolicy}
\section{Importance Resampling}\label{sec:resampling_offpolicy}

% MARTHAC: Do we introduce it as an alternative ,or just introduce it?
%In this section, we introduce resampling, as an alternative to importance sampling for off-policy learning. 
In this section, we introduce Importance Resampling (IR) for off-policy prediction and characterize its bias and variance. 
A resampling strategy requires a buffer of samples, from which we can resample. Replaying experience from a buffer was introduced as a biologically plausible way to reuse old experience \citep{lin1992self,lin1993reinforcement}, and has  become common for improving sample efficiency, particularly for control \citep{mnih2015,schaul2015a}. In the simplest case---which we assume here---the buffer is a sliding window of the most recent $n$ samples, $\{(s_i, a_i, s_{i+1}, c_{i+1}, \rho_i)\}_{i=t-n}^t$, at time step $t > n$. 
We assume samples are generated by taking actions according to behavior $\bpolicy$. The transitions are generated with probability $d_\bpolicy(s) \bpolicy(a | s) \Pfcn(s' | s, a)$, where $d_\bpolicy : \States \rightarrow [0,1]$ is the stationary distribution for policy $\bpolicy$. The goal is to obtain samples according to  $d_\bpolicy(s) \tpolicy(a | s) \Pfcn(s' | s, a)$, as if we had taken actions according to policy $\tpolicy$ from states\footnote{The assumption that states are sampled from $d_\bpolicy$ underlies most off-policy learning algorithms. Only a few attempt to adjust probabilities $d_\bpolicy$ to $d_\tpolicy$, either by multiplying IS ratios before a transition \citep{precup2001offpolicy} or by directly estimating state distributions \citep{hallak2017consistent, liu2018}. In this work, we focus on using resampling to correct the action distribution---the standard setting. We expect, however, that some insights will extend to how to use resampling to correct the state distribution, particularly because wherever IS ratios are used it should be straightforward to use our resampling approach.}  $\state \sim d_\bpolicy$. 

The IR algorithm is simple: resample a mini-batch of size $k$ on each step $t$ from the buffer of size $n$, proportionally to $\rho_i$ in the buffer. Using the resampled mini-batch we can update our value function using standard on-policy approaches, such as on-policy TD or on-policy gradient TD.
% CRC: Standard on-policy updates, such as on-policy TD or on-policy gradient TD, can then used on this resample. 
% MARTHAC: I removed this since it seems too early to state theorems
The key difference to IS and WIS is that the distribution itself is corrected, before the update, whereas IS and WIS correct the update itself. This small difference, however, can have larger ramifications practically, as we show in this paper.
% particularly with updates that accumulate in the weights.

We consider two variants of IR: with and without bias correction. For point $i_j$ sampled from the buffer, let $\Delta_{i_j}$ be the on-policy update for that transition. For example, for TD, $\Delta_{i_j} = \delta_{i_j} \nabla_\theta V_\theta(s_{i_j})$. The first step for either variant is to sample a mini-batch of size $k$ from the buffer, proportionally to $\rho_i$. Bias-Corrected IR (BC-IR) additionally pre-multiplies with the average ratio in the buffer $\bar{\rho} \defeq \tfrac{1}{n} \sum_{i=1}^n \rho_i$, giving the following estimators for the update direction 
\begin{align*}
\xiwer &\defeq \tfrac{1}{k} \sum_{j=1}^k \Delta_{i_j} \hspace{2.0cm}
\xbciwer \defeq \tfrac{\bar{\rho}}{k} \sum_{j=1}^k \Delta_{i_j}
\end{align*}
%\vspace{-0.2cm}
%\begin{align*}
%  \textbf{IR:} &\hspace{0.2cm} \alpha_t \tfrac{1}{k} \sum_{j=1}^k \Delta_{t_j}  \hspace{1cm}
%  \textbf{BC-IR:} \hspace{0.2cm} \alpha_t \bar{\rho}_t \tfrac{1}{k} \sum_{j=1}^k \Delta_{t_j}
%.
%\end{align*}
%\vspace{-0.45cm}
%
%
% MARTHAC: removed ---as shown in Theorem \ref{thm:bias_IR}---, because its not clear it shows $\bar{\rho}_t$ will be close to 1
% CRC: See the footnote
BC-IR negates bias introduced by the average ratio in the buffer deviating significantly from the true mean. For reasonably large buffers, $\bar{\rho}$ will be close to 1 making IR and BC-IR have near-identical updates\footnote{$\bar{\rho} \approx \mathbb{E}[\rho(a|s)] = \mathbb{E}[\frac{\pi(a|s)}{\mu(a|s)}] = \sum_{s,a} \frac{\pi(a|s)}{\mu(a|s)}\mu(a|s) d_{\mu}(s) = 1$.}.
%In practice, we find the two variants of IR perform similarly. 
Nonetheless, they do have different theoretical properties, particularly for small buffer sizes $n$, so we characterize both. 
% MARTHAC: Is IR really simpler? Its a weak argument. Really, we expect IR to have lower variance, more bias. 
%Though BC-IR has better bias properties, IR is simpler---not requiring any modification to the updates---which may be more important than the small amount of bias introduced by the IR without bias correction. For this reason, we advocate for and analyze both variants. 

Across most results, we make the following assumption.
% \vspace{-0.1cm}
%\begin{assumption}\label{assum_iid}
%Transition tuples $X_i = (S_i, A_i, S_{i+1})$ are sampled i.i.d. according to the distribution $p(x = (s,a,s')) = d_\bpolicy(s) \bpolicy(a | s) \Pfcn(s' | s, a)$, for $i = 1, 2, 3, \ldots$. 
%\end{assumption}

\begin{assumption}\label{assum_id}
  % Transition tuples $X_i = (S_i, A_i, S_{i+1})$ are identically distributed and are distributed according to $p(x = (s,a,s')) = d_\bpolicy(s) \bpolicy(a | s) \Pfcn(s' | s, a)$, for $i = 1, 2, 3, \ldots$.
   A buffer $B_t = \{X_{t+1}, ..., X_{t+n}\}$ is constructed from the most recent $n$ transitions sampled by time $t+n$, which are generated sequentially from an irreducible, finite MDP with a fixed policy $\mu$.
\end{assumption}
%
% \vspace{-0.1cm}
% CRC: The first few sentences. Discussing the tradeoff with using the above assumption.
% Assuming we have access to i.i.d. samples is a common assumption when analyzing off-policy reinforcement
% learning algorithms (see \citep{sutton2009fast, mahmood2014weighted}), and here we slightly relax the assumption by allowing dependent samples.
% %but unfortunately it does limit our theoretical understanding of these algorithms in the full reinforcement learning problem. 
% While for this work we focus on theoretical results using this assumption, future work should better understand IR in the context of samples generated sequentially.
% Theorem \ref{thm:bias_IR} is the only results which follows a different set of assumptions, but we suspect this will also hold under the above assumption.
To denote expectations under $p(x) \!=\! d_\bpolicy(s) \bpolicy(a | s) \Pfcn(s' | s, a)$ and $q(x) \!=\! d_\bpolicy(s) \tpolicy(a | s) \Pfcn(s' | s, a)$, we overload the notation from above, using operators $\E_\bpolicy$ and $\E_\tpolicy$ respectively. To reduce clutter, we write $\E$ to mean $\E_\bpolicy$, because most expectations are under the sampling distribution. All proofs can be found in Appendix \ref{sec:theory_appendix}.

\subsection{Bias of IR}

We first show that IR is biased, and that its bias is actually equal to WIS-Optimal, in Theorem \ref{thm:bias_IR}. 
% MARTHAC: I am realizing this is just repetitive for what we say below
%This bias is small for reasonably large $n$, because it is proportional to $1/n$. In terms of mean-squared error---which is composed of squared bias and variance---this bias term is $1/n^2$ and is relatively negligible compared to the variance. Nonetheless, for smaller buffers, such bias could have an impact. We show that with a simple modification---the bias correction in BC-IR---we obtain an unbiased estimate of the update (Corollary \ref{cor:bias_BCIR}).
%
%
\begin{theorem}\label{thm:bias_IR}[Bias for a fixed buffer of size $n$] Assume a buffer $B$ of $n$ transitions sampled i.i.d according to $p(x = (s,a,s')) = d_\bpolicy(s) \bpolicy(a | s) \Pfcn(s' | s, a)$.
%Let $\xiwer \defeq \tfrac{1}{k} \sum_{j=1}^k \Delta_{i_j}$ for transitions $i_1, \ldots, i_k$ sampled randomly from the buffer proportionally to $\rho_i$. 
Let $\xwis \defeq \sum_{i=1}^{n} \frac{\rho_i}{\sum_{j=1}^n \rho_j} \Delta_i$ be the WIS-Optimal estimator of the update.
% rho has already been defined and used often
%with $\rho_i \defeq \tpolicy(A_i | S_i) / \bpolicy(A_i|S_i)$. 
	Then, 
	\begin{equation*}
	\E[\xiwer] = \E [\xwis]
	\end{equation*}
	and so the bias of $\xiwer$ is proportional to 
	%
  % \vspace{-0.2cm}
	\begin{equation}
	\mathrm{Bias}(\xiwer) = \E [\xiwer] - \E_\tpolicy[\Delta] \propto \frac{1}{n} (\E_\tpolicy[\Delta] \sigma_\rho^2 - \sigma_{\rho, \Delta} \sigma_\rho \sigma_\Delta)
	\label{eq_bias}
\end{equation}
% \vspace{-0.5cm}
%
% \vspace{-0.2cm}
%
	where $\E_\tpolicy[\Delta]$ is the expected update across all transitions, with actions from $S$ taken by the target policy $\tpolicy$; $\sigma_\rho^2 = \Var(\tfrac{1}{n}\sum_{j=1}^n \rho_j)$; 
	$\sigma_\Delta^2 = \Var(\tfrac{1}{n}\sum_{i=1}^{n} \rho_i \Delta_i)$; and covariance $\sigma_{(\rho,\Delta)}  = \Cov(\tfrac{1}{n}\sum_{j=1}^n \rho_j,\tfrac{1}{n}\sum_{i=1}^{n} \rho_i \Delta_i)$. 
\end{theorem}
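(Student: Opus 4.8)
The plan is to split the claim into two independent pieces: the \emph{exact} identity $\E[\xiwer] = \E[\xwis]$, which follows from a clean conditional-expectation argument, and the leading-order bias formula, which follows from a delta-method (second-order Taylor) expansion of the WIS-Optimal ratio estimator. First I would prove the identity by conditioning on the buffer $B$. By definition of importance resampling, each of the $k$ indices $i_j$ is drawn from $\{1,\dots,n\}$ with probability $\rho_i / \sum_{l=1}^n \rho_l$, so for a single draw $\E[\Delta_{i_j}\mid B] = \sum_{i=1}^n \tfrac{\rho_i}{\sum_l \rho_l}\Delta_i = \xwis$, whence $\E[\xiwer \mid B] = \tfrac1k \sum_{j=1}^k \E[\Delta_{i_j}\mid B] = \xwis$. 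Applying the tower rule removes the conditioning and gives $\E[\xiwer] = \E[\xwis]$ exactly; note this step uses nothing about the buffer distribution and holds for every $k$.

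Next I would analyze $\xwis$ itself by writing it as a ratio $\xwis = A/R$ with numerator $A \defeq \tfrac1n\sum_{i}\rho_i\Delta_i$ and denominator $R \defeq \tfrac1n\sum_j \rho_j$. The key preliminary computation is that under i.i.d.\ sampling from $p$, $\E[\rho] = \sum_{s,a} d_\bpolicy(s)\bpolicy(a|s)\tfrac{\tpolicy(a|s)}{\bpolicy(a|s)} = 1$ and, analogously, $\E[\rho\Delta] = \E_\tpolicy[\Delta]$; hence $\E[R]=1$ and $\E[A]=\E_\tpolicy[\Delta]$. The bias is therefore precisely $\E[A/R]-\E[A]$, the classical bias of a ratio estimator whose denominator has mean one.

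I would then apply the delta method: expand $g(A,R)=A/R$ to second order about $(\E[A],\E[R]) = (\E_\tpolicy[\Delta],1)$. The first-order terms vanish in expectation, and since $g_{AA}=0$, $g_{AR}=-1/R^2$, and $g_{RR}=2A/R^3$, evaluating the second derivatives at the mean (where $R=1$) leaves $\E[A/R] \approx \E_\tpolicy[\Delta] + \E_\tpolicy[\Delta]\,\Var(R) - \Cov(A,R)$. Subtracting $\E[A]=\E_\tpolicy[\Delta]$ gives $\mathrm{Bias}(\xiwer) \approx \E_\tpolicy[\Delta]\,\Var(R) - \Cov(A,R)$. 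Because the samples are i.i.d., $\Var(R)=\tfrac1n\Var(\rho)$ and $\Cov(A,R)=\tfrac1n\Cov(\rho,\rho\Delta)$, so pulling out the common $\tfrac1n$ and rewriting the per-sample covariance as a correlation times standard deviations yields exactly $\tfrac1n(\E_\tpolicy[\Delta]\,\sigma_\rho^2 - \sigma_{\rho,\Delta}\,\sigma_\rho\,\sigma_\Delta)$ and exhibits the $O(1/n)$ decay of the bias.

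The main obstacle is making the delta-method step rigorous rather than merely formal. The Taylor expansion produces a remainder, so I would need to argue the neglected terms are genuinely higher order---namely $O(1/n^2)$---which requires control of the moments of $R$: in particular that $R$ concentrates around $1$ and that $\E[A/R]$ is well-defined (e.g.\ via bounded ratios or a lower bound on $R$ on the relevant event). This is where most of the technical care goes; the two expectation identities $\E[R]=1$, $\E[A]=\E_\tpolicy[\Delta]$ and the conditional-expectation argument are routine by comparison.
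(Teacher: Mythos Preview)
Your argument for the identity $\E[\xiwer]=\E[\xwis]$ is exactly the paper's: condition on $B$, observe that a single resampled index gives $\E[\Delta_{i_j}\mid B]=\sum_{i}\tfrac{\rho_i}{\sum_l\rho_l}\Delta_i=\xwis$, average over the $k$ draws, and apply the tower rule. There is no difference in approach here.

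For the bias formula, the paper does not actually derive it: it simply notes that $\mathrm{Bias}(\xiwer)=\mathrm{Bias}(\xwis)$ and cites the known characterization of the WIS bias from Owen's Monte Carlo text. Your delta-method sketch (write $\xwis=A/R$ with $\E[R]=1$, $\E[A]=\E_\tpolicy[\Delta]$, expand to second order, get $\E_\tpolicy[\Delta]\Var(R)-\Cov(A,R)$) is precisely the standard derivation behind that citation, so you are supplying what the paper outsources. One small bookkeeping point: after you pull out the factor $1/n$ from $\Var(R)$ and $\Cov(A,R)$ you are left with \emph{per-sample} moments $\Var(\rho)$ and $\Cov(\rho,\rho\Delta)$, whereas the theorem statement defines $\sigma_\rho^2,\sigma_\Delta^2,\sigma_{(\rho,\Delta)}$ as moments of the \emph{sample averages} $R$ and $A$ (which already carry a $1/n$). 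The $\propto$ in the statement absorbs this, but be careful not to claim an exact match of constants; the delta-method result is $\mathrm{Bias}\approx \E_\tpolicy[\Delta]\sigma_\rho^2-\sigma_{(\rho,\Delta)}$ in the paper's notation, which is $O(1/n)$ because those moments are. Your identification of the remainder control as the only nontrivial step is accurate and is the standard caveat for this expansion.
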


%
% \vspace{-0.2cm}
Theorem \ref{thm:bias_IR} is the only result which follows a different set of assumptions, primarily due to using the bias characterization of $\xwis$ found in \cite{mcbook}. The bias of IR will be small for reasonably large $n$, both because it is proportional to $1/n$ and because larger $n$ will result in lower variance of the average ratios and average update for the buffer in Equation \eqref{eq_bias}. In particular, as $n$ grows, these variances decay proportionally to $n$. Nonetheless, for smaller buffers, such bias could have an impact. We can, however, easily mitigate this bias with a bias-correction term, as shown in the next corollary and proven in Appendix \ref{app_bcir_unbiased}. 
%From Theorem \ref{thm:bias_IR}, we have that the IR estimator has the same expected value as the weighted-importance estimate, implying that it is biased (the bias decreases as $O(\frac{1}{N})$) \cite{mcbook}. 
%
\begin{corollary}\label{cor:bias_BCIR}
  BC-IR is unbiased:  $\E [\xbciwer] = \E_\tpolicy[\Delta]$.
\end{corollary}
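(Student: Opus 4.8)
The plan is to condition on the buffer and exploit a cancellation between the bias-correction factor $\bar\rho$ and the normalizing constant of the resampling distribution. First I would fix the buffer $B$ (equivalently, condition on the $n$ transitions, and hence on all $\rho_i$ and $\Delta_i$). Given $B$, each resampled index $i_j$ is drawn with probability $\Prob(i_j = i \mid B) = \rho_i / \sum_{\ell=1}^n \rho_\ell$, and the $k$ draws are i.i.d., so by linearity the conditional expectation of the BC-IR estimator is
\begin{equation*}
\E[\xbciwer \mid B] = \bar\rho\, \E[\Delta_{i_1} \mid B] = \bar\rho \sum_{i=1}^n \frac{\rho_i}{\sum_{\ell=1}^n \rho_\ell}\,\Delta_i.
\end{equation*}

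The crux is the next step. Since $\bar\rho = \tfrac{1}{n}\sum_{\ell=1}^n \rho_\ell$, the factor $\bar\rho$ exactly cancels the denominator $\sum_{\ell}\rho_\ell$, leaving
\begin{equation*}
\E[\xbciwer \mid B] = \frac{1}{n} \sum_{i=1}^n \rho_i \Delta_i.
\end{equation*}
This is precisely the cancellation that distinguishes BC-IR from plain IR: in $\xiwer$ the random normalization $\sum_{\ell}\rho_\ell$ survives and produces the $O(1/n)$ bias of Equation~\eqref{eq_bias}, whereas premultiplying by $\bar\rho$ removes it exactly, for every realization of $B$.

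Then I would take the outer expectation over the buffer and push it inside the sum. Because the transitions are identically distributed (each marginally $X_i \sim p$), every term satisfies $\E[\rho_i \Delta_i] = \E[\rho\Delta]$, giving $\E[\xbciwer] = \E[\rho\Delta]$; I emphasize that this averaging step needs only identical marginals, not independence, so it goes through under the Markov sampling of Assumption~\ref{assum_id} as well as under an i.i.d. buffer. Finally I would invoke the standard importance-sampling identity already established in the Background: writing $\E_\bpolicy[\rho\Delta] = \sum_{s,a,s'} d_\bpolicy(s)\,\bpolicy(a\mid s)\,\Pfcn(s'\mid s,a)\,\tfrac{\tpolicy(a\mid s)}{\bpolicy(a\mid s)}\,\Delta$ and cancelling $\bpolicy(a\mid s)$ shows this equals $\E_\tpolicy[\Delta]$, which completes the proof.

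I do not expect a genuine obstacle here, since the result follows from one exact cancellation followed by two routine identities. The only points requiring care are to phrase the averaging over the buffer using identical marginals (stationarity) rather than independence, so the statement is valid in the Markov setting, and to note that the conditional cancellation is well defined because $\sum_{\ell}\rho_\ell > 0$ almost surely (the target policy places positive mass on some action the behavior visits).
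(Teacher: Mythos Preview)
Your proposal is correct and follows essentially the same route as the paper: condition on $B$, compute the resampling expectation to get $\bar\rho\sum_i \tfrac{\rho_i}{\sum_\ell \rho_\ell}\Delta_i$, cancel $\bar\rho$ against the normalizer to obtain $\tfrac{1}{n}\sum_i \rho_i\Delta_i$, then take the outer expectation and apply the standard IS identity $\E_\bpolicy[\rho\Delta]=\E_\tpolicy[\Delta]$ using identical marginals. Your added remarks on stationarity versus independence and on $\sum_\ell \rho_\ell>0$ are welcome points of rigor that the paper's proof leaves implicit.
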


\subsection{Consistency of IR}

% Consistency of IR in terms of an increasing buffer, with $n \rightarrow \infty$, is a relatively straightforward extension on results for SIR, with or without bias correction (see Theorem \ref{thm:convergence_dist} in Appendix \ref{sec:theory_appendix}).

Consistency of IR in terms of an increasing buffer, with $n \rightarrow \infty$, is a relatively straightforward extension of prior results for SIR, with or without the bias correction, and from the derived bias of both estimators (see Theorem \ref{lem:convergence_dist} in Appendix \ref{app_consistency_bign}). More interesting, and reflective of practice, is consistency \emph{with a fixed length buffer} and increasing interactions with the environment, $t \rightarrow \infty$. IR, without bias correction, is asymptotically biased in this case; in fact, its asymptotic bias is the one characterized above for a fixed length buffer in Theorem \ref{thm:bias_IR}. 
% MARTHAC: We've mostly already said enough about this, I think
%This asymptotic bias, though, is proportional to $1/n$, which is negligible for typical buffer sizes. 
BC-IR, on the other hand, is consistent, even with a sliding window, as we show in the following theorem. 
\begin{theorem} \label{thm:sliding_window}
  %       Let $B_i = \{X_{i-n+1}, ..., X_{i}\}$ be the buffer of the most recent $n$ transitions sampled by time $i$, as specified in Assumption \ref{assum_id}.
  % Let $\xbciwerb{i}$ be the bias-corrected IR estimator, with $k$ samples from buffer $B_i$. 
  % Define the sliding-window estimator $\xwindow{t} \defeq \frac{1}{t} \sum_{i=1}^t \xbciwerb{i}$. 
  %       Assume there exists a $c > 0$ such that $\Var(\xbciwerb{i}) \le c \,\, \forall i$. 
  %       Then, as $t \rightarrow \infty$, $\xwindow{t}$ converges in probability to $\E_\pi[\Delta]$.
  Let $B_t = \{X_{t+1}, ..., X_{t+n}\}$ be the buffer of the most recent $n$ transitions sampled according to Assumption \ref{assum_id}.
Define the sliding-window estimator $\xwindow{t} \defeq \frac{1}{T} \sum_{t=1}^T \xbciwerb{t}$. 
Then, if $\E_\pi [|\Delta| ] < \infty$, then $\xwindow{T}$ converges to $\E_\pi [\Delta ]$ almost surely as $T \rightarrow \infty$.
\end{theorem}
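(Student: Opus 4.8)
The plan is to split the sliding-window average into a term driven by the Markov chain of buffers and a mean-zero resampling-noise term, and to control them with an ergodic theorem and a martingale strong law, respectively.

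First I would compute the conditional expectation of a single BC-IR update given its buffer. Because the mini-batch indices $i_j$ are each drawn with probability $\rho_i/\sum_{l=1}^n \rho_l$ and the premultiplier is $\bar\rho = \frac1n\sum_{l=1}^n \rho_l$, the resampling randomness averages out and the normalizing sum cancels against $\bar\rho$ (which is exactly the purpose of the bias correction), giving $g(B_t) \defeq \E[\xbciwerb{t}\mid B_t] = \frac1n\sum_{i\in B_t}\rho_i\Delta_i$. This yields the decomposition
\[
\xwindow{T} = \underbrace{\frac1T\sum_{t=1}^T g(B_t)}_{A_T} \;+\; \underbrace{\frac1T\sum_{t=1}^T\big(\xbciwerb{t}-g(B_t)\big)}_{Z_T},
\]
and it suffices to show $A_T \to \E_\pi[\Delta]$ and $Z_T \to 0$ almost surely.

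For $A_T$ I would appeal to the strong law for Markov chains. Each $g(B_t)$ is a function of the $n$ consecutive transitions in the window $B_t$ generated by the irreducible finite MDP of Assumption~\ref{assum_id}; since $\States$ and $\Actions$ are finite and $\theta$ is fixed, both $\rho$ and $\Delta$ are bounded (so in particular the stated condition $\E_\pi[|\Delta|]<\infty$ holds automatically and $g$ is integrable). Lifting the transition chain to its $n$-step ``snapshot'' chain on windows---again a finite irreducible chain on its recurrent class, with stationary coordinate marginal $p(x)=d_\mu(s)\mu(a|s)\Pfcn(s'|s,a)$---the Markov-chain strong law gives $A_T \to \E_\infty[g(B)]$ a.s.\ regardless of the initial distribution. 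Linearity of $g$ together with stationarity gives $\E_\infty[g(B)] = \E_p[\rho\Delta]$, and the same change of measure used in Section~2 identifies $\E_p[\rho\Delta] = \E_q[\Delta] = \E_\pi[\Delta]$, the desired limit.

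For $Z_T$ I would use a martingale strong law. Conditioned on the entire transition sequence (equivalently, on the whole sequence of buffers), the per-step resampling draws at distinct times are independent, so $Z_t \defeq \xbciwerb{t}-g(B_t)$ is a martingale-difference sequence with respect to the filtration generated by all transitions together with the resampling up to step $t$, and it is bounded by $\rho_{\max}\Delta_{\max}$. Hence $\sum_t \Var(Z_t)/t^2 < \infty$, and Kolmogorov's criterion for bounded martingale differences gives $Z_T \to 0$ a.s. Combining the two limits yields $\xwindow{T}\to\E_\pi[\Delta]$ almost surely. The main obstacle is precisely the non-i.i.d.\ structure: the sliding window forces consecutive buffers $B_t$ and $B_{t+1}$ to share $n-1$ transitions, so no plain i.i.d.\ strong law applies to $A_T$; the resolution is to invoke the ergodic theorem for functions of a finite window of the irreducible finite chain, and separately to verify that $Z_t$ is a genuine martingale difference (not merely uncorrelated) so that the martingale law of large numbers is legitimate.
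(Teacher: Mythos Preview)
Your proof is correct and takes a genuinely different route from the paper's. The paper observes that the pair process $\{(\xbciwerb{t},B_t)\}_{t\ge 1}$ is itself a finite irreducible Markov chain---$B_t$ determines the law of the next transition and of the fresh resampling draw---and applies a single ergodic theorem to that joint chain with the projection $f(x,B)=x$; this reduces everything to the identity $\E[\xbciwerb{t}]=\E_\pi[\Delta]$ under the stationary buffer law, which is exactly your computation $\E_\infty[g(B)]=\E_p[\rho\Delta]=\E_\pi[\Delta]$ (their Lemma in the appendix). Your decomposition $\xwindow{T}=A_T+Z_T$ instead separates the two sources of randomness and handles them with two different limit theorems: the Markov-chain strong law on the window chain $\{B_t\}$ for the buffer average $A_T$, and a martingale strong law (Kolmogorov's criterion with bounded differences) for the centered resampling noise $Z_T$. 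Both arguments hinge on the same conditional-mean identity $g(B_t)=\E[\xbciwerb{t}\mid B_t]=\tfrac{1}{n}\sum_i\rho_i\Delta_i$. The paper's route is shorter once one accepts the joint-chain structure, but that step is asserted rather than verified there; your approach is more explicit about the role of the resampling randomness and avoids having to argue that the augmented chain is Markov and irreducible, at the modest cost of an extra martingale-difference argument. Your choice of filtration---including the entire transition trajectory so that each $B_t$ is measurable before the step-$t$ resampling---is the right device to make $Z_t$ a genuine martingale difference.
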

\subsection{Variance of Updates}\label{sec:variance}

It might seem that resampling avoids high-variance in updates, because it does not reweight with large magnitude IS ratios. 
The notion of \emph{effective sample size} from statistics, however, provides some intuition about why large magnitude IS ratios can also negatively affect IR, not just IS. 
%For data with several high magnitude ratios, and many small ratios, the IS estimator will likely suffer from high-variance updates. IR, however, may not perform well in this setting either: it will prevent high magnitude updates, but will be sampling from an effectively smaller dataset. 
Effective sample size is between 1 and $n$, with one estimator $ \left(\sum_{i=1}^{n} \rho_i \right)^2/\sum_{i=1}^{n} \rho_i^2$ \citep{kong1994sequential, martino2017effective}. When the effective sample size is low, this indicates that most of the probability is concentrated on a few samples.
For high magnitude ratios, %this effective sample size is smaller because 
IR will repeatedly sample the same transitions, and potentially never sample some of the transitions with small IS ratios. 

Fortunately, we find that, despite this dependence on effective sample size, IR can significantly reduce variance over IS. 
In this section, we characterize the variance of the BC-IR estimator. We choose this variant of IR, because it is unbiased and so characterizing its variance is a more fair comparison to IS.
% and characterizes the mean-squared error of this estimator. 
% The variance of IR with respect to WIS is rather limited, only giving us insight that IR will have variance no lower than batch WIS. Instead we can look back to a fixed set of transitions to characterize the variance of the BC-IR and IS estimators, and look more closely at the effective variance reduction under these conditions.
% MARTHAC: At this point, it should be really clear that we want to estimate, and what XBC is. We haven't introcued XIS formally, so here is a fine place
%As before we are interested in estimating the true gradient  $\mu = \frac{1}{n} \sum_{j=1}^n\rho_{j}\Delta_{j}$. We use the minibatch estimator for BC-IR defined in section \ref{sec:resampling_offpolicy} and the 
We define the mini-batch IS estimator $\xis \defeq \frac{1}{k}\sum_{j=1}^k\rho_{\unisample}\Delta_{\unisample}$, where indices $\unisample$ are sampled uniformly from $\{1, \ldots, \bsize \}$. This contrasts the indices $i_1, \ldots, i_k$ for $\xbciwer$ that are sampled proportionally to $\rho_i$. 

We begin by characterizing the variance, under a fixed dataset $B$. 
For convenience, let $\muB{B} = \E_\pi[\Delta | B] $. 
We characterize the sum of the variances of each component in the update estimator, which equivalently corresponds to normed deviation of the update from its mean,
\begin{align*}
\V(\Delta \ | \ B) \defeq \tr\Cov(\Delta \ | \ B) = {\textstyle\sum_{m=1}^d \Var(\Delta_m \ |\  B)}
= \E[\| \Delta - \muB{B} \|_2^2  \ |\  B ]
%&= \E[\| \Delta - \E_\pi[\Delta| B] | B \|_2^2] = \E[\| \Delta - \muB{B} \|_2^2]
\end{align*}
%
% \vspace{-0.05cm}
for an unbiased stochastic update ${\small \Delta \in \RR^d}$. 
We show two theorems that BC-IR has lower variance than IS, with two different conditions on the norm of the update. We first start with more general conditions, and then provide a theorem for conditions that are likely only true in early learning.
\begin{theorem} \label{thm:var_reduc}
	Assume that, for a given buffer $B$,  $ \| \Delta_j \|_2^2 > \frac{c}{\rho_j}$ for samples where $\rho_j \ge \bar{\rho}$, and that $ \| \Delta_j \|_2^2 < \frac{c}{\rho_j}$ for samples where $\rho_j < \bar{\rho}$, for some $c > 0$. Then the BC-IR estimator has lower variance than the IS estimator: $\V(\xbciwer \ | \ B) < \V(\xis \ | \ B)$.
\end{theorem}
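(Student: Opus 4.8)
The plan is to condition throughout on the fixed buffer $B$, so that the mini-batch indices become i.i.d. draws and both estimators are scaled sample means. First I would record that, given $B$, the summands $\bar{\rho}\Delta_{i_1}$ (for $\xbciwer$) and $\rho_{z_1}\Delta_{z_1}$ (for $\xis$) are i.i.d. across the mini-batch, so that $\V(\xbciwer \mid B) = \tfrac{1}{k}\,\V(\bar{\rho}\Delta_{i_1} \mid B)$ and $\V(\xis \mid B) = \tfrac{1}{k}\,\V(\rho_{z_1}\Delta_{z_1} \mid B)$, using that trace-covariances of independent terms add. The common factor $1/k$ then drops out, and it suffices to compare the single-draw variances.

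Next I would verify that the two single draws share the same conditional mean. For the resampled index, $\E[\bar{\rho}\Delta_{i_1} \mid B] = \bar{\rho}\sum_{i=1}^{n} \tfrac{\rho_i}{\sum_l \rho_l}\Delta_i = \tfrac{1}{n}\sum_i \rho_i \Delta_i$, using $\sum_l \rho_l = n\bar{\rho}$; for the uniform index, $\E[\rho_{z_1}\Delta_{z_1} \mid B] = \tfrac{1}{n}\sum_i \rho_i \Delta_i$ as well, so both equal the common conditional mean $\muB{B}$. Since $\V(Y \mid B) = \E[\|Y\|_2^2 \mid B] - \|\E[Y \mid B]\|_2^2$ and the squared-mean terms coincide, the comparison collapses to one of second moments. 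A direct computation gives $\E[\|\bar{\rho}\Delta_{i_1}\|_2^2 \mid B] = \tfrac{\bar{\rho}}{n}\sum_i \rho_i \|\Delta_i\|_2^2$ and $\E[\|\rho_{z_1}\Delta_{z_1}\|_2^2 \mid B] = \tfrac{1}{n}\sum_i \rho_i^2 \|\Delta_i\|_2^2$, so the claim $\V(\xbciwer \mid B) < \V(\xis \mid B)$ is equivalent (after multiplying through by $n$) to
\[
\sum_{i=1}^{n} \rho_i \|\Delta_i\|_2^2 (\rho_i - \bar{\rho}) > 0 .
\]

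The final step is where the hypotheses enter, and it is the crux of the argument. I would rewrite the target as $\sum_i (\rho_i \|\Delta_i\|_2^2 - c)(\rho_i - \bar{\rho})$, which is legitimate because the extra cross term $c\sum_i (\rho_i - \bar{\rho})$ vanishes identically (again $\sum_i \rho_i = n\bar{\rho}$). The assumption is precisely engineered so that the two factors always share a sign: when $\rho_i \ge \bar{\rho}$, the hypothesis $\|\Delta_i\|_2^2 > c/\rho_i$ gives $\rho_i \|\Delta_i\|_2^2 - c > 0$ while $\rho_i - \bar{\rho} \ge 0$; when $\rho_i < \bar{\rho}$, the reverse hypothesis makes both factors negative. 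Hence every summand is nonnegative and is strictly positive for any $i$ with $\rho_i \ne \bar{\rho}$, so the full sum is strictly positive (unless all ratios are equal, in which case the two variances trivially agree and the hypotheses are vacuous).

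The only genuinely delicate point is the sign-pairing bookkeeping in this last step, namely checking that the two inequalities in the hypothesis line up exactly with the sign of $\rho_i - \bar{\rho}$, together with the observation that subtracting the constant $c$ inside the product is free. Everything else, the $1/k$ reduction, the matching of conditional means, and the two second-moment computations, is routine conditional-expectation algebra. I would also remark that this manipulation explains the shape of the statement: the conditions partition the buffer into high-ratio and low-ratio transitions and demand that the update magnitude be correspondingly large or small, which is exactly the regime in which concentrating draws on high-ratio transitions, as IR does, reduces variance relative to reweighting.
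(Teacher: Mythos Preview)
Your proof is correct and follows essentially the same route as the paper: both reduce, via the explicit variance formulas, to showing $\sum_i \rho_i\|\Delta_i\|_2^2(\rho_i-\bar\rho)>0$, and both exploit the identity $\sum_i(\rho_i-\bar\rho)=0$ to finish. The only cosmetic difference is that the paper bounds $\|\Delta_i\|_2^2$ by $c/\rho_i$ term-by-term after splitting on the sign of $\rho_i-\bar\rho$, whereas you subtract the constant $c$ inside the sum and argue that $(\rho_i\|\Delta_i\|_2^2 - c)$ and $(\rho_i-\bar\rho)$ share a sign; these are equivalent manipulations.
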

The conditions in Theorem \ref{thm:var_reduc} preclude having update norms for samples with small $\rho$ be quite large---larger than a number $\propto \frac{1}{\rho}$---and a small norm for samples with large $\rho$. These conditions can be relaxed to a statement on average, where the cumulative weighted magnitude of the update norm for samples with $\rho$ below the median needs to be smaller than for samples with $\rho$ above the mean (see the proof in Appendix \ref{app_bc_var}).

We next consider a setting where the magnitude of the update is independent of the given state and action. We expect this condition to hold in early learning, where the weights are randomly initialized, and thus randomly incorrect across the state-action space. As learning progresses, and value estimates become more accurate in some states, it is unlikely for this condition to hold.
%
% \wes{We could weaken the assumptions of the next theorem a little. Technically, all we need is that the expectation of the product is the product of expectations, but not necessarily independence. }

\begin{theorem} \label{thm:var_indep}
%  Assume a buffer $B$ of n transitions sampled i.i.d, according to $d_\bpolicy(s) \bpolicy(a | s) \Pfcn(s' | s, a)$. Let $\xis = \frac{1}{k}\sum_{j=1}^k\rho_{\unisample}\Delta_{\unisample}$ and $\xbciwer \defeq \tfrac{\bar{\rho}}{k} \sum_{j=1}^k \Delta_{\irsample}$ be the importance sampling and batch corrected importance resampling estimators respectively, where $\unisample$ and $\irsample$ are points sampled from the buffer $B$ according to the uniform distribution and resampling distribution respectively. 
  Assume $\rho$ and the magnitude of the update $\| \Delta \|_2^2$ are independent
  \begin{equation*}
  \E[\rho_j \|\Delta_j \|_2^2  \ | \  B] = \E[\rho_j \ | \ B]  \ \E[ \|\Delta_j \|_2^2 \ | \ B]
  \end{equation*} 
  % MARTHAC: Its not necessarily a gradient
  %the BC-IR estimator of the gradient 
  Then the BC-IR estimator will have equal or lower variance than the IS estimator: $\V(\xbciwer \ | \ B) \le \V(\xis \ | \ B)$.
\end{theorem}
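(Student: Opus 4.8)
The plan is to reduce the comparison of the two full mini-batch estimators to a comparison of per-sample second moments, and then to evaluate those moments explicitly under each sampling distribution. Both $\xis$ and $\xbciwer$ are averages of $k$ conditionally i.i.d.\ draws from the fixed buffer $B$, so $\V(\xis \mid B) = \tfrac{1}{k}\V(\rho_{\unisample}\Delta_{\unisample}\mid B)$ and $\V(\xbciwer \mid B) = \tfrac{1}{k}\V(\bar{\rho}\Delta_{\irsample}\mid B)$; the common factor $\tfrac{1}{k}$ drops out of the desired inequality, so it suffices to treat the single-draw case. Moreover, both single-draw estimators have the same conditional mean $\muB{B}$ (this is the conditional unbiasedness already behind Theorem \ref{thm:bias_IR} and Corollary \ref{cor:bias_BCIR}, specialized to a fixed buffer), so using $\V(X\mid B) = \E[\|X\|_2^2 \mid B] - \|\muB{B}\|_2^2$ the whole question collapses to comparing the two conditional second moments $\E[\|\rho_{\unisample}\Delta_{\unisample}\|_2^2\mid B]$ and $\E[\|\bar{\rho}\Delta_{\irsample}\|_2^2\mid B]$.

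Next I would write these two second moments out explicitly. For IS, $\unisample$ is uniform on $\{1,\dots,n\}$, giving $\E[\|\rho_{\unisample}\Delta_{\unisample}\|_2^2\mid B] = \tfrac{1}{n}\sum_{i=1}^n \rho_i^2 \|\Delta_i\|_2^2 = \E[\rho^2\|\Delta\|_2^2\mid B]$. For BC-IR, $\irsample$ is drawn with probability $\rho_i/\sum_j\rho_j = \rho_i/(n\bar{\rho})$ and the update is rescaled by $\bar{\rho}$, so $\E[\|\bar{\rho}\Delta_{\irsample}\|_2^2\mid B] = \bar{\rho}^2\sum_{i=1}^n \tfrac{\rho_i}{n\bar{\rho}}\|\Delta_i\|_2^2 = \tfrac{\bar{\rho}}{n}\sum_{i=1}^n \rho_i \|\Delta_i\|_2^2 = \bar{\rho}\,\E[\rho\|\Delta\|_2^2\mid B]$.

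The final step invokes the independence hypothesis $\E[\rho\|\Delta\|_2^2\mid B] = \bar{\rho}\,\E[\|\Delta\|_2^2\mid B]$ to rewrite the BC-IR moment as $\bar{\rho}^2\,\E[\|\Delta\|_2^2\mid B]$, and then I would exhibit the difference as a manifestly nonnegative quantity. Expanding $\E[(\rho-\bar{\rho})^2\|\Delta\|_2^2\mid B] = \E[\rho^2\|\Delta\|_2^2\mid B] - 2\bar{\rho}\,\E[\rho\|\Delta\|_2^2\mid B] + \bar{\rho}^2\,\E[\|\Delta\|_2^2\mid B]$ and substituting the hypothesis into the middle term collapses the right-hand side to exactly $\E[\rho^2\|\Delta\|_2^2\mid B] - \bar{\rho}^2\,\E[\|\Delta\|_2^2\mid B]$, i.e.\ precisely the IS-minus-BC-IR second-moment gap. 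Since $(\rho-\bar{\rho})^2 \ge 0$ and $\|\Delta\|_2^2 \ge 0$ pointwise over the buffer, this gap is nonnegative, which yields $\V(\xbciwer \mid B) \le \V(\xis \mid B)$.

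The computation is short, so the main obstacle is conceptual bookkeeping rather than hard estimation: one must keep the two distinct sampling distributions straight (uniform with an explicit $\rho$ reweight for IS versus $\rho$-proportional with a $\bar{\rho}$ rescale for BC-IR), and correctly see that the $\bar{\rho}$ rescaling and the $\tfrac{1}{n\bar{\rho}}$ normalizer combine to turn a $\rho^2$-weighting into a $\rho$-weighting. The one point worth flagging is that the theorem's displayed hypothesis is only the \emph{first}-moment factorization of $\rho$ against $\|\Delta\|_2^2$, and this is exactly (and all that is) needed: it cancels the cross term, after which nonnegativity of the squared norm finishes the argument, with no appeal to full independence of the two quantities.
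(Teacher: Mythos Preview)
Your proof is correct and follows the same overall structure as the paper's: reduce to the single-draw case via conditional i.i.d.\ sampling, observe that both single-draw estimators share the conditional mean $\muB{B}$, so the variance comparison collapses to a comparison of the second moments $\E[\rho^2\|\Delta\|_2^2 \mid B]$ versus $\bar{\rho}\,\E[\rho\|\Delta\|_2^2 \mid B]$ (this is exactly the content of the paper's Lemma~\ref{lem:estimator_variance}).

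The final step, however, is genuinely different. The paper applies the independence assumption \emph{twice}: once to rewrite the BC-IR term as $\bar{\rho}^2\,\E[\|\Delta\|_2^2 \mid B]$, and again to factor the IS term as $\E[\rho^2 \mid B]\,\E[\|\Delta\|_2^2 \mid B]$, after which the gap is $\E[\|\Delta\|_2^2 \mid B]\cdot\hat\sigma^2(\rho)$, i.e.\ the sample variance of the ratios times a nonnegative constant. Your route applies the hypothesis only once (to the BC-IR term) and then identifies the gap directly as $\E[(\rho-\bar\rho)^2\|\Delta\|_2^2 \mid B]$, which is manifestly nonnegative. This is cleaner in one concrete sense you already flagged: the theorem's \emph{displayed} hypothesis is only the first-moment factorization $\E[\rho\|\Delta\|_2^2 \mid B] = \bar\rho\,\E[\|\Delta\|_2^2 \mid B]$, and your argument uses exactly and only that, whereas the paper's factoring of $\E[\rho^2\|\Delta\|_2^2 \mid B]$ requires the stronger (verbal) ``independent'' assumption. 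The paper's version does give the more interpretable expression for the gap---proportional to the empirical variance of the $\rho_i$---at the cost of that stronger assumption.
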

%
%
% MARTHAC: Its not erally about the rho itself. Its about why these RVs would be correlated, and they would be correlated through (s,a), I believe.
%We can expect theorem \ref{thm:var_indep} to hold early during training. If we initialize a model with random parameters, it is reasonable to believe that the gradients of each sample would be independent of the $\rho$ associated to them. This would no longer be true later during training where we expect the $\rho$ to have influenced the optimization process and thus the resulting gradients. 

%\subsection{Variance of Updates Across Buffers}

%The variance of an estimator across buffers or datasets reflects the variability in updates we could see, if we had seen different data. 
These results have focused on variance of each estimator, for a fixed buffer, which provided insight into variance of updates when executing the algorithms. We would, however, also like to characterize variability across buffers, especially for smaller buffers. Fortunately, such a characterization is a simple extension on the above results, because variability for a given buffer already demonstrates variability due to different samples. 
%Considering variability across buffers, however, provides insights into variance of our estimators due to \emph{effective sample size}, as explained below. 
%
%First, notice that all the comparative results between BC-IR and IS above extend.
It is easy to check that $\E[\E[\mu_{IR} \ | \ B]] = \E[\mu_{IS} \ | \ B] = \E_\pi[\Delta]$.
The variances can be written using the law of total variance
\begin{align*}
\V(\xbciwer) 
&= \E[\V(\xbciwer \ | \ B) ] + \V(\E[\xbciwer \ | \ B])
= \E[\V(\xbciwer \ | \ B) ] + \V(\muB{B})\\
\V(\xis) 
% &= \E[\V(\xis \ | \ B) ] + \V(\E[\xis \ | \ B]) \\
&= \E[\V(\xis \ | \ B) ] + \V(\muB{B})\\
\implies \V(\xbciwer ) & - \V(\xis) = \E[\V(\xbciwer \ | \ B) - \V(\xis \ | \ B)]
\end{align*}  
with expectation across buffers. Therefore, the analysis of $\V(\xbciwer \ | \ B)$ directly applies.
%The difference between these variances makes the second term cancel, and the analysis on $\V(\xis \ | \ B) - \V(\xbciwer \ | \ B)$ applies to the first term.

\section{Empirical Results}

% CRC: Added a blurb about us testing not in the i.i.d setting to see if intuition and theory holds empirically in the setting we care about.
We investigate the two hypothesized benefits of resampling as compared to reweighting: improved sample efficiency and reduced variance. These benefits are tested in two microworld domains---a Markov chain and the Four Rooms domain---where exhaustive experiments can be conducted. We also provide a demonstration that IR reduces sensitivity over IS and VTrace in a car simulator, TORCs, when learning from images
% \footnote{We do not follow the i.i.d assumption used for the theoretical contributions, and instead empirically evaluate if the theoretical benefits hold in non-i.i.d settings.}
\footnote{Experimental code for every domain except Torcs can be found at \url{https://mkschleg.github.io/Resampling.jl}}.
%explore the variance reduction induced by RMSProp and discuss possible improvements for reweighting based on our observations.
%Finally, we provide a demonstration that IR improves learning speed over IS and VTrace in a car simulator, TORCs, learning from images.

We compare IR and BC-IR
% MATTC WIS-Minibatch doesn't use 1/k in the update as 1/sum\rho_j is the normalizing factor.
against several reweighting strategies, including importance sampling (IS); two online approaches to weighted important sampling, WIS-Minibatch with weighting $\rho_i/\sum_{j=1}^k \rho_j$ and WIS-Buffer with weighting $\rho_i/\tfrac{k}{\bsize}\sum_{j=1}^\bsize \rho_j$; and V-trace\footnote{Retrace, ABQ and TreeBackup also use clipping to reduce variance. But, they are designed for learning action-values and for mitigating variance in eligibility traces. When trace parameter $\lambda = 0$---as we assume here---there are no IS ratios and these methods become equivalent to using Sarsa(0) for learning action-values. 
% MARTHAC: We say this already, so removing
%We do include Sarsa(0) in a few experiments, denoted also as Retrace(0), to cover this set of algorithms.
}, which corresponds to clipping importance weights \citep{espeholt2018impala}.
% We also compare to WIS-TD(0) in a incremental setting as described in appendix \ref{appendix:inc_updates} and section \ref{sec:exp_conv}.
We also compare to WIS-TD(0) \citep{mahmood2015off}, when applicable, which uses an online approximation to WIS, with a stepsize selection strategy (as described in Appendix \ref{appendix:inc_updates}). This algorithm uses only one sample at a time, rather than a mini-batch, and so is only included in Figure \ref{fig:frc}.
Where appropriate, we also include baselines using On-policy sampling; WIS-Optimal which uses the whole buffer to get an update; and Sarsa(0) which learns action-values---which does not require IS ratios---and then produces estimate $V(s) = \sum_a \pi(s,a) Q(s,a)$. WIS-Optimal is included as an optimal baseline, rather than as a competitor, as it estimates the update using the whole buffer on every step.

% CRC: Added some clarity around what the errors are and modified some minor statements.
% Will move to appendix.
In all the experiments, the data is generated off-policy. We compute the absolute value error (AVE) or the  absolute return error (ARE) on every step. For the sensitivity plots we take the average over all the interactions as specified for the environment --- resulting in MAVE and MARE respectively. The error bars represent the standard error over runs, which are featured on every plot --- although not visible in some instances. For the microworlds, the true value function is found using dynamic programming with threshold $10^{-15}$, and we compute AVE over all the states. For TORCs and continuous Four Rooms, the true value function is approximated using rollouts from a random subset of states generated when running the behavior policy $\mu$, and the ARE is computed over this subset. For the Torcs domain, the same subset of states is used for each run due to computational constraints and report the mean squared return error (MSRE). Plots showing sensitivity over number of updates show results for complete experiments with updates evenly spread over all the interactions.
% MARTHAC: This is a confusing statement. Also, in the future, we should average over more than 10 rollouts
% in torcs we only have access to a single rollout, in the four rooms we average over 10 rollouts. 
A tabular representation is used in the microworld experiments, tilecoded features with 64 tilings and 8 tiles is used in continuous Four Rooms, and a convolutional neural network is used for TORCs, with an architecture previously defined for self-driving cars \citep{bojarski2016end}.

\subsection{Investigating Convergence Rate} \label{sec:exp_conv}

% \begin{enumerate}
% \item Tabular domains for really indepth tests w/o confounding factors.
% \item Shows improved sample efficiency, fewer updates needed to come to good performance.
% \item Show in TileCoded and NN representation (w/ RMSProp?) as well.
% \end{enumerate}

We first investigate the convergence rate of IR. We report learning curves in Four Rooms, as well as sensitivity to the learning rate. 
%For IR to truly have convergence rate benefits over other methods, it should learn with a fewer number of updates for a reasonably broad range of learning rates.
%
%
The Four Rooms domain \citep{stolle2002learning} has four rooms in an 11x11 grid world. The four rooms are positioned in a grid pattern with each room having two adjacent rooms. Each adjacent room is separated by a wall with a single connecting hallway.
The target policy takes the down action deterministically. The cumulant for the value function is 1 when the agent hits a wall and 0 otherwise. The continuation function is $\gamma=0.9$, with termination when the agent hits a wall. The resulting value function can be thought of as distance to the bottom wall. 
% TODO: finish description.
The behavior policy is uniform random everywhere except for 25 randomly selected states which take the action down with probability 0.05 with remaining probability split equally amongst the other actions. The choice of behavior and target policy induce high magnitude IS ratios.

\begin{figure*}[ht]
  \centering
  \includegraphics[width=\textwidth]{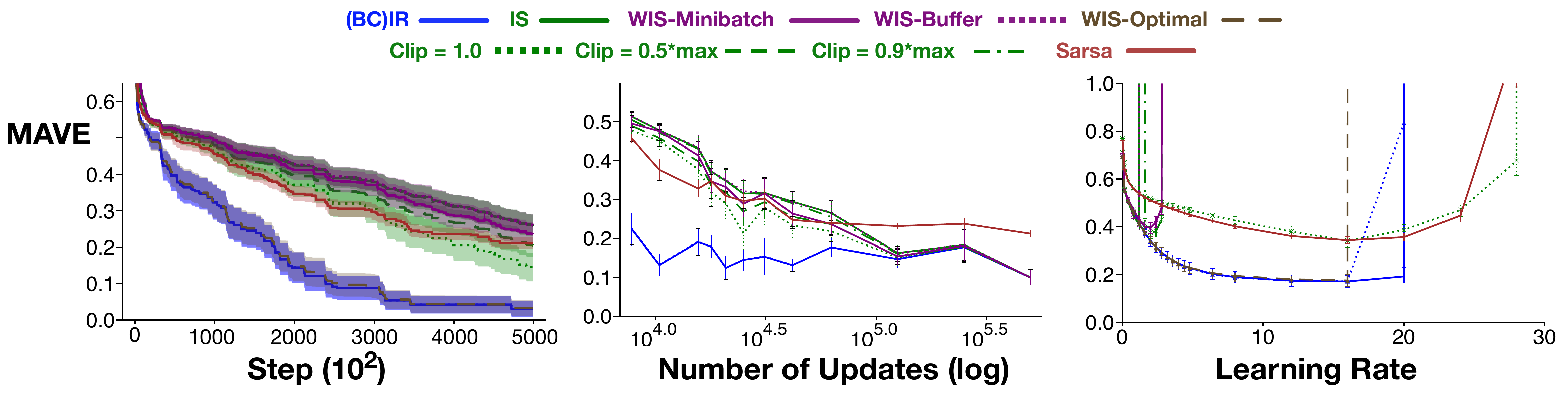}
  \caption{Four Rooms experiments ($n=2500$, $k=16$, 25 runs): {\bf left} Learning curves for each method, with updates every $16$ steps. IR and WIS-Optimal are overlapping. {\bf center} Sensitivity over the number of interactions between updates. {\bf right} Learning rate sensitivity plot. }\label{fig:fourrooms_tabular}
\end{figure*}

As shown in Figure \ref{fig:fourrooms_tabular}, IR has noticeable improvements over the reweighting strategies tested. 
The fact that IR resamples more important transitions from the replay buffer seems to significantly increase the learning speed. Further, IR has a wider range of usable learning rates. The same effect is seen even as we reduce the total number of updates, where the uniform sampling methods perform significantly worse as the interactions between updates increases---suggesting improved sample efficiency. WIS-Buffer performs almost equivalently to IS, because for reasonably size buffers, its normalization factor $\tfrac{1}{\bsize}\sum_{j=1}^\bsize \rho_j \approx 1$ because $\E[\rho] = 1$.
% MARTHAC: We could rescale many of the methods, but really that is independent of method and this statement confuses matters
%We could remove the factor of $n$, expanding the range of effective learning rates, but this would not significantly improve its learning performance. 
WIS-Minibatch and V-trace both reduce the variance significantly, with their bias having only a limited impact on the final performance compared to IS. Even the most aggressive clipping parameter for V-trace---a clipping of 1.0--- outperforms IS. 
The bias may have limited impact because the target policy is deterministic, and so only updates for exactly one action in a state.
%Because the target policy is persistent -- always following the down action -- the bias seems to have limited impact. 
Sarsa---which is the same as Retrace(0)---performs similarly to the reweighting strategies.

The above results highlight the convergence rate improvements from IR, in terms of number of updates, without generalization across values. Conclusions might actually be different with function approximation, when updates for one state can be informative for others. For example, even if in one state the target policy differs significantly from the behavior policy, if they are similar in a related state, generalization could overcome effective sample size issues. 
We therefore further investigate if the above phenomena arise under function approximation with RMSProp learning rate selection.

\begin{figure*}[ht]
\centering
% z\includegraphics[width=0.45\columnwidth]{{four_rooms_cont/train_gap/collide_down_uniform_16_rmsprop.pdf}}
\includegraphics[width=0.8\textwidth]{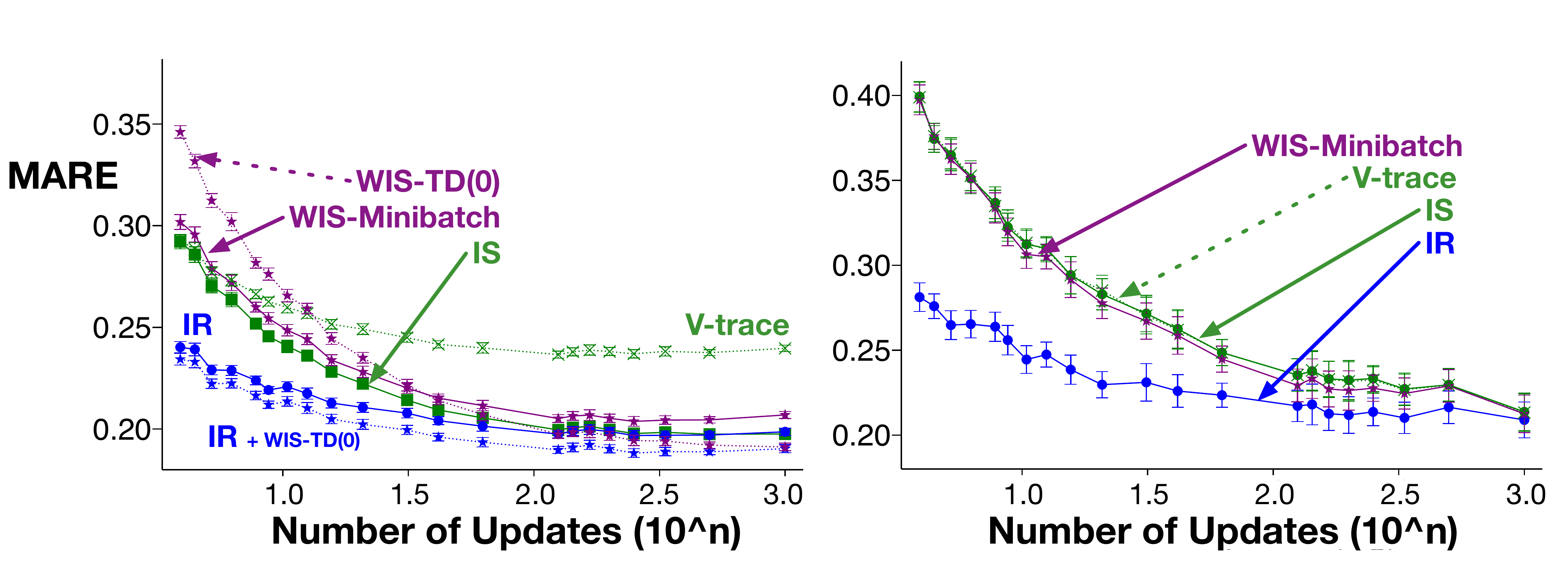}
  \caption{Convergence rates in Continuous Four Rooms averaged over 25 runs with 100000 interactions with the environment. {\bf left} uniform random behavior policy and target policy which takes the down action with probability $0.9$ and probability $0.1 / 3$ for all other actions. Learning used incremental updates (as specified in appendix \ref{appendix:inc_updates}). {\bf right} uniform random behavior and target policy with persistent down action selection learned with mini-batch updates with RMSProp.
  % Label V-trace with clip = 1.0
  % experiments with (BC)IR, IS and V-trace with clipping parameter 1.0, and WISBatch. 
  % MARTHAC: what is eta?
%  $\eta \in [0.0,0.01]$. 
  }\label{fig:frc}
\end{figure*}
We conduct two experiments similar to above, in a continuous state Four Rooms variant. The agent is a circle with radius 0.1, and the state consists of a continuous tuple containing the x and y coordinates of the agent's center point. The agent takes an action in one of the 4 cardinal directions moving $0.5 \pm \mathcal{U}(0.0, 0.1)$  in that directions with random drift in the orthogonal direction sampled from $\mathcal{N}(0.0,0.01)$. 
% MARTHAC: I don't understand this sentence
%The simulation takes 10 intermediary steps per action to more accurately detect collisions. 
The representation is a tile coded feature vector with 64 tilings and 8 tiles. We provide results for both mini-batch updating (as above) and incremental updating (i.e. updating on each transition of a mini-batch incrementally, see appendix \ref{appendix:inc_updates} for details).
For the mini-batch experiment, the target policy deterministically takes the down action. For the incremental experiment, the target policy takes the down action with probability $0.9$ and selects all other action with probability $0.1 / 3$.

We find that generalization can mitigate some of the differences between IR and IS above in some settings, but in others the difference remains just as stark (see Figure \ref{fig:frc} and Appendix \ref{appendix:four_rooms_cont}). If we use the behavior policy from the tabular domain, which skews the behavior in a sparse set of states, the nearby states mitigate this skew. However, if we use a behavior policy that selects all actions uniformly, then again IR obtains noticeable gains over IS and V-trace, for reducing the required number of updates, as shown in Figure \ref{fig:frc}.

We find similar results for the incremental setting Figure \ref{fig:frc} (left), where resampling still outperforms all other methods in terms of convergence rates. Given WIS-TD(0)'s significant degrade in performance as the number of updates decreases, we also compare with using WIS-TD(0) when sampling according to resampling IR+WIS-TD(0). Interestingly, this method outperforms all others  --- albeit only slightly against IR with constant learning rate. This result leads us to believe RMSProp may be a optimizer poor choice for this setting.
% shows that IR needs less training iterations to learn the value functions to comparable error when using RMSProp. 
% Results for more behavior and target policies are in Appendix \ref{appendix:four_rooms_cont}.
Expanded results can be found in Appendix \ref{appendix:four_rooms_cont}.

% To compare with WIS-TD(0) \citep{mahmood2015off}, we perform experiments using incremental updates over the sampled minibatch of transitions (see appendix \ref{appendix:inc_updates} for details). WIS-TD(0) uses a vector $\uvec$ to account for the most recent importance sampling ratios in normalizing these values by using a vector of learning rates $\alphavec = \mathbf{1} \oslash \uvec$. The recency of importance sampling ratios is dependent on a hyper-parameter $\eta = \frac{\mu}{u_0}$, where $u_0 > 0$ is the initial value of $\uvec$ and $\mu \in [0.0,1.0]$ is another hyper-parameter. As we found above, resampling according to the importance weights outperforms all other methods in terms of convergence rates. 

\subsection{Investigating Variance} \label{sec:var_prop}

% We investigate the variance of the updates. To do so, we measure parameter sensitivity of the algorithms and the true variance of an update for a given set of weights. For this experiment, we use a Markov chain, where we can more easily control dissimilarity between $\mu$ and $\pi$, and so control the magnitude of the IS ratios. 
To better investigate the update variance we use a Markov chain, where we can more easily control dissimilarity between $\mu$ and $\pi$, and so control the magnitude of the IS ratios. The Markov chain is composed of 8 non-terminating states and 2 terminating states on the ends of the chain, with a cumulant of 1 on the transition to the right-most terminal state and 0 everywhere else. We consider policies with probabilities [left, right] equal in all states: $\mu = [0.9, 0.1], \pi=[0.1,0.9]$; further policy settings can be found in Appendix \ref{appendix:markov}.

% We considered policies that had the same probabilities over [right, left] in all the states, for three settings of increasing dissimilarity between behavior and target policy: $\mu = [0.5,0.5], \pi=[0.1,0.9]$; $\mu = [0.9,0.1], \pi=[0.1,0.9]$; and $\mu = [0.99,0.01], \pi=[0.01,0.99]$. We investigate these variance properties for fixed buffers.

% \begin{wrapfigure}{r}{0.5\linewidth}
%   \centering
%   \includegraphics[width=\linewidth]{markov/variance/markov_variance.pdf}
%   \caption{Estimation of update variance in the Markov chain experiments, with policies $\mu = [0.99,0.01], \pi=[0.01,0.99]$. The x-axis corresponds to the training iteration, with Variance reported for the weights at that iteration generated by WIS-Optimal. Note that WIS-Minibatch significantly reduces variance, but incurs significant bias. IR and BC-IR, which are overlapping, reduces variance similarly with little bias incurred. 
% }\label{fig:markov_var}
% \end{wrapfigure}

We first measure the variance of the updates for fixed buffers. We compute the variance of the update---from a given weight vector---by simulating the many possible updates that could have occurred. We are interested in the variance of updates both for early learning---when the weight vector is quite incorrect and updates are larger---and later learning. To obtain a sequence of such weight vectors, we use the sequence of weights generated by WIS-Optimal. As shown in Figure \ref{fig:markov_sens}, the variance of IR is lower than IS, particularly in early learning, where the difference is stark. Once the weight vector has largely converged, the variance of IR and IS is comparable and near zero.
\begin{wrapfigure}{r}{0.45\textwidth}
  % \vspace{-2cm}
\centering
  \includegraphics[width=0.45\columnwidth]{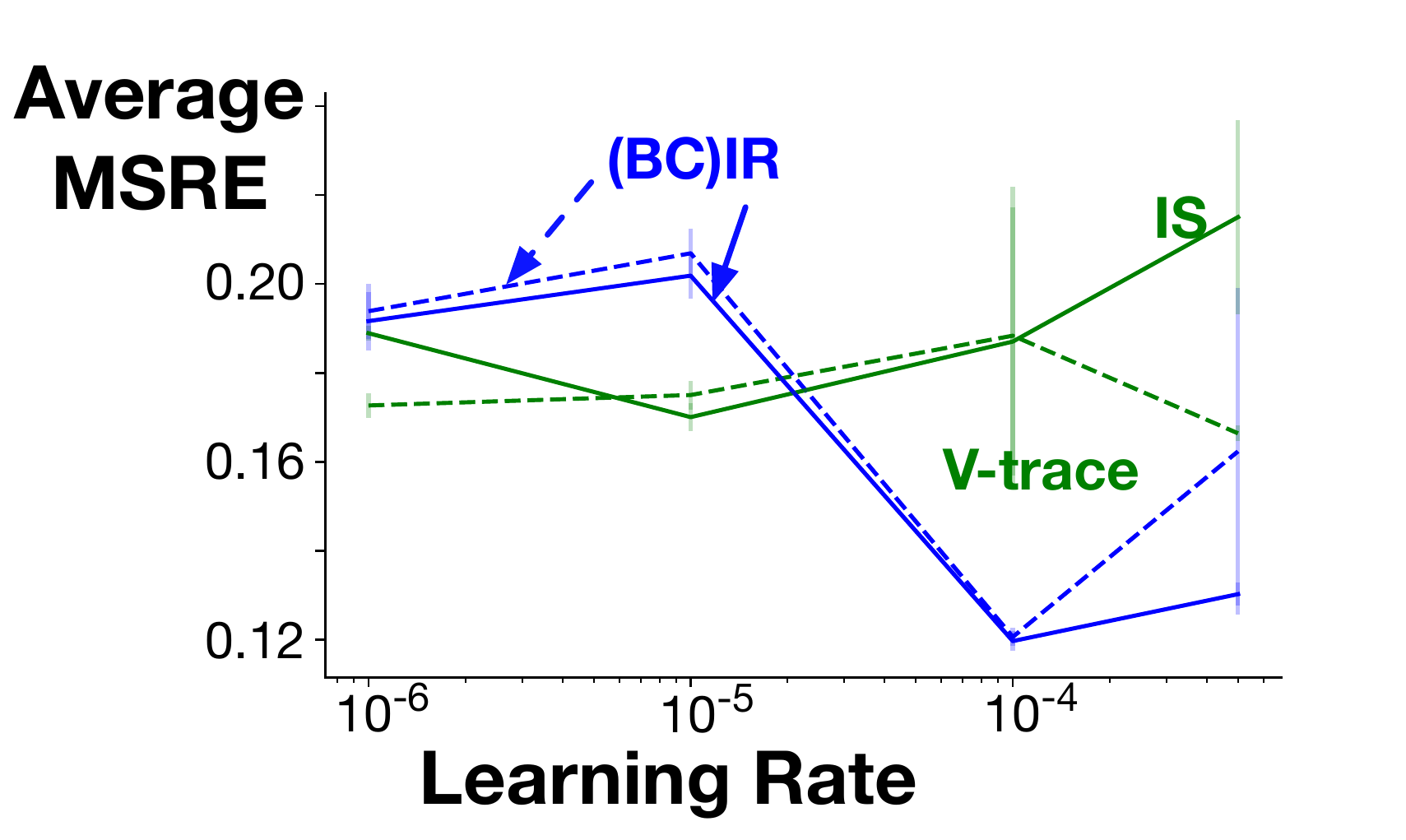}
  \caption{Learning rate sensitivity in TORCs, averaged over 10 runs. V-trace has clipping parameter 1.0. All the methods performed worse with a higher learning rate than shown here, so we restrict to this range.
    \vspace{-1.4cm}
  }\label{fig:torcs}
\end{wrapfigure}

We can also evaluate the update variance by proxy using learning rate sensitivity curves. 
As seen in Figure \ref{fig:markov_sens} (left) and (center), IR has the lowest sensitivity to learning rates, on-par with On-Policy sampling. IS has the highest sensitivity, along with WIS-Buffer and WIS-Minibatch. Various clipping parameters with V-trace are also tested. V-trace does provide some level of variance reduction but incurs more bias as the clipping becomes more aggressive.

% MARTHAC: We already said this
%Results showing the robustness of IR and BC-IR for varying policy settings can be found in the appendix \ref{appendix:markov}. 
% MARTHAC: In fact, maybe we shouldn't even say this?
%As the target and behavior policy become more dissimilar, and the maximum IS ratio increases, IS is significantly affected by the variance issues of the IS ratios, while IR and BC-IR are comparatively unaffected by the change (see appendix \ref{appendix:markov}). In fact, their sensitivity is on par with using on-policy sampling. 
% MARTHAC: Too much about an uninteresting algorithm. The caption says enough about i
%WIS-Minibatch, although minorly effective at reducing variance, suffers from bias proportional to the dissimilarity of the behavior and target policies. 

\begin{figure*}[t]
  \centering
  \includegraphics[width=\textwidth]{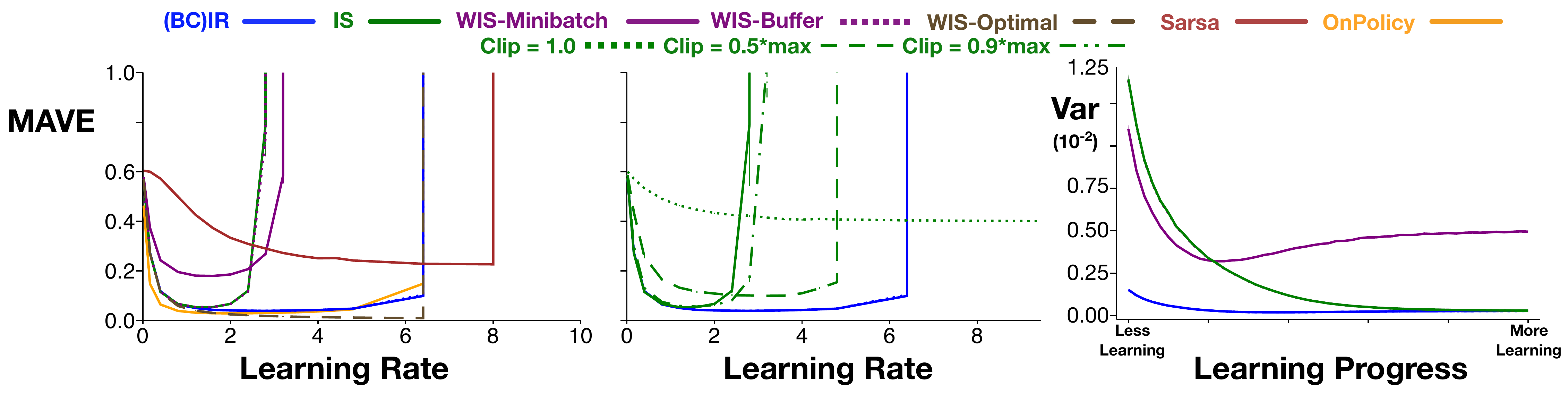}
  % \caption{Learning Rate sensitivity plots in the Random Walk Markov Chain, with buffer size $n = 15000$ and mini-batch size $k=16$. The policies, written as [probability left, probability right] are: {\bf left} $\mu = [0.5,0.5], \pi=[0.1,0.9]$, {\bf center} $\mu = [0.9,0.1], \pi=[0.1,0.9]$, {\bf right} $\mu = [0.99,0.01], \pi=[0.01,0.99]$. Similar experiments using VTrace in the Markov Chain can be found in the appendix.} \label{fig:markov_sens}
  \caption{Learning Rate sensitivity plots in the Random Walk Markov Chain, with buffer size $n = 15000$ and mini-batch size $k=16$. Averaged over 100 runs. The policies, written as [probability left, probability right] are $\mu = [0.9,0.1], \pi=[0.1,0.9]$ {\bf left} learning rate sensitivity plot for all methods but V-trace. {\bf center} learning rate sensitivity for V-trace with various clipping parameters {\bf right} Variance study for IS, IR, and WISBatch. The x-axis corresponds to the training iteration, with variance reported for the weights at that iteration generated by WIS-Optimal. These plots show a correlation between the sensitivity to learning rate and magnitude of variance.
    % Note that WIS-Minibatch reduces variance, but incurs significant bias. IR and BC-IR, which are overlapping, significantly reduces variance with little bias incurred. Further studies with varying difficulty can be found in the appendix showing all reweighting methods are heavily dependent on policy specification.
    \vspace{-0.5cm}
  } \label{fig:markov_sens}
\end{figure*}

\subsection{Demonstration on a Car Simulator}

We use the TORCs racing car simulator to perform scaling experiments with neural networks to compare IR, IS, and V-trace. The simulator produces 64x128 cropped grayscale images. We use an underlying deterministic steering controller that produces steering actions $a_{det} \in [-1,+1]$ and take an action with probability defined by a Gaussian $a \sim \mathcal{N}(a_{det}, 0.1)$.
The target policy is a Gaussian $\mathcal{N}(0.15, 0.0075)$, which corresponds to steering left. Pseudo-termination (i.e., $\gamma = 0$) occurs when the car nears the center of the road, and the cumulant becomes 1. Otherwise, the cumulant is zero and $\gamma = 0.9$. The policy is specified using continuous action distributions and results in IS ratios as high as $\sim 1000$ and highly variant updates for IS.

Again, we can see that IR provides benefits over IS and V-trace, in Figure \ref{fig:torcs}. There is even more generalization from the neural network in this domain, than in Four Rooms where we found generalization did reduce some of the differences between IR and IS. Yet, IR still obtains the best performance, and avoids some of the variance seen in IS for two of the learning rates. Additionally, BC-IR actually performs differently here, having worse performance for the largest learning rate. This suggest IR has an effect in reducing variance.

\section{Conclusion}
\vspace{-0.2cm}
In this paper we introduced a new approach to off-policy learning: Importance Resampling. We showed that IR is consistent, and that the bias is the same as for Weighted Importance Sampling. We also provided an unbiased variant of IR, called Bias-Corrected IR. We empirically showed that IR (a) has lower learning rate sensitivity than IS and V-trace, which is IS with varying clipping thresholds; (b) the variance of updates for IR are much lower in early learning than IS and (c) IR converges faster than IS and other competitors, in terms of the number of updates. These results confirm the theory presented for IR, which states that variance of updates for IR are lower than IS in two settings, one being an early learning setting. Such lower variance also explains why IR can converge faster in terms of number of updates, for a given buffer of data. 

The algorithm and results in this paper suggest new directions for off-policy prediction, particularly for faster convergence. Resampling is promising for scaling to learning many value functions in parallel, because many fewer updates can be made for each value function. A natural next step is a demonstration of IR, in such a parallel prediction system. Resampling from a buffer also opens up questions about how to further focus updates. One such option is using an intermediate sampling policy. Another option is including prioritization based on error, such as was done for control with prioritized sweeping \citep{peng1993efficient} and prioritized replay \citep{schaul2015a}.

\subsubsection*{Acknowledgments}

We would like to thank Huawei for their support, and especially for allowing a portion of this work to be completed during Matthew’s internship in the summer of 2018. We also would like to acknowledge University of Alberta, Alberta Machine Intelligence Institute, IVADO, and NSERC for their continued funding and support, as well as Compute Canada (\url{www.computecanada.ca}) for the computing resources used for this work.

\bibliographystyle{plainnat}
\bibliography{paper}

\newpage
\appendix
%!TEX root = paper.tex

\section{Weighted Importance Sampling}\label{appendix:wis}

\subsection{Mini-Batch Algorithms}

We consider three weighted importance sampling updates as competitors to IR. $n$ is the size of the experience replay buffer, $k$ is the size of a single batch. WIS-Minibatch and WIS-Buffer both follow a similar protocol as IS, in that they uniformly sample a mini-batch from the experience replay buffer and use this to update the value functions. The difference comes in the scaling of the update. The first, WIS-Minibatch, uses the sum of the importance weights $\rho_i$ in the sampled mini-batch, while WIS-Buffer uses the sum of importance weights in the experience replay buffer. WIS-Buffer is also scaled by the size of the buffer and brought to the same effective scale as the other updates with $\frac{1}{k}$. WIS-Optimal follows a different approach and performs the best possible version of WIS where the gradient descent update is calculated from the whole experience replay buffer. We do not provide analysis on the bias or consistency of WIS-Minibatch or WIS-Buffer, but are natural versions of WIS one might try.

\begin{align*}
  \Delta\theta &= \frac{\sum_i^k \rho_i \delta_i \nabla_\theta \Value(s_i;\theta)}{\sum_j^k\rho_j}
                 &\hspace{1cm} \text{WIS-Minibatch}\\
  \Delta\theta &= \frac{n}{k} \frac{\sum_i^k \rho_i \delta_i \nabla_\theta \Value(s_i;\theta)}{\sum_j^n\rho_j}
                 &\hspace{1cm} \text{WIS-Buffer}\\
  \Delta\theta &= \frac{\sum_i^n \rho_i \delta_i \nabla_\theta \Value(s_i;\theta)}{\sum_j^n\rho_j}
                 &\hspace{1cm} \text{WIS-Optimal}
\end{align*}

\subsection{Incremental Algorithm}\label{appendix:inc_updates}

% To compare with WIS-TD(0) \citep{mahmood2015off}, we perform experiments using incremental updates over the sampled minibatch of transitions (see appendix \ref{appendix:inc_updates} for details). WIS-TD(0) uses a vector $\uvec$ to account for the most recent importance sampling ratios in normalizing these values by using a vector of learning rates $\alphavec = \mathbf{1} \oslash \uvec$. The recency of importance sampling ratios is dependent on a hyper-parameter $\eta = \frac{\mu}{u_0}$, where $u_0 > 0$ is the initial value of $\uvec$ and $\mu \in [0.0,1.0]$ is another hyper-parameter. As we found above, resampling according to the importance weights outperforms all other methods in terms of convergence rates. 

While implementing an efficient true WIS algorithm for mini-batch updating is beyond the scope of this work, we compare WIS-TD(0) to the incremental versions of IR, IS, VTrace, and WISBatch. The difference between the mini-batch and incremental algorithms is how the updates are calculated. In the incremental scheme a random mini-batch of data is similarly sampled from the buffer. We then use each sample individually to update our value function. We do this to more naturally compare our baselines to WIS-TD(0) \cite{mahmood2015off}. WIS-TD(0) has parameters $u_{0} \in \{\frac{1}{64}, 1, 5, 10, 50\} * 64, \mu \in 10^{-2:0.25:1}$, and $\eta = \frac{\mu}{u_0}$. WIS-TD(0) follows the update equations:

\begin{align*}
  \uvec_{i+1} &= (\mathbf{1} -\eta \phivec_i \circ \phivec_i) \circ \uvec_i + \rho_i \phivec_i \circ \phivec_i  \quad \triangleright \circ \defeq \text{ element wise product}\\
  \alpha_{i+1} &= \mathbf{1} \oslash \uvec_{t+1} \quad \triangleright \oslash \defeq \text{ element wise division}\\
  \bar{\delta}_i &= C_i + \gamma_i \thetavec_i^\trans\phivec_i^\prime - \thetavec_{i-1}^\trans \phivec_i \\
  \thetavec_{i+1} &= \thetavec_t +
                    \alphavec_{i+1} \circ \rho_i(\thetavec_{i-1}^\trans \phivec_i - \thetavec_i^\trans \phivec_i)\phivec_i +
                    \rho_i \bar{\delta}_i \alphavec_{i+1} \circ \phivec_i
\end{align*}

where $\thetavec \in \Reals^d$ is the weight vector of the value function, $\phivec_i \in \Reals^d$ is the feature vector of the i-th transition in the experience replay buffer, and $\phivec_i^\prime \in \Reals^d$ is the feature vector of the next state of the i-th transition in the experience replay buffer.

% \newpage
\section{Additional Theoretical Results and Proofs} \label{sec:theory_appendix}
%!TEX root = paper.tex

\subsection{Bias of IR} \label{app_ir_bias} 

\textbf{Theorem \ref{thm:bias_IR}}(Bias for a fixed buffer of size $n$)
% Assume a buffer $B$ of $n$ transitions is sampled according to \ref{assum_id}, from $d_\bpolicy(s) \bpolicy(a | s) \Pfcn(s' | s, a)$.
        Assume a buffer $B$ of $n$ transitions sampled i.i.d according to $p(x = (s,a,s')) = d_\bpolicy(s) \bpolicy(a | s) \Pfcn(s' | s, a)$.
	%Let $\xiwer \defeq \tfrac{1}{k} \sum_{j=1}^k \Delta_{i_j}$ for transitions $i_1, \ldots, i_k$ sampled randomly from the buffer proportionally to $\rho_i$. 
	Let $\xwis \defeq \sum_{i=1}^{n} \frac{\rho_i}{\sum_{j=1}^n \rho_j} \Delta_i$ be the WIS-Optimal estimator of the update.
	% rho has already been defined and used often
	%with $\rho_i \defeq \tpolicy(A_i | S_i) / \bpolicy(A_i|S_i)$. 
	Then, 
	\begin{equation*}
	\E[\xiwer] = \E [\xwis]
	\end{equation*}
	and so the bias of $\xiwer$ is proportional to 
	\vspace{-0.2cm}
	\begin{equation*}
	\mathrm{Bias}(\xiwer) = \E [\xiwer] - \E_\tpolicy[\Delta] \propto \frac{1}{n} (\E_\tpolicy[\Delta] \sigma_\rho^2 - \sigma_{\rho, \Delta} \sigma_\rho \sigma_\Delta)
	\end{equation*}
	% \vspace{-0.5cm}
	%
	% \vspace{-0.2cm}
	%
	where $\E_\tpolicy[\Delta]$ is the expected update across all transitions, with actions from $S$ taken by the target policy $\tpolicy$; $\sigma_\rho^2 = \Var(\tfrac{1}{n}\sum_{j=1}^n \rho_j)$; 
	$\sigma_\Delta^2 = \Var(\tfrac{1}{n}\sum_{i=1}^{n} \rho_i \Delta_i)$; and covariance $\sigma_{(\rho,\Delta)}  = \Cov(\tfrac{1}{n}\sum_{j=1}^n \rho_j,\tfrac{1}{n}\sum_{i=1}^{n} \rho_i \Delta_i)$. 
        % \end{theorem}
        
        \begin{proof}
          
	Notice first that when we weight with $\rho_i$, this is equivalent to weighting with $\frac{d_\bpolicy(S_i) \tpolicy(A_i | S_i) \Pfcn(S_{i+1} | S_i, A_i)}{d_\bpolicy(S_i) \bpolicy(A_i | S_i) \Pfcn(S_{i+1} | S_i, A_i)}$, and so is the correct IS ratio for the transition. 
	\begin{align*}
	\E[&\xiwer] = \E\left[ \E [\xiwer | B ] \right] 
	= \E\Big[ \E \Big[ \tfrac{1}{k} \sum_{j=1}^k \Delta_{i_j} | B \Big] \Big] \\
	& = \E\Big[  \tfrac{1}{k} \sum_{j=1}^k \E [\Delta_{i_j} | B ] \Big] \ \ \ \ \ \ \ \ \triangleright \E [\Delta_{i_j} | B ] \!=\!\!  \sum_{i=1}^{n} \!\tfrac{\rho_i}{\sum_{j=1}^n \rho_j} \Delta_i  \\
	&= \E \Big[  \sum_{i=1}^{n} \frac{\rho_i}{\sum_{j=1}^n \rho_j} \Delta_i  \Big] \\
	&= \E[\xwis] 
	.
	\end{align*}
	This bias of $\xiwer$ is the same as $\xwis$, which is characterized in \cite{mcbook},
	% \cite[Section~2.7]{mcbook},
	completing the proof. 
\end{proof}

\subsection{Proof of Unbiasedness of BC-IR}\label{app_bcir_unbiased}

\textbf{Corollary \ref{cor:bias_BCIR}} BC-IR is unbiased:  $\E [\xbciwer] = \E_\tpolicy[\Delta]$.
\begin{proof}
	\begin{align*}
		\E[\xbciwer] &=  \E\Big[  \tfrac{\bar{\rho} }{k} \sum_{j=1}^k \E [\Delta_{i_j} | B ] \Big] 
		= \E \Big[  \bar{\rho} \sum_{i=1}^{n} \frac{\rho_i}{\sum_{j=1}^n \rho_j} \Delta_i  \Big]\\ 
		&= \E \Big[ \tfrac{1}{n} \sum_{i=1}^{n} \rho_i \Delta_i  \Big] 
		= \tfrac{1}{n} \sum_{i=1}^{n} \E \Big[  \frac{\tpolicy(A_i|S_i)}{\bpolicy(A_i|S_i)} \Delta_i  \Big] \\
		&= \tfrac{1}{n} \sum_{i=1}^{n} \E \Big[  \frac{d_\bpolicy(S_i) \tpolicy(A_i | S_i) \Pfcn(S_{i+1} | S_i, A_i)}{d_\bpolicy(S_i) \bpolicy(A_i | S_i) \Pfcn(S_{i+1} | S_i, A_i)} \Delta_i  \Big] \\
		&= \tfrac{1}{n} \sum_{i=1}^{n} \E_\pi \left[  \Delta_i  \right] 
		= \E_\tpolicy \left[  \Delta  \right] 
		.
	\end{align*}
  The last equality follows from the fact that the samples are identically distributed.
\end{proof}

%\subsection{Distribution correction with resampling with growing buffer size}

\subsection{Consistency of the resampling distribution with a growing buffer}\label{app_consistency_bign}

% We show the  of the IR estimator with $n \to \infty$ for convenience, but our approach closely follows that of \citep{smith1992a}.
We show that the distribution when following a resampling strategy is consistent: as $n \to \infty$, the resampling distribution converges to the true distribution. Our approach closely follows that of \citep{smith1992a}, but we recreate it here for convenience.

% MARTHAC: Lets make this a prop, since its not used to support any theorems
\begin{proposition} \label{lem:convergence_dist}
	Let $X_n = \{x_1, x_2, ..., x_n\}$ be a buffer of data sampled i.i.d. according to proposal density $p(x_i)$. Let $q(x_i)$ be some distribution of interest with associated random variable $Q$ and assume the proposal distribution samples everywhere where $q(\cdot)$ is non-zero, i.e $supp(q)\subseteq supp(p)$. Also, let $Y$ be a discrete random variable taking values $x_i$ with probability $ \propto \frac{q(x_i)}{p(x_i)}$.
	Then, $Y$ converges in distribution to $Q$ as $n \rightarrow \infty$.
\end{proposition}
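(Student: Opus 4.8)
Proposition \ref{lem:convergence_dist} asserts that the Sampling Importance Resampling distribution converges in distribution to the target $Q$ as the buffer size $n \to \infty$. Let me sketch how I would prove this.

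=== PROOF PROPOSAL ===

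\textbf{Proof strategy.}
The plan is to show convergence in distribution by establishing pointwise convergence of the cumulative distribution function (CDF) of $Y$ to that of $Q$, using the strong law of large numbers (SLLN) to control the empirical quantities defining $Y$'s distribution. Since $Y$ is a discrete random variable supported on the buffer points $\{x_1,\dots,x_n\}$ with $\Prob(Y = x_i) = \frac{q(x_i)/p(x_i)}{\sum_{j=1}^n q(x_j)/p(x_j)}$, its CDF evaluated at a threshold $a$ is $F_Y^{(n)}(a) = \frac{\sum_{i=1}^n \mathbbm{1}[x_i \le a]\, w(x_i)}{\sum_{i=1}^n w(x_i)}$, where $w(x_i) = q(x_i)/p(x_i)$ is the (unnormalized) importance weight. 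The key observation is that both numerator and denominator are sample averages of i.i.d.\ quantities in the $x_i \sim p$, so I can invoke the SLLN on each separately.

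\textbf{Key steps in order.}
First I would write out $F_Y^{(n)}(a)$ explicitly as the ratio of two empirical averages, namely $\frac{1}{n}\sum_{i=1}^n \mathbbm{1}[x_i \le a]\, w(x_i)$ over $\frac{1}{n}\sum_{i=1}^n w(x_i)$. Second, I would apply the SLLN to the denominator: since $\Expected_p[w(X)] = \int \frac{q(x)}{p(x)} p(x)\, dx = \int q(x)\, dx = 1$ (using $\mathrm{supp}(q) \subseteq \mathrm{supp}(p)$ to justify the change of measure), the denominator converges almost surely to $1$. Third, I would apply the SLLN to the numerator: $\Expected_p[\mathbbm{1}[X \le a]\, w(X)] = \int_{\{x \le a\}} \frac{q(x)}{p(x)} p(x)\, dx = \int_{\{x \le a\}} q(x)\, dx = F_Q(a)$, so the numerator converges almost surely to $F_Q(a)$. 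Fourth, by the continuous mapping / ratio property of almost sure convergence (the limit of the ratio equals the ratio of the limits when the denominator limit is nonzero), I conclude $F_Y^{(n)}(a) \to F_Q(a)$ almost surely, hence in probability, at every continuity point $a$ of $F_Q$. This is precisely convergence in distribution of $Y$ to $Q$.

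\textbf{Main obstacle.}
The principal subtlety is integrability: the SLLN requires $\Expected_p[w(X)] < \infty$ and $\Expected_p[\mathbbm{1}[X\le a]\,w(X)] < \infty$, both of which follow from $\int q = 1$ and $\mathrm{supp}(q)\subseteq\mathrm{supp}(p)$, so the hard part is mainly in being careful that the support condition genuinely licenses the change of measure and that $w$ is well-defined ($p(x) > 0$ wherever we need it). A secondary point of care is that $F_Y^{(n)}$ is itself random (it depends on the sampled buffer), so the convergence is an almost sure statement over the draw of the buffer; I would note that almost sure convergence of both numerator and denominator holds simultaneously on a common probability-one event, which is what permits taking the ratio of the limits. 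No single step is genuinely deep, but the bookkeeping of which randomness is being averaged over must be stated cleanly.
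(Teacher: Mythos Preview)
Your proposal is correct and follows essentially the same route as the paper: write $\Prob[Y\le a]$ as the ratio of the two empirical averages $\tfrac{1}{n}\sum_i \rho_i \mathbbm{1}\{x_i\le a\}$ and $\tfrac{1}{n}\sum_i \rho_i$, then pass to the limit in each using the law of large numbers to obtain $\int_{-\infty}^a q(x)\,dx = F_Q(a)$. If anything, your write-up is slightly more careful than the paper's, since you explicitly name the SLLN, verify the integrability needed to invoke it, and flag that the convergence is almost sure over the buffer draw; the paper simply writes the limit arrow and computes the resulting integrals.
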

\begin{proof} 
	Let $\rho_i = \frac{q(x_i)}{p(x_i)}$.
	From the probability mass function of $Y$, we have that:
	\begin{align*}
	  \prob[Y \le a] &= \sum_{i=1}^n \prob[Y = x_i] \mathbbm{1}\{x_i \le a\} \\
                   &= \frac{n^{-1} \sum_{i=1}^n \rho_i \mathbbm{1}\{x_i \le a\}}
	                           {n^{-1} \sum_{i=1}^n \rho_i}\\
                   &\xrightarrow{n \to \infty}
                     \frac{\E_q[\rho(x) \mathbbm{1}\{x \le a\}]}
	                          {\E_q[\rho(x)]} \\
                   &= \frac{1\cdot\int_{-\infty}^a \frac{q(x)}{p(x)} p(x) dx
	                            + 0\cdot\int_{a}^\infty \frac{q(x)}{p(x)} p(x) dx}
                     {\int_{-\infty}^\infty \frac{q(x)}{p(x)} p(x) dx}\\
                   &=\int_{-\infty}^a q(x)dx = \prob[Q \le a]\\
    Y &\xrightarrow{d} Q
	\end{align*}
\end{proof}

This means a resampling strategy effectively changes the distribution of random variable $X_n$ to that of $q(x)$, meaning we can use samples from $Y$ to build statistics about the target distribution $q(x)$. This result motivates using resampling to correct the action distribution in off-policy learning. 
This result can also be used to show that the IR estimators are consistent, with $\bsize \rightarrow \infty$.

\subsection{Consistency under a sliding window}
% We restate theorem \ref{thm:sliding_window} and present the proof.  %TODO check typesetting

\begin{lemma} \label{lem:time_bcir}
  Let $B_t = \{X_{t+1}, ..., X_{t+n}\}$ be the buffer of the most recent $n$ transitions sampled by time $t+n$, which are generated sequentially from an irreducible, finite MDP with a fixed policy $\mu$. We define $\xbciwerb{t}$ be the BCIR estimator for buffer $B_t$. If $\E_\pi[|\Delta|] < \infty$, then we can show $\E[\xbciwerb{t}] = \E_\pi[\Delta]$.
\end{lemma}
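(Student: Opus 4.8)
The plan is to reduce the computation to the one already carried out for Corollary~\ref{cor:bias_BCIR}, isolating the single place where the sliding-window setting genuinely differs from the i.i.d.\ setting: the transitions $X_{t+1}, \ldots, X_{t+n}$ are now generated sequentially from the Markov chain and are therefore \emph{dependent}. The central observation is that this dependence is irrelevant for the \emph{expectation} of $\xbciwerb{t}$ (it would only enter a variance calculation), since the entire argument rests on linearity of expectation together with the marginal law of each individual transition.

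First I would apply the tower property, conditioning on the buffer $B_t$. Given $B_t$, the resampling indices $i_1, \ldots, i_k$ are drawn with probability proportional to $\rho_i$, independently of how $B_t$ was generated, so exactly as in Corollary~\ref{cor:bias_BCIR} we obtain
\begin{equation*}
\E[\Delta_{i_j} \mid B_t] = \sum_{i=1}^{n} \frac{\rho_i}{\sum_{l=1}^n \rho_l}\, \Delta_i .
\end{equation*}
The bias-correction factor $\bar{\rho} = \tfrac{1}{n}\sum_{i=1}^n \rho_i$ then cancels the normalizer, giving $\E[\xbciwerb{t} \mid B_t] = \tfrac{1}{n}\sum_{i=1}^n \rho_i \Delta_i$, and taking the outer expectation and pulling the finite sum out by linearity yields $\E[\xbciwerb{t}] = \tfrac{1}{n}\sum_{i=1}^n \E[\rho_i \Delta_i]$. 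None of these steps used independence of the $X_{t+i}$.

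The remaining step, and the one requiring genuine care, is to show $\E[\rho_i \Delta_i] = \E_\pi[\Delta]$ for \emph{each} $i$. This is where I would invoke Assumption~\ref{assum_id}: because the MDP is finite and irreducible under the fixed policy $\mu$, it has a unique stationary distribution $d_\mu$, and under stationarity every state $S_{t+i}$ is marginally distributed as $d_\mu$. Hence each transition $X_{t+i}$ has marginal law $p(x) = d_\mu(s)\,\mu(a\mid s)\,\Pfcn(s' \mid s, a)$, so the per-sample importance-sampling identity from the proof of Corollary~\ref{cor:bias_BCIR}---namely $\E[\rho_i \Delta_i] = \E_\mu[\rho \Delta] = \E_\pi[\Delta]$, with $\E_\pi[|\Delta|] < \infty$ justifying finiteness and the interchange---applies verbatim. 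Averaging the $n$ identical terms gives $\E[\xbciwerb{t}] = \E_\pi[\Delta]$.

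I expect the main obstacle to be precisely this marginal-distribution point: unlike the i.i.d.\ case, one cannot simply quote ``identically distributed'' as a one-liner, but must argue that the sliding window sees transitions whose common marginal is the stationary $p$. The cleanest route is to assume (or to have established earlier via irreducibility) that the chain is at stationarity, so that ``identically distributed marginals'' holds even though the samples are correlated; linearity of expectation then does all the remaining work.
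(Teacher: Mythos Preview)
Your proof is correct and shares the same skeleton as the paper's: condition on the buffer via the tower property, simplify the inner expectation to $\tfrac{1}{n}\sum_{i=1}^n \rho_i \Delta_i$ using the bias-correction cancellation, and then handle the outer expectation. The difference lies entirely in that last step. The paper writes out the stationary distribution of the full buffer $B_t$ as a product along the trajectory, expands the expectation as an explicit sum over $(s_1, a_1, r_2, \ldots, s_{n+1})$, and then marginalizes by hand: first summing out the future of each $\rho_t\Delta_t$ term (those conditional probabilities sum to one), then repeatedly applying the defining stationarity relation $\sum_{s_1,a_1,r_2} d_\mu(s_1)\mu(a_1\mid s_1)p(s_2,r_2\mid s_1,a_1) = d_\mu(s_2)$ to collapse the past. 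What remains is the single-transition expectation under $p$, and the IS identity finishes. Your route is the same mathematical content but far shorter: you observe directly that stationarity forces each $X_{t+i}$ to have marginal law $p$, and linearity of expectation---which is indifferent to dependence---completes the argument immediately. The paper's computation is thus a long-hand derivation of the marginal fact you simply quote; your version is cleaner, while the paper's is more self-contained for a reader who has not internalized that a stationary chain has identically distributed (though dependent) marginals.
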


\begin{proof}
    Using the law of iterated expectations, 
  \[ \E \left[ \xbciwerb{t} \right] = \E \left[ \E[ \xbciwerb{t} | B_t ] \right] \]
  where the outer expectation is over the stationary distribution of $B_t$ and the inner expectation is over the sampling distribution of IR from the buffer $B_t$.
   
  Using the definition of $\xbciwerb{t} | B_t$, we have that
  \begin{align*}
  \E[ \xbciwerb{t} | B_t ]  &= \bar{\rho} \sum_{i=1}^n \Delta_i \frac{\rho_i}{\sum_{i=1}^{n} \rho_i} \\
  &= \frac{1}{n} \sum_{i=1}^n \rho_i \Delta_i
  \end{align*}
  
  Noting that stationary distribution of $B_t$ is 
  $P(B_t = (x_{t+1}, ..., x_{t+n})) =\\ d_X(x_t) P(x_{t+1}|x_t) ... P(x_{t+n}|x_{t+n-1})$, where $d_X$ is the stationary distribution of $X_t$,
  we can expand the expectation as:
  \begin{align*}
    & \E \left[ \frac{1}{n} \sum_{t=1}^n \rho_t \Delta_t \right] \\
    &= \sum_{x_1,...x_n} d_X(x_1) p(x_2|x_1)...p(x_n|x_{n-1})  \left( \frac{1}{n} \sum_{t=1}^n \rho_t \Delta_t \right) \\
    &= \sum_{\substack{s_1, a_1, r_2, s_2, ...,\\ s_n, a_n, r_{n+1}, s_{n+1}}}
      d_\mu(s_1) \left( \prod_{i=1}^n \mu(a_i|s_i) p(s_{i+1}, r_{i+1}|s_i, a_i) \right) \left( \frac{1}{n} \sum_{t=1}^n \rho_t \Delta_t \right) \\
    &=  \frac{1}{n} \sum_{t=1}^n  \sum_{\substack{s_1, a_1, r_2, s_2, ...,\\ s_n, a_n, r_{n+1}, s_{n+1}}}
      d_\mu(s_1) \left( \prod_{i=1}^n \mu(a_i|s_i) p(s_{i+1}, r_{i+1}|s_i, a_i) \right) \left( \rho_t \Delta_t \right) \\
  \end{align*}
    Next, by taking the sums over $(s_1, a_1, ... r_{n+1}, s_{n+1})$ within the products to make the summands depend only on the variables being summed over, we get
   \begin{align*}
     &= \frac{1}{n} \sum_{t=1}^n \sum_{s_1} d_\mu(s_1) 
     \sum_{a_1, r_2, s_2} \mu(a_1|s_1) p(s_2, r_2| s_1, a_1) 
     \sum_{a_2, r_3, s_3} \mu(a_2|s_2) p(s_3, r_3| s_2, a_2) ...\\
     &\sum_{a_t, r_{t+1}, s_{t+1}} \mu(a_t|s_t) p(s_{t+1}, r_{t+1}| s_t, a_t) \left( \rho_t \Delta_t \right) \\
     &\sum_{\substack{a_{t+1}, r_{t+2}, s_{t+2}, ...,\\ s_n, a_n, r_{n+1}, s_{n+1}}}  \prod_{i=t+2}^n \mu(a_i|s_i) p(s_{i+1}, r_{i+1}|s_i, a_i) \\
     &= \frac{1}{n} \sum_{t=1}^n \sum_{s_1} d_\mu(s_1) 
     \sum_{a_1, r_2, s_2} \mu(a_1|s_1) p(s_2, r_2| s_1, a_1) 
     \sum_{a_2, r_3, s_3} \mu(a_2|s_2) p(s_3, r_3| s_2, a_2) ...\\
     &\sum_{a_t, r_{t+1}, s_{t+1}} \mu(a_t|s_t) p(s_{t+1}, r_{t+1}| s_t, a_t) \left( \rho_t \Delta_t) \right)
   \end{align*}
   This followed since the third line is summing over the probability of all trajectories starting from $s_{t+1}$ and thus is equal to 1.
   Next, we note that, if $C$ is a constant that does not depend on $s_1, a_1, r_2$, then
   $\sum_{s_1, a_1, r_2} d_\mu(s_1) \mu(a_1|s_1) p(s_2, r_2|s_1, a_1) C = d_\mu(s_2) C$ since $d_\mu(s_2)$ is the stationary distribution.
   
   Continuing from before, by reordering the sums we have and repeatedly using the above note,
   \begin{align*}
   &= \frac{1}{n} \sum_{t=1}^n \sum_{s_2}  \underbrace{\sum_{s_1, a_1, r_2}  d_\mu(s_1) 
   \mu(a_1|s_1) p(s_2, r_2| s_1, a_1)}_{d_\mu(s_2)}
   \sum_{a_2, r_3, s_3} \mu(a_2|s_2) p(s_3, r_3| s_2, a_2) ...\\
   &\sum_{a_t, r_{t+1}, s_{t+1}} \mu(a_t|s_t) p(s_{t+1}, r_{t+1}| s_t, a_t) \left( \rho_t \Delta_t) \right) \\ 
   &= \frac{1}{n} \sum_{t=1}^n \sum_{s_2} d_\mu(s_2)
   \sum_{a_2, r_3, s_3} \mu(a_2|s_2) p(s_3, r_3| s_2, a_2) ...\\
   &\sum_{a_t, r_{t+1}, s_{t+1}} \mu(a_t|s_t) p(s_{t+1}, r_{t+1}| s_t, a_t) \left( \rho_t \Delta_t) \right)\\
   &= ... \\
   &=  \frac{1}{n} \sum_{t=1}^n \sum_{s_t, a_t, r_{t+1}, s_{t+1}} d_\mu(s_t) \mu(a_t|s_t) p(s_{t+1}, r_{t+1}| s_t, a_t) \left( \rho_t \Delta_t) \right) \\
   \end{align*}
   Recall that $\Delta_t = \Delta(s_t, a_t, r_{t+1}, s_{t+1})$ is a function of the transition so we cannot simplify further.

   Finally, 
   \begin{align*}
   &= \frac{1}{n} \sum_{t=1}^n  \sum_{s_t, a_t, r_{t+1}, s_{t+1}}
    d_\mu(s_t) \mu(a_t|s_t) p(s_{t+1}, r_{t+1}| s_t, a_t) \left(\frac{\pi(a_t)}{\mu(a_t)} \Delta_t) \right) \\ 
   &= \frac{1}{n} \sum_{t=1}^n \sum_{s_t, a_t, r_{t+1}, s_{t+1}} 
    d_\mu(s_t) \pi(a_t|s_t) p(s_{t+1}, r_{t+1}| s_t, a_t) \Delta_t \\
   &=  \frac{1}{n} \sum_{t=1}^n \E_\pi[\Delta] \\
   &= \E_\pi[\Delta]
   \end{align*}
   
\end{proof}

{\bf Theorem \ref{thm:sliding_window}}
Let $B_t = \{X_{t+1}, ..., X_{t+n}\}$ be the buffer of the most recent $n$ transitions sampled by time $t+n$, which are generated sequentially from an irreducible, finite MDP with a fixed policy $\mu$.
Define the sliding-window estimator $\xwindow{t} \defeq \frac{1}{T} \sum_{t=1}^T \xbciwerb{t}$. 
Then, if $\E_\pi [|\Delta| ] < \infty$, then $\xwindow{T}$ converges to $\E_\pi [\Delta ]$ almost surely as $T \rightarrow \infty$.

\begin{proof}
  Let $X_t = (S_t, A_t, R_{t+1}, S_{t+1})$ be a transition. Then the sequence $\{X_t\}_{t \in \Naturals}$ forms an irreducible Markov chain as there is positive probability of eventually visiting any $X'$ starting from any $X$ since this is true for states $S'$ and $S$ in the original MDP (by irreducibility). 
  
  Let $\{B_t\}_{t \in \Naturals}$ be the sequence of buffers that are observed. This also forms an irreducible Markov chain by the same reasoning as above since $\{X_t\}_{t \in \Naturals}$ is irreducible. 
  Additionally, the sequence of pairs $\{(\xbciwerb{t}, B_t) \}_{t \in \Naturals}$ is an irreducible Markov chain.
 
%  Let $ \E [ \xbciwerb{t} | B_t ]$ be the expected update on the batch $B_t$. Note that this expectation is over the sampling distribution of IR.

  Using the ergodic theorem (theorem 4.16 in \citep{levin2017markov}) on  $\{(\xbciwerb{t}, B_t) \}_{t \in \Naturals}$ with the projection function $f(x,y)= x$, we have that 
  \[ \lim_{T \rightarrow \infty}  \frac{1}{T} \sum_{t=1}^T \xbciwerb{t} 
  = \E \left[ \xbciwerb{t} \right] \]
  where the expectation is over the joint stationary distribution of $(\xbciwerb{t}, B_t)$. 
  
  Using Lemma \ref{lem:time_bcir} we can show that $ \E \left[ \xbciwerb{t} \right] = \E_\pi [\Delta]$, completing the proof.
  
\end{proof}

\subsection{Variance of BC-IR and IS}\label{app_bc_var}
This lemma characterizes the variance of the BC-IR and IS estimators for a fixed buffer.

\begin{lemma} \label{lem:estimator_variance}
  Let $\muB{B} = \E_\pi[\Delta | B]$ be the mean update on the batch $B$. 
  Denoting the size of the buffer by $n$ and the number of size of the minibatch by $k$, let $X_{IS} = \frac{1}{k} \sum_{j=1}^k \rho_{z_j} \Delta_{z_j} $ (with each $z_j$ sampled uniformly from $\{1, ..., n \}$) be the importance sampling estimator and $X_{BC} =\frac{1}{k} \sum_{j=1}^n \Delta_{i_j}$ (with each $i_j$ being sampled from $\ell \in \{1, ..., n \}$ with probability proportional to $\rho_\ell$) be the bias-corrected importance resampling estimator. 
  Then, the variances of the two estimators are given by
	\begin{align*}
	\V(\xis\ | \ B) &= \frac{1}{k} \left( \frac{1}{n}\sum_{j=1}^n \rho_j^2\norm{\Delta_j}^2_2 - \muB{B}^\top \muB{B} \right) \\
	\V(\xbciwer \ | \ B) &= \frac{1}{k} \left(\frac{\bar{\rho}}{n}\sum_{j=1}^n \rho_j \norm{\Delta_j}^2_2 - \muB{B}^\top \muB{B} \right)
	\end{align*}
\end{lemma}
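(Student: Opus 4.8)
The plan is to condition throughout on the fixed buffer $B$ and exploit that each estimator is a $\tfrac{1}{k}$-scaled sum of $k$ terms that are, conditionally on $B$, independent and identically distributed. For any such average $Y = \tfrac{1}{k}\sum_{j=1}^k Y_j$ of terms that are i.i.d.\ given $B$, the off-diagonal covariances vanish, so $\tr\Cov(Y \mid B) = \tfrac{1}{k}\tr\Cov(Y_1 \mid B)$. I would then evaluate each single-term covariance trace using the identity $\V(Y_1 \mid B) = \tr\Cov(Y_1 \mid B) = \E[\norm{Y_1}_2^2 \mid B] - \E[Y_1 \mid B]^\top \E[Y_1 \mid B]$, reducing the whole problem to computing a conditional mean and a conditional second moment for a single draw under each sampling scheme.

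For $\xis$, the single draw is $\rho_{z_1}\Delta_{z_1}$ with $z_1$ uniform on $\{1,\dots,n\}$. First I would compute the mean $\E[\rho_{z_1}\Delta_{z_1} \mid B] = \tfrac{1}{n}\sum_{i=1}^n \rho_i \Delta_i$ and identify it with $\muB{B}$, which holds because weighting by $\rho_i$ is the correct IS ratio for the transition (exactly as used in the proofs of Theorem \ref{thm:bias_IR} and Corollary \ref{cor:bias_BCIR}). The conditional second moment is $\E[\norm{\rho_{z_1}\Delta_{z_1}}_2^2 \mid B] = \tfrac{1}{n}\sum_{i=1}^n \rho_i^2 \norm{\Delta_i}_2^2$, and substituting both into the identity yields the stated expression for $\V(\xis \mid B)$.

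For $\xbciwer$, the single draw is $\bar{\rho}\,\Delta_{i_1}$ where $i_1$ takes value $\ell$ with probability $\rho_\ell / \sum_{m=1}^n \rho_m$. The key simplification is $\sum_{m=1}^n \rho_m = n\bar{\rho}$, so the sampling probability is $\rho_\ell/(n\bar{\rho})$. I would then compute $\E[\bar{\rho}\,\Delta_{i_1} \mid B] = \bar{\rho}\sum_{i=1}^n \tfrac{\rho_i}{n\bar{\rho}}\Delta_i = \tfrac{1}{n}\sum_{i=1}^n \rho_i\Delta_i = \muB{B}$, confirming both estimators share the same conditional mean. The second moment is $\E[\norm{\bar{\rho}\,\Delta_{i_1}}_2^2 \mid B] = \bar{\rho}^2 \sum_{i=1}^n \tfrac{\rho_i}{n\bar{\rho}}\norm{\Delta_i}_2^2 = \tfrac{\bar{\rho}}{n}\sum_{i=1}^n \rho_i \norm{\Delta_i}_2^2$, which after substitution gives the claimed $\V(\xbciwer \mid B)$.

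There is no substantive obstacle; the computation is routine once the conditioning and i.i.d.\ structure are set up. The only points requiring care are (i) confirming that the two distinct sampling schemes produce the same conditional mean $\muB{B}$, so that the subtracted $\muB{B}^\top \muB{B}$ term is common to both variances, and (ii) the normalization in the proportional-sampling scheme, where the factor $\sum_m \rho_m = n\bar{\rho}$ in the denominator cancels one power of $\bar{\rho}$. This cancellation is precisely what turns the $\rho_i^2$ appearing in the IS second moment into a $\bar{\rho}\rho_i$ in the BC-IR second moment, and is the algebraic source of the variance reduction later analyzed in Theorems \ref{thm:var_reduc} and \ref{thm:var_indep}.
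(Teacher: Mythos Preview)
Your proposal is correct and follows essentially the same approach as the paper's proof: condition on $B$, use the i.i.d.\ structure of the $k$ draws to reduce to a single-term variance $\tfrac{1}{k}\V(\cdot\mid B)$, then compute the conditional mean (identifying it with $\muB{B}$ in both cases) and the conditional second moment for each sampling scheme. The paper carries out exactly these steps in the same order, with the same cancellation $\sum_m \rho_m = n\bar{\rho}$ producing the $\bar{\rho}\rho_j$ factor.
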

\begin{proof}
	%
	% Let $\Var(\Delta) = \tr\Cov(\Delta) = \sum_{k=1}^d \Var(\Delta_k)$, where k indexes the elements of vector $\Delta$. 
	% It is straightforward to show $\Var(\xbciwer \ | \ B) = \frac{1}{k}\Var(\bar{\rho}\Delta_{i_j} \ | \ B)$ and $\Var(\xis \ | \ B) = \frac{1}{k}\Var(\rho_{z_i}\Delta_{z_i}| B)$.
	Since we condition on the buffer $B$, the only source of randomness is the sampling mechanism. Each index is sampled independently so we have that, 
	\begin{equation*}
	\V(\xbciwer \ | \ B) 
	= \frac{1}{k^2} \sum_{j=1}^k \V(\bar{\rho} \Delta_{i_j} \ | \ B) 
	=  \frac{1}{k}  \V(\bar{\rho} \Delta_{i_1} \ | \ B)
	\end{equation*}
	and similarly $\V(\xis \ | \ B) = \frac{1}{k}\V(\rho_{z_1}\Delta_{z_1}| B)$
	
  We can further simplify these expressions. For the IS estimator
	\begin{align*}
	& \V(\rho_{z_1} \Delta_{z_1} | B) \\
	&= \E[\rho_{z_1}^2 \Delta_{z_1}^\top \Delta_{z_1} | B] - \E[\rho_{z_1}\Delta_{z_1} | B]^\top \E[\rho_{z_1}\Delta_{z_1} | B] \quad \text{by definition of $\V(\cdot)$} \\
	&=\E[\rho_{z_1}^2 \| \Delta_{z_1} \|_2^2 | B] - \muB{B}^\top \muB{B} \quad \text{since $\rho_{z_1}\Delta_{z_1}|B$ is unbiased for $\muB{B}$} \\
  &= \frac{1}{n}\sum_{j=1}^n \rho_j^2 \| \Delta_j \|_2^2 - \muB{B}^\top \muB{B} 
	\end{align*}
  The last line follows from the uniform sampling distribution.
	For the BC-IR estimator, recalling that $\bar{\rho} = \tfrac{1}{n} \sum_{i=1}^n \rho_i$, we follow similar steps,
	\begin{align*}
	&\V(\bar{\rho}\Delta_{i_1} | B) \\
	&= \E\left[\bar{\rho}^2 \Delta_{i_1}^\top \Delta_{i_1} | B \right] -
	\E[\bar{\rho}\Delta_{i_1}  | B]^\top \E[\bar{\rho}\Delta_{i_1}  | B] \\
  &=  \E\left[ \bar{\rho}^2 \| \Delta_{i_j} \|_2^2 | B \right] - \muB{B}^\top \muB{B} \quad \text{since $\bar{\rho}\Delta_{i_1} | B$ is unbiased for $\muB{B}$} \\ 
	&=  \sum_{j=1}^n \bar{\rho}^2 \frac{\rho_j}{\sum_{i=1}^n \rho_i} \| \Delta_j \|_2^2 -
	\muB{B}^\top \muB{B} \\
	&= \frac{\bar{\rho}}{n}\sum_{j=1}^n \rho_j \| \Delta_j \|_2^2  - \muB{B}^\top \muB{B} 
	\end{align*}
  The fourth line follows from the sampling distribution of the $i_j$.
	% Reduce whitespace after align for proof
	% \par \vspace{-0.8cm}
\end{proof}

The following two theorems present certain conditions when the BC-IR estimator would have lower variance than the IS estimator.

{\bf Theorem \ref{thm:var_reduc}}
	Assume that $ \| \Delta_j \|_2^2 > \frac{c}{\rho_j}$ for samples where $\rho_j \ge \bar{\rho}$, and that $ \| \Delta_j \|_2^2 < \frac{c}{\rho_j}$ for samples where $\rho_j < \bar{\rho}$, for some $c > 0$. Then the BC-IR estimator has lower variance than the IS estimator.

\begin{proof} 
	We show $ \V(\xis | B) -  \V(\xbciwer | B) > 0$:
	\begin{align*}
	&\V(\xis | B) -   \V(\xbciwer | B) 
	= \frac{1}{nk} \sum_{j=1}^n \norm{\Delta_j}^2_2  (\rho_j^2 - \bar{\rho} \rho_j)   \\
	&= \frac{1}{nk} \sum_{s: \rho_s < \bar{\rho}} \underbrace{\norm{\Delta_s}^2_2}_{\le c/\rho_s}  \rho_s \underbrace{(\rho_s- \bar{\rho})}_{\le 0}  
	+ \frac{1}{nk} \sum_{l: \rho_l \ge \bar{\rho}} \underbrace{\norm{\Delta_l}^2_2}_{> c/\rho_s}  \rho_l \underbrace{(\rho_l- \bar{\rho} )}_{\ge 0}  \\
	&> \frac{1}{nk} \sum_{s: \rho_s < \bar{\rho}} \frac{c}{\rho_s}  \rho_s \left(\rho_s- \bar{\rho} \right) 
	+ \frac{1}{nk} \sum_{l: \rho_l \ge \bar{\rho}} \frac{c}{\rho_l}  \rho_l \left(\rho_l- \bar{\rho} \right)  \\
	&= \frac{c}{nk} \sum_{j=1}^n (\rho_j - \bar{\rho}) = 0
	\end{align*}
	% Reduce whitespace after align for proof
	% \par \vspace{-0.8cm}	
\end{proof}

{\bf Theorem \ref{thm:var_indep}}
	%  Assume a buffer $B$ of n transitions sampled i.i.d, according to $d_\bpolicy(s) \bpolicy(a | s) \Pfcn(s' | s, a)$. Let $\xis = \frac{1}{k}\sum_{j=1}^k\rho_{\unisample}\Delta_{\unisample}$ and $\xbciwer \defeq \tfrac{\bar{\rho}}{k} \sum_{j=1}^k \Delta_{\irsample}$ be the importance sampling and batch corrected importance resampling estimators respectively, where $\unisample$ and $\irsample$ are points sampled from the buffer $B$ according to the uniform distribution and resampling distribution respectively. 
	Assume $\rho$ and the magnitude of the update $\| \Delta \|_2^2$ are independent
	\begin{equation*}
	\E[\rho_j \|\Delta_j \|_2^2  \ | \  B] = \E[\rho_j \ | \ B]  \ \E[ \|\Delta_j \|_2^2 \ | \ B]
	\end{equation*} 
	% MARTHAC: Its not necessarily a gradient
	%the BC-IR estimator of the gradient 
	Then the BC-IR estimator
	will have equal or lower variance than the IS estimator.
% \end{theorem}
%
\begin{proof}
	%
	%  It will suffice to show  $\frac{1}{k} \Var({\rho_{\unisample} \Delta_{\unisample}}|B) \ge \frac{1}{k} \Var({\bar{\rho} \Delta_{\irsample}}|B)$ for all datasets $B$. 
	%
	Because of the condition, we can further simplify the variance equations from Lemma \ref{lem:estimator_variance}. Let $c =  \E[ \|\Delta_j \|_2^2 \ | \ B]$. Then for BC-IR we have
	\begin{equation*}
	\frac{\bar{\rho}}{nk} \sum_{j=1}^n \rho_j \norm{\Delta_j}^2 
	= \frac{1}{k}\bar{\rho} \E \left[ \rho_j \norm{\Delta_j}^2   | B \right] 
	= \frac{1}{k} \bar{\rho} \bar{\rho} c 
	%    = \frac{1}{k} \bar{\rho} \bar{\rho} \E \left[  \norm{\Delta_j}^2   | B \right] \\
	= \frac{1}{k} \bar{\rho}^2 c 
	\end{equation*}
	and for IS we have
	\begin{equation*} 
	\frac{1}{nk}\sum_{j=1}^n \rho_j^2 \| \Delta_j \|_2^2 
	= \frac{1}{k} \E\left[\rho_j^2 \norm{\Delta_j}^2_2 | B \right]  
	= \frac{c}{k} \E\left[\rho_j^2 | B \right] 
	\end{equation*}
	Now when we take the difference, we get
	\begin{align*}
	\V(\xis | B) -   \V(\xbciwer | B) &= \frac{c}{k} (\E\left[\rho_j^2 | B \right] - \bar{\rho}^2)\\
	&= \frac{c}{k} \hat\sigma^2(\rho_j \ | \ B)
	\end{align*}
	where $\hat\sigma^2(\rho_j)$ is the sample variance of the importance weights $\{\rho_j\}_{j=1}^n$ for $B$. Because the sample variance is greater than zero and $c \ge 0$, the BC-IR estimator will have equal or lower variance than the IS estimator.

\end{proof}

\subsection{Variance of BC-IR and WIS for a fixed dataset}

% In this section, we provide some intuition on the variance properties of the discussed off-policy estimators. 
The variance of BC-IR as compared to IS discussed in section \ref{sec:variance} is only one comparison we can make. Similarly to bias, we can characterize the variance of the IR estimator relative to WIS-Optimal. $\xwis$ is able to use a batch update on all the data in the buffer, which should result in a low-variance estimate but is an unrealistic algorithm to use in practice. Instead, it provides a benchmark, where the goal is to obtain similar variance to $\xwis$, but within realistic computational restrictions. Because of the relationship between IR and WIS, as used in Theorem \ref{thm:bias_IR}, we can characterize the variance of $\xiwer$ relative to $\xwis$ using the law of total covariance: 
\begin{align*}
	\V(\xiwer) &=  \V \left[ \E[\xiwer | B] \right] + \E \left[ \V [ \xiwer | B] \right]\\
	&= \V \left[ \xwis \right]  + \E \left[ \V [ \xiwer | B] \right] 
\end{align*}
where the variability is due to having randomly sampled buffers $B$ and random sampling from $B$. 
The second term corresponds to the noise introduced by sampling a mini-batch of $k$ transitions from the buffer $B$, instead of using the entire buffer like WIS. For more insight, we can expand this second term, $\E \left[ \V [ \xiwer | B] \right] =  \E \left[ (\tfrac{1}{k} \sum_{j=1}^k \Delta_{i_j} - \tfrac{1}{n} \sum_{i=1}^n \Delta_{i})^2 | B\right]$, where we consider the variance independently for each element of $\Delta_i$ and so apply the square element-wise. The variability is not due to IS ratios, and instead arises from variability in the updates themselves. Therefore, the variance of IR corresponds to the variance of WIS, with some additional variance due to this variability around the average update in the buffer.

\section{Extended Experimental Results} \label{appendix:empirical}

\subsection{Markov Chain} \label{appendix:markov}

This section contains the full set of markov chain experiments using several different policies. Results can be found in figure \ref{fig:markov_sens} and figure \ref{fig:markov_vtrace_sens}. See figure captions for more details.

\begin{figure*}[ht!]
  \centering
  \includegraphics[width=\textwidth]{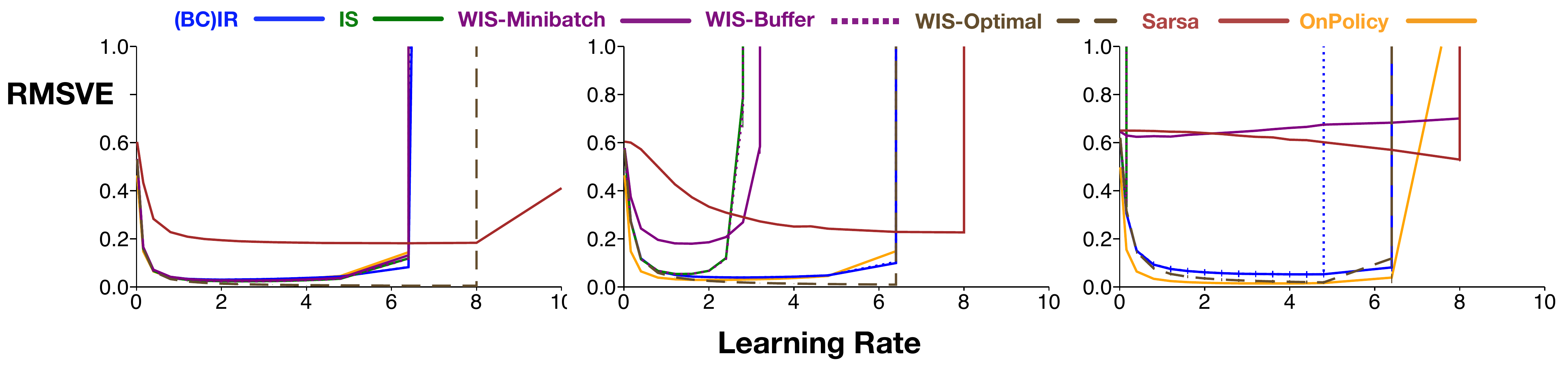}
  \caption{Sensitivity curves for Markov Chain experiments with policy action probabilities [left, right] {\bf left} $\mu = [0.5,0.5], \pi=[0.1,0.9]$; {\bf center} $\mu = [0.9,0.1], \pi=[0.1,0.9]$; {\bf right} $\mu = [0.99,0.01], \pi=[0.01,0.99]$.
  % MARTHAC: its odd ot say we chose 1.0, but it was not recommended for Vtrace. Lets just state what we did instead
  %calculated from the known max importance sampling ratio, multiplied by the scalars 0.5 and 0.9. We also included a clipping value of 1.0 which is consistent with the recommendations made for retrace (similar recommendations were not made for V-trace).
  } \label{fig:markov_sens_2}
\end{figure*}

\begin{figure*}[ht!]
  \centering
  \includegraphics[width=\textwidth]{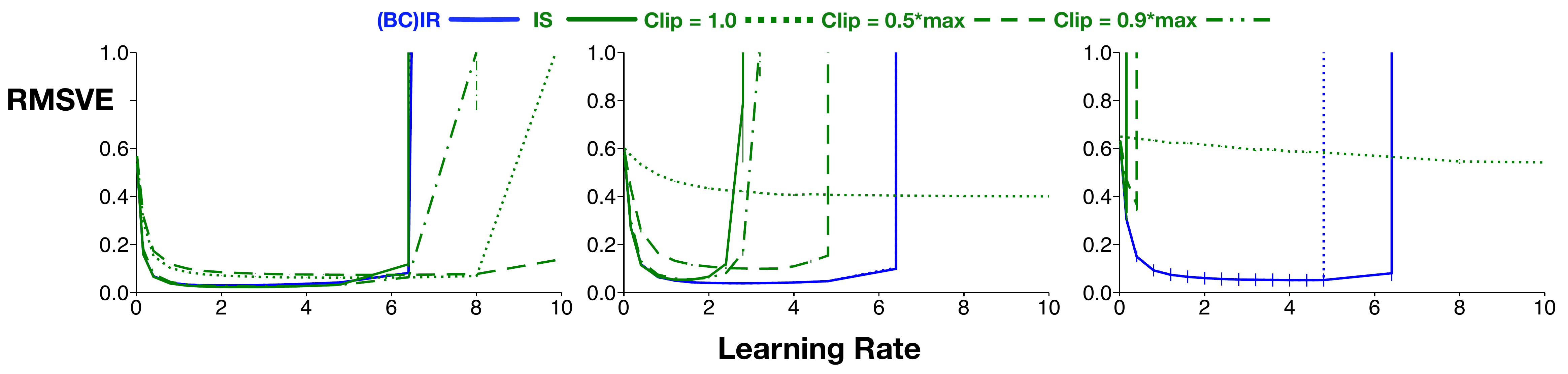}
  \caption{Learning rate sensitivity plots for V-Trace (with the same settings as Figure \ref{fig:markov_sens}). Three clipping parameters were chosen, including 1.0, 0.5 $\rho_\text{max}$ and 0.9 $\rho_\text{max}$, where $\rho_\text{max}$ is the maximum possible IS ratio. For 1.0 $\rho_\text{max}$, updates under V-trace become exactly equivalent to IS. 
  % MARTHAC: its odd ot say we chose 1.0, but it was not recommended for Vtrace. Lets just state what we did instead
  %calculated from the known max importance sampling ratio, multiplied by the scalars 0.5 and 0.9. We also included a clipping value of 1.0 which is consistent with the recommendations made for retrace (similar recommendations were not made for V-trace).
  } \label{fig:markov_vtrace_sens}
\end{figure*}

\begin{figure}
  \begin{subfigure}{0.33\textwidth}
    \includegraphics[width=\textwidth]{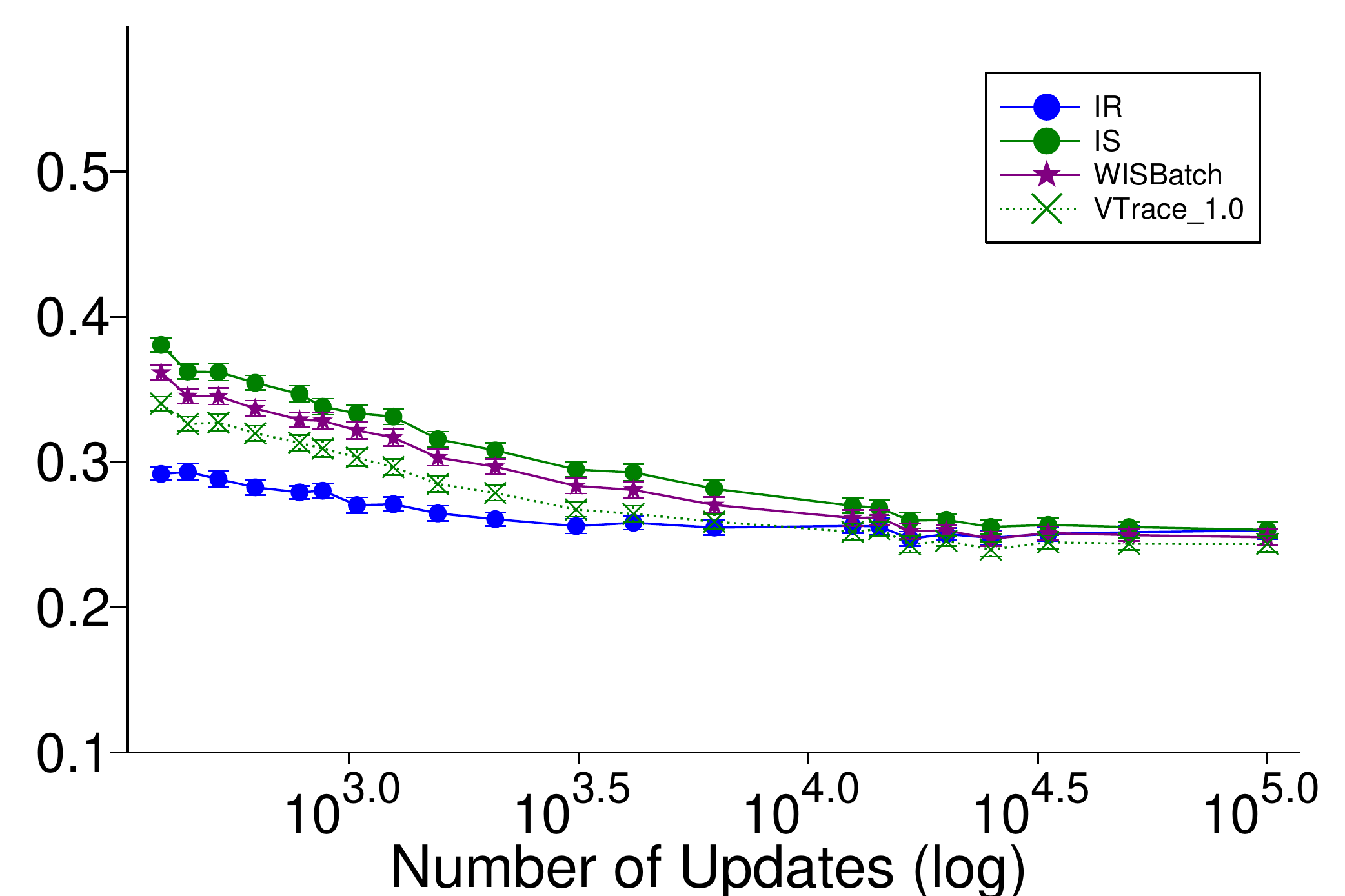}
  \end{subfigure}
  \begin{subfigure}{0.33\textwidth}
    \includegraphics[width=\textwidth]{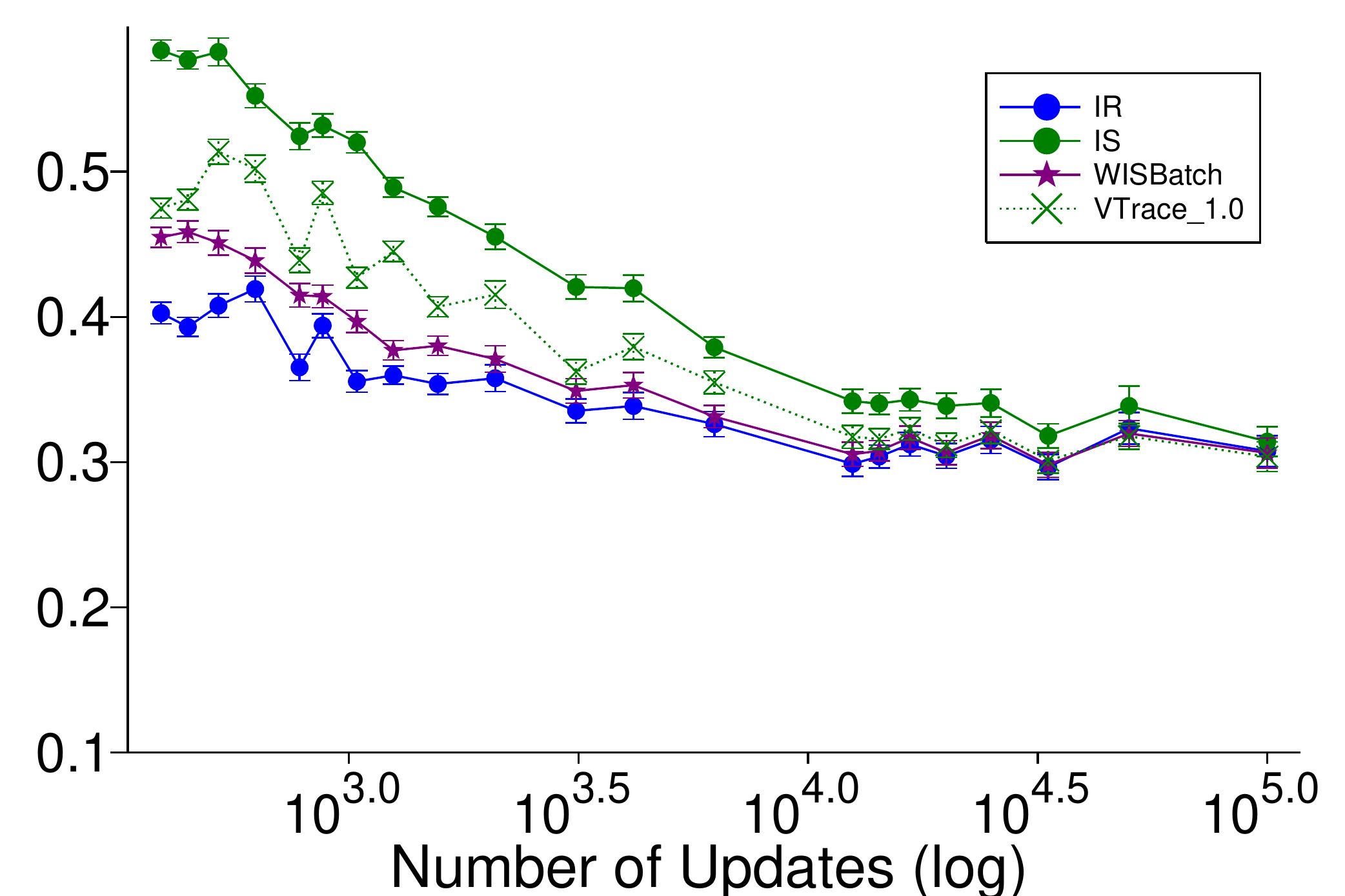}
  \end{subfigure}
  \begin{subfigure}{0.33\textwidth}
    \includegraphics[width=\textwidth]{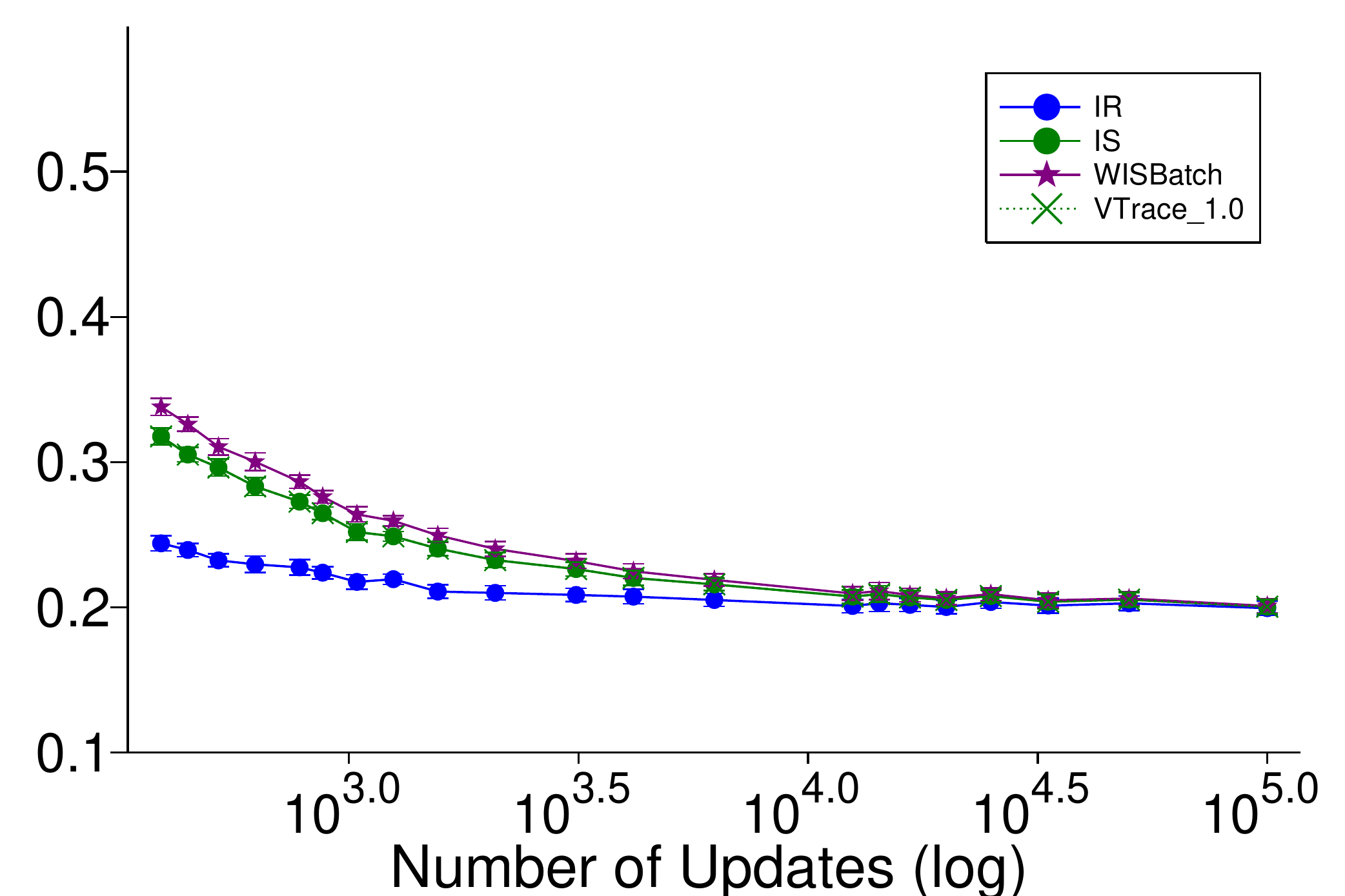}
  \end{subfigure}

  \begin{subfigure}{0.33\textwidth}
    \includegraphics[width=\textwidth]{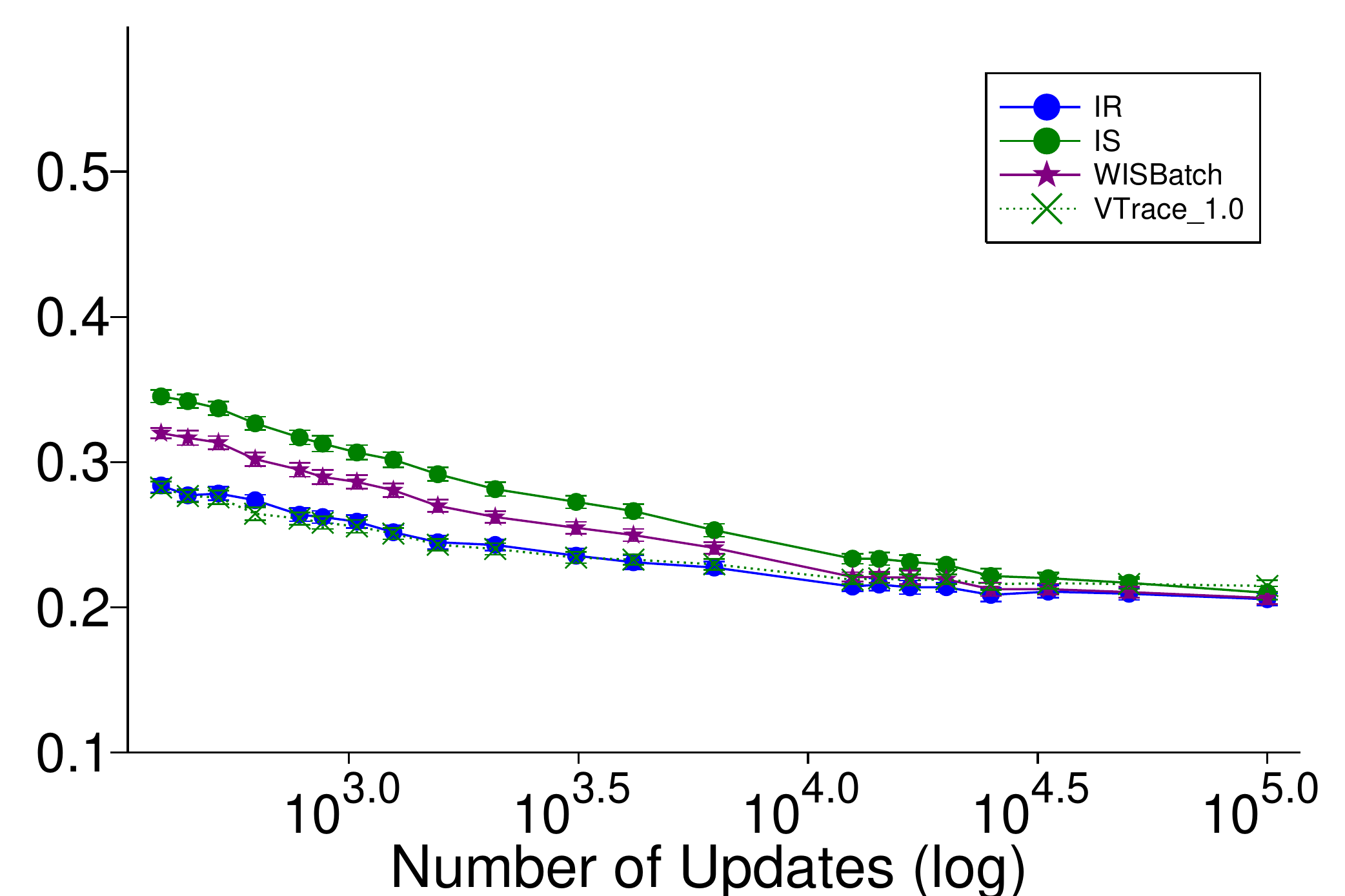}
  \end{subfigure}
  \begin{subfigure}{0.33\textwidth}
    \includegraphics[width=\textwidth]{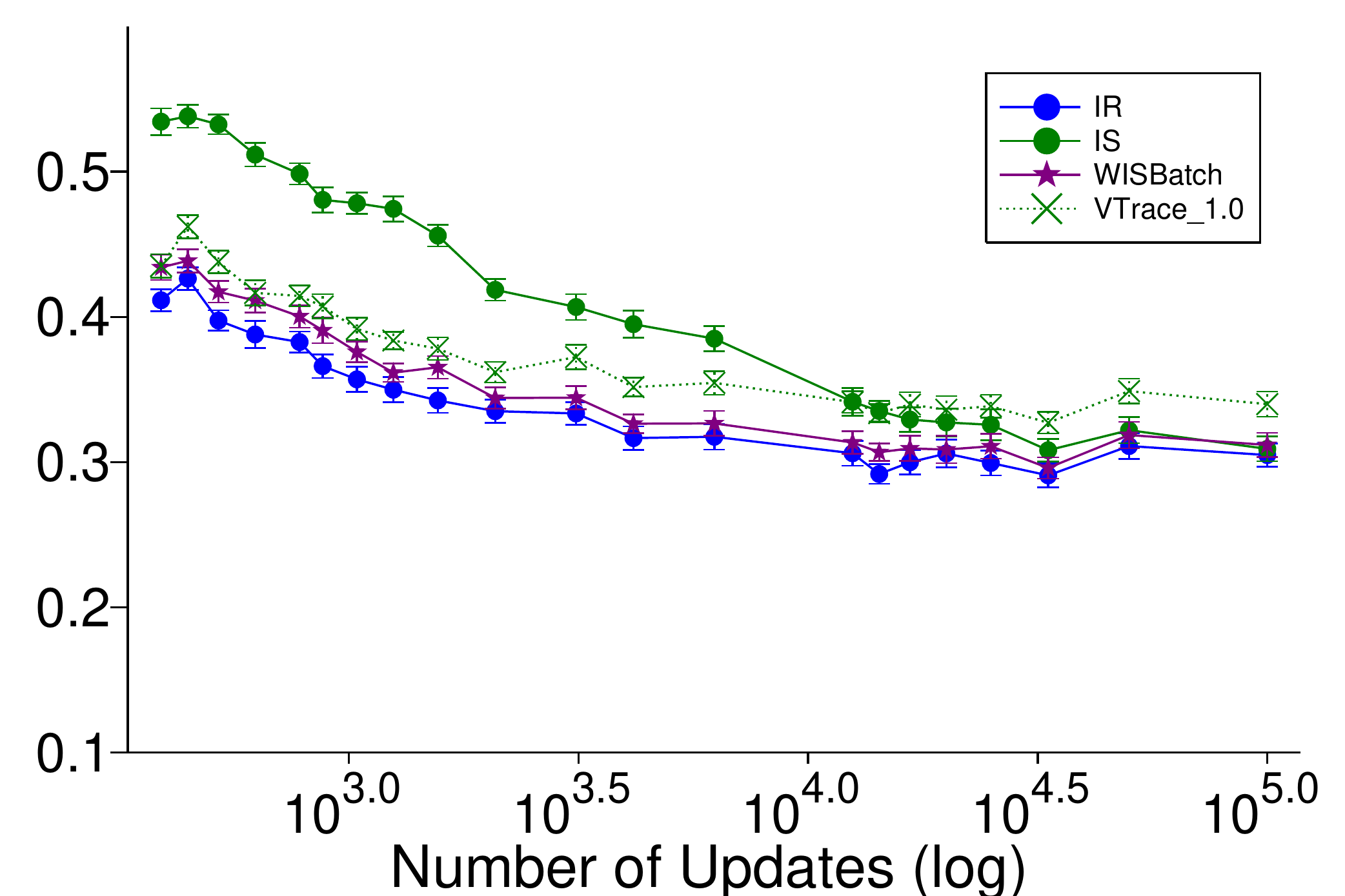}
  \end{subfigure}
  \begin{subfigure}{0.33\textwidth}
    \includegraphics[width=\textwidth]{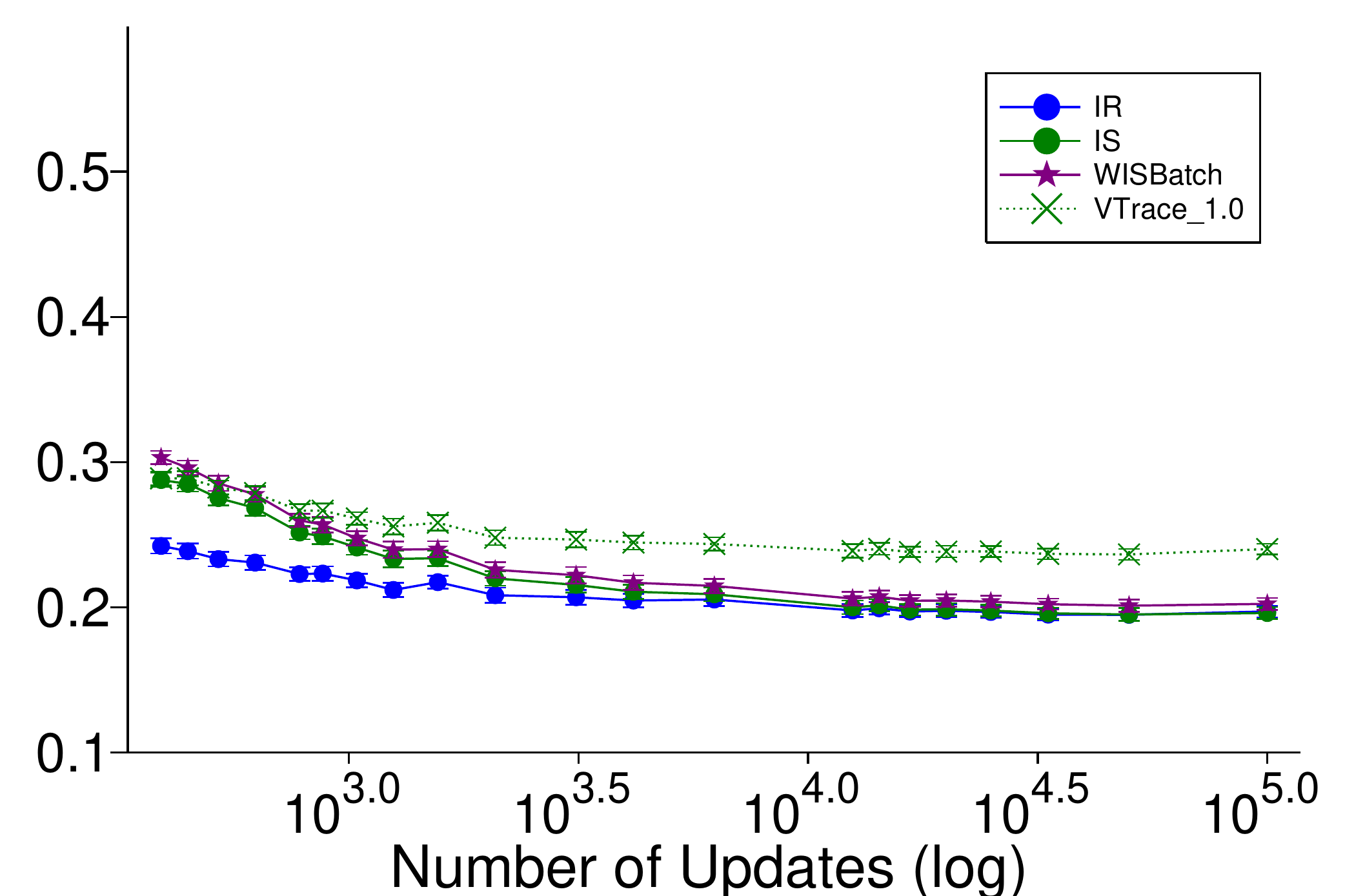}
  \end{subfigure}
  \caption{SGD: Target Policy: {\bf Top}: persistent down, {\bf Bottom} favored down. Behaviour Policy: {\bf left} State Variant {\bf center} State Weight Variant {\bf right} Uniform.  Sample efficiency plots.} \label{fig:app_frc_sgd}
\end{figure}
\subsection{Continuous Four Rooms} \label{appendix:four_rooms_cont}

% The continuous four rooms environment is an 11x11 2d continuous world with walls setup as the original four rooms environment grid world. The agent is a circle with radius 0.1, and the state consists of a continuous tuple containing the x and y coordinates of the agent's center point. The agent takes an action in one of the 4 cardinal directions moving $0.5 \pm \mathbb{U}(0.0, 0.1)$  in that directions and random drift in the orthogonal direction sampled from $\mathbb{N}(0.0,0.01)$. The simulation takes 10 intermediary steps per action to more accurately detect collisions.
The continuous four rooms environment is an 11x11 2d continuous world with walls setup as the original four rooms environment grid world. The agent is a circle with radius 0.1, and the state consists of a continuous tuple containing the x and y coordinates of the agent's center point. The agent takes an action in one of the 4 cardinal directions moving $0.5 \pm \mathbb{U}(0.0, 0.1)$  in that directions and random drift in the orthogonal direction sampled from $\mathbb{N}(0.0,0.01)$. The simulation takes 10 intermediary steps to more accurately detect collisions.

We use three behavior policies in our experiments. {\bf Uniform:} the agent selects all actions uniformly, {\bf State Variant:} the agent selects all actions uniformly except in pre-determined subsections of the environment where the agent will take down with likelihood 0.1 and the rest distributed evenly over the other actions, {\bf State Weight Variant:} the agent selects all actions uniformly except in pre-determined subsections where the pmf is defined randomly. We also use two target policies. {\bf Persistent Down}: where the agent always takes the down action, {\bf Favored Down:} where the agent takes the down action with likelihood 0.9 and uniformly among the other actions with likelihood 0.1. We use a cumulant function which indicates collision with a wall and a termination function which terminates on collision and is 0.9 otherwise for all value functions.
We present results using SGD and RMSProp over many algorithms and parameter settings in figures \ref{fig:app_frc_sgd}, \ref{fig:app_frc_rmsprop}, \label{fig:app_frc_incremental}, and \ref{fig:app_frc_sgd_sens}.

\begin{figure}
  \begin{subfigure}{0.33\textwidth}
    \includegraphics[width=\textwidth]{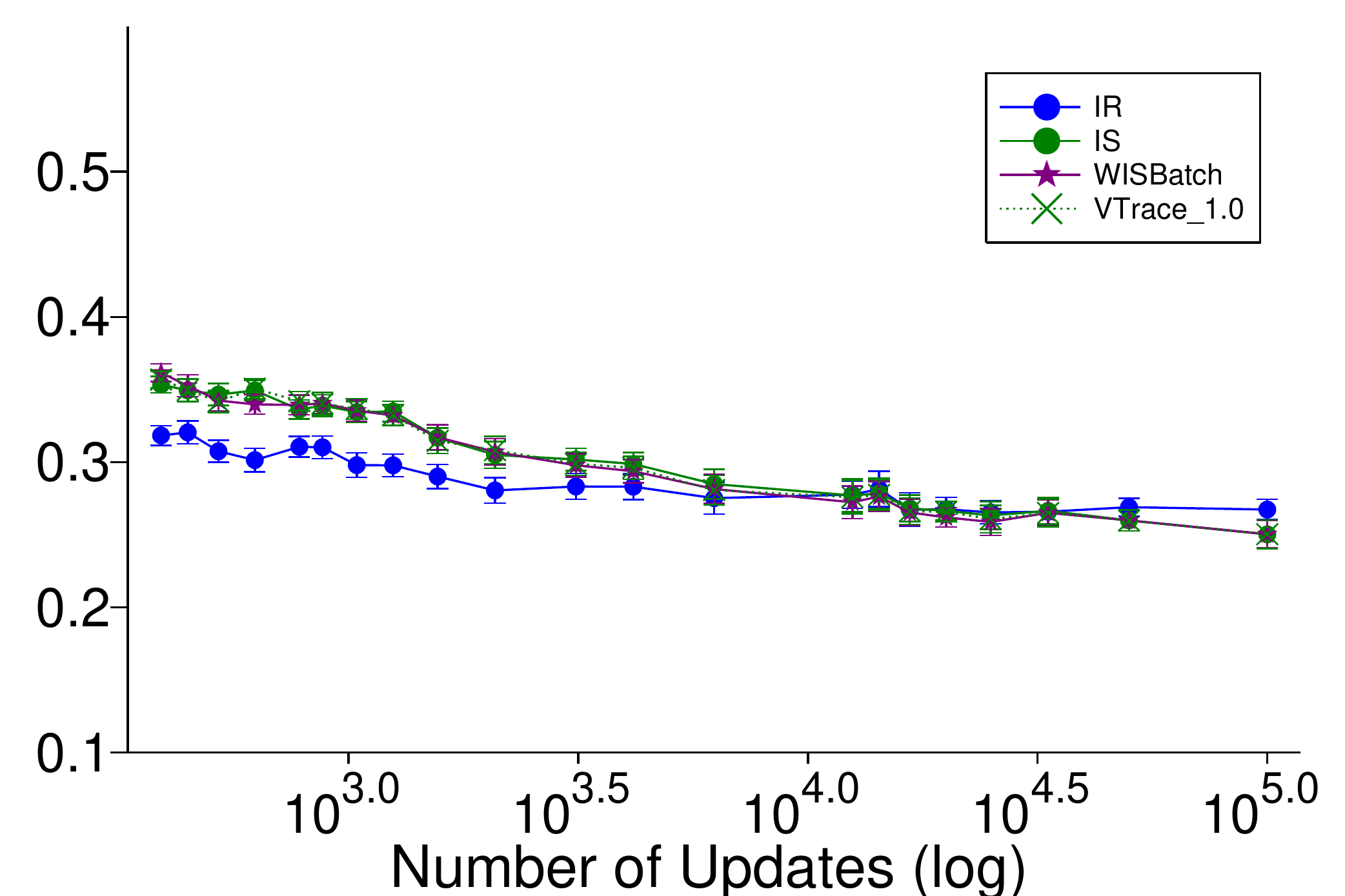}
  \end{subfigure}
  \begin{subfigure}{0.33\textwidth}
    \includegraphics[width=\textwidth]{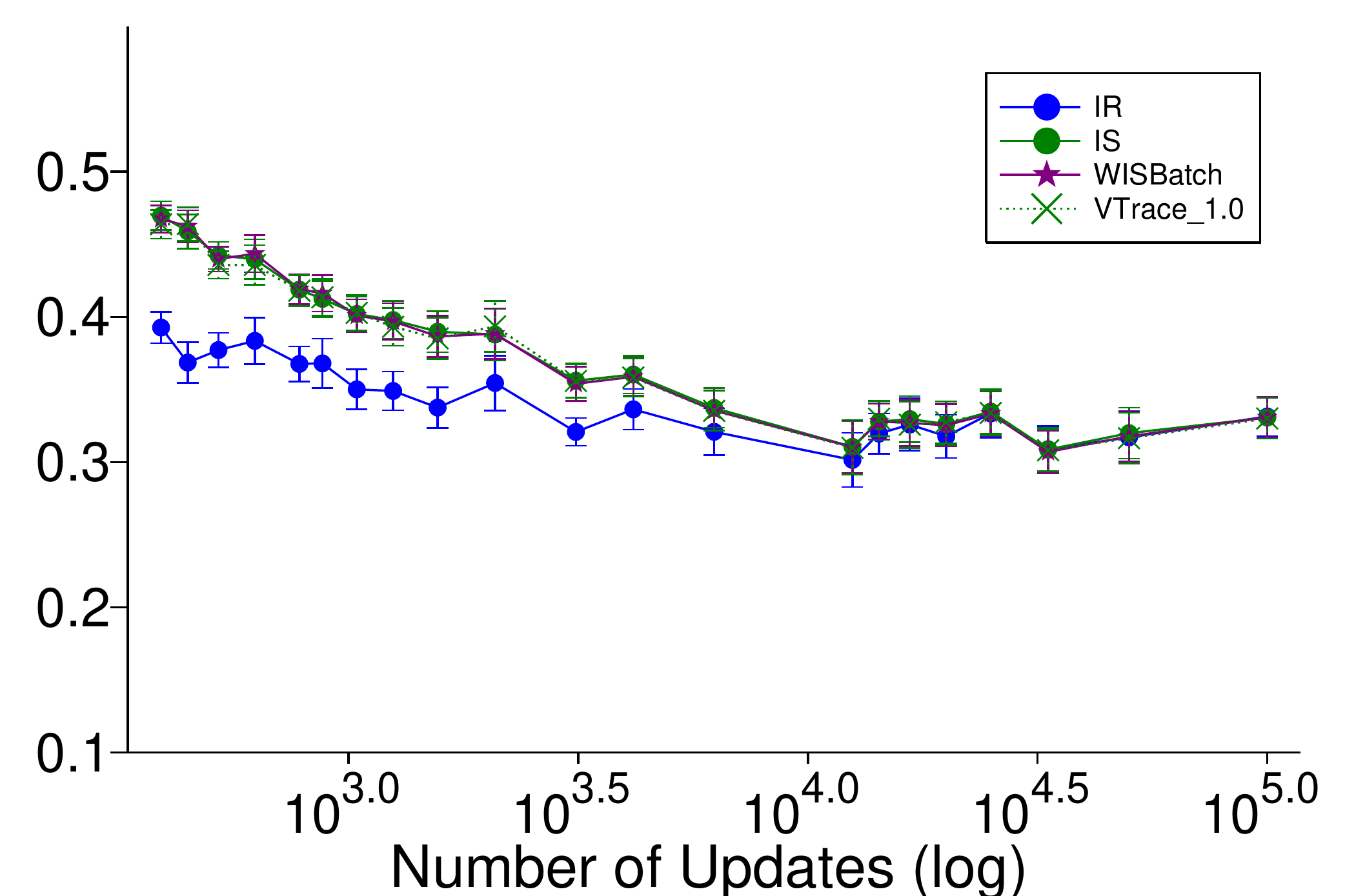}
  \end{subfigure}
  \begin{subfigure}{0.33\textwidth}
    \includegraphics[width=\textwidth]{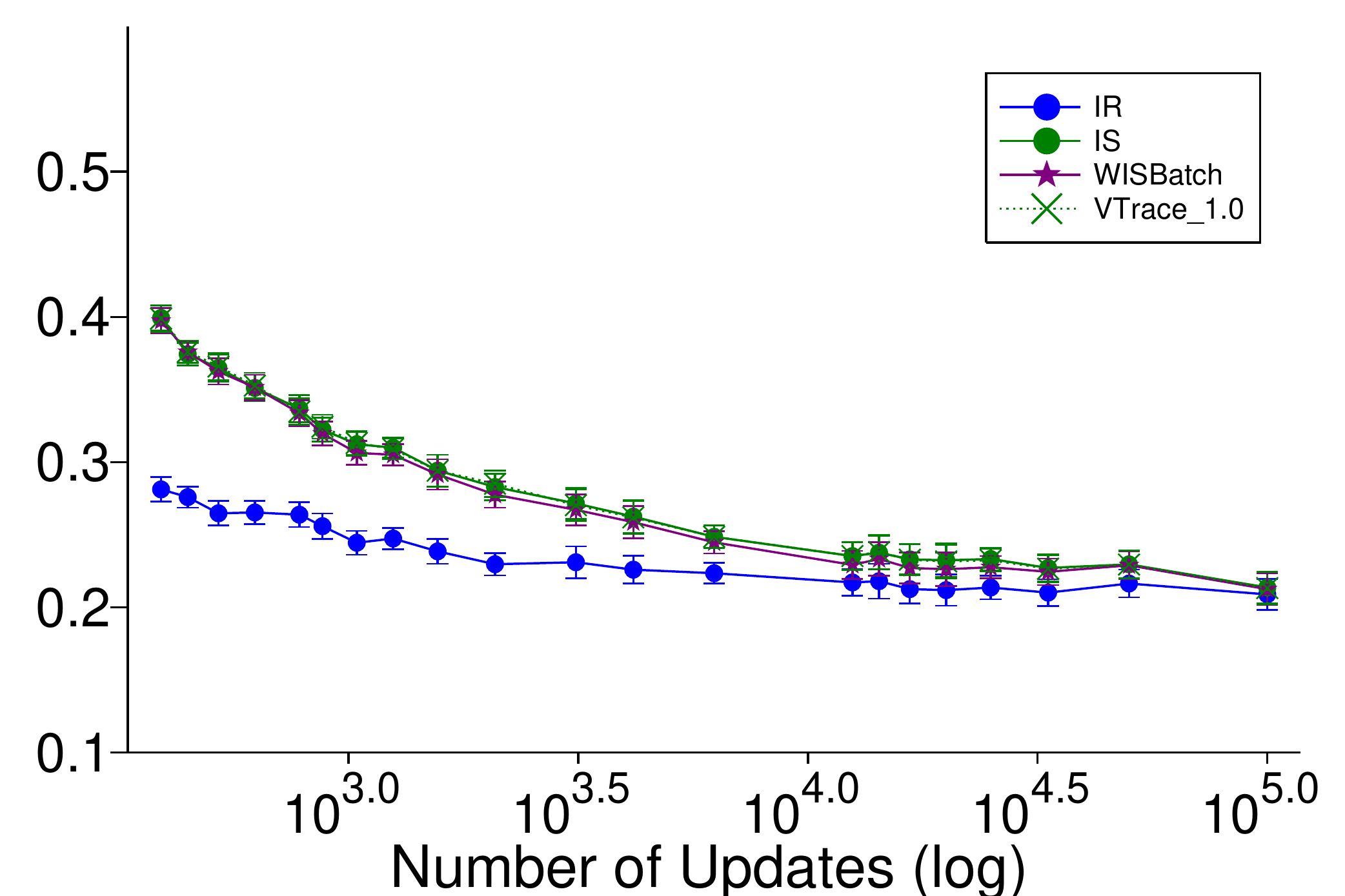}
  \end{subfigure}

  \begin{subfigure}{0.33\textwidth}
    \includegraphics[width=\textwidth]{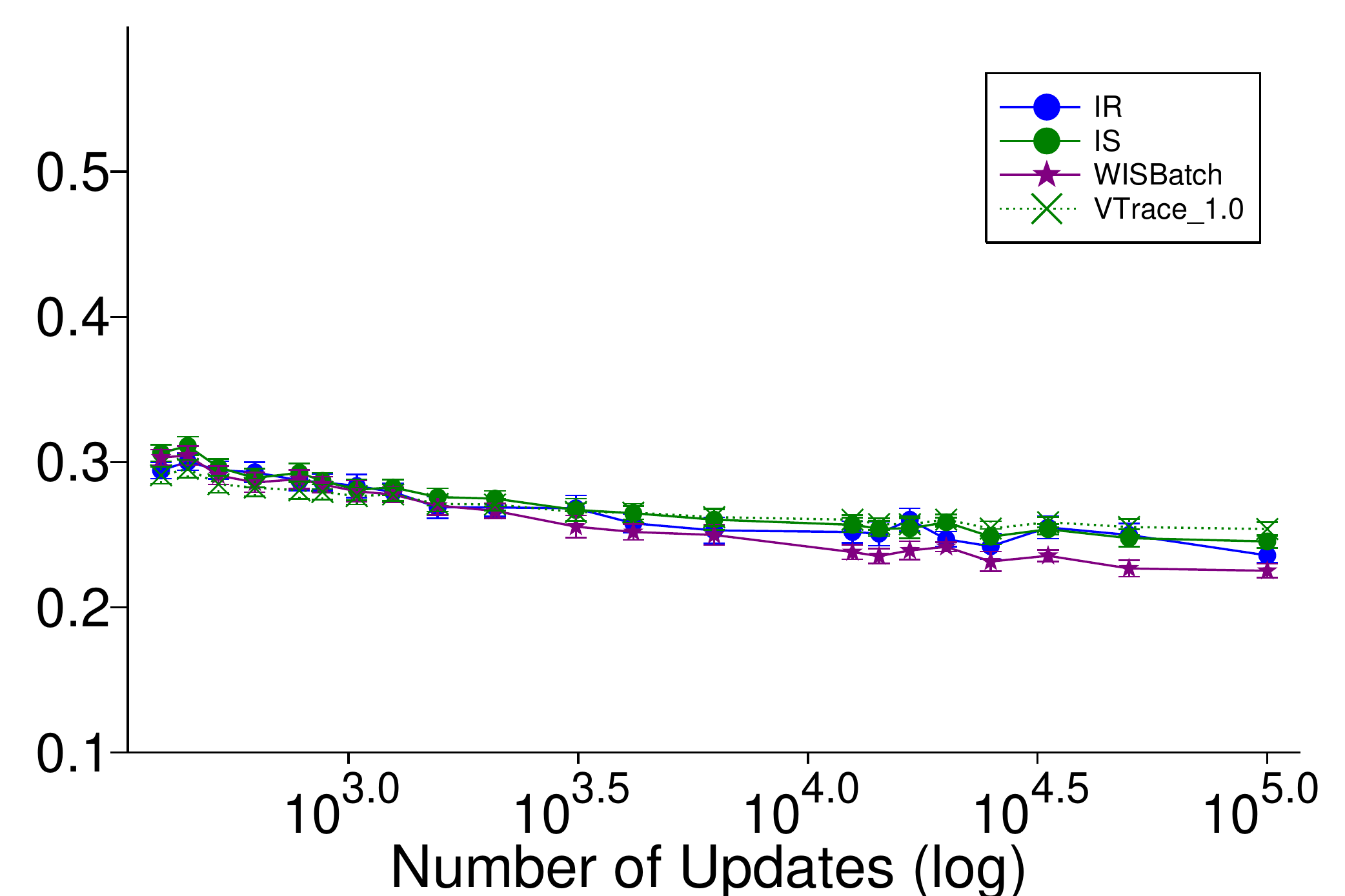}
  \end{subfigure}
  \begin{subfigure}{0.33\textwidth}
    \includegraphics[width=\textwidth]{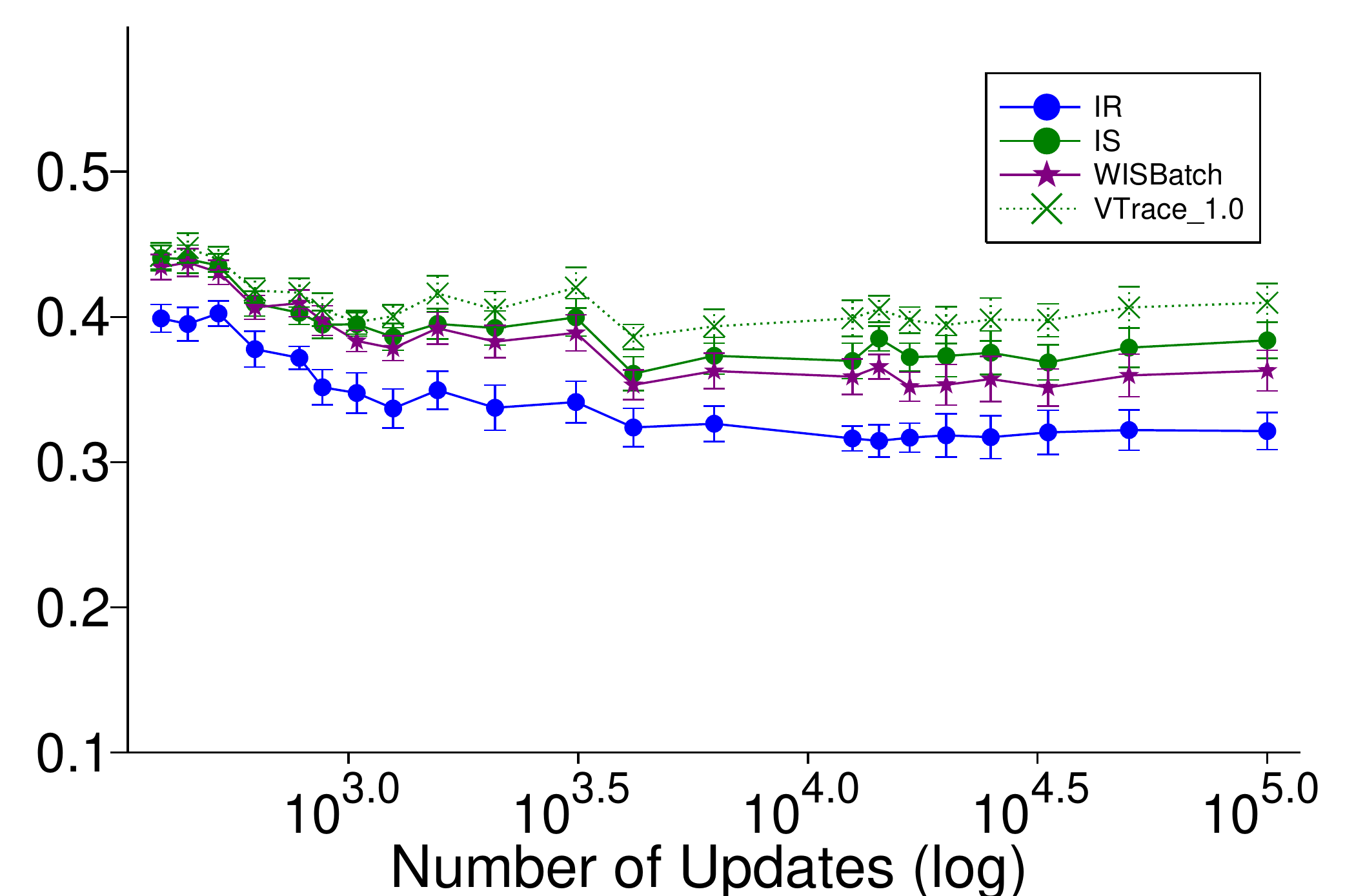}
  \end{subfigure}
  \begin{subfigure}{0.33\textwidth}
    \includegraphics[width=\textwidth]{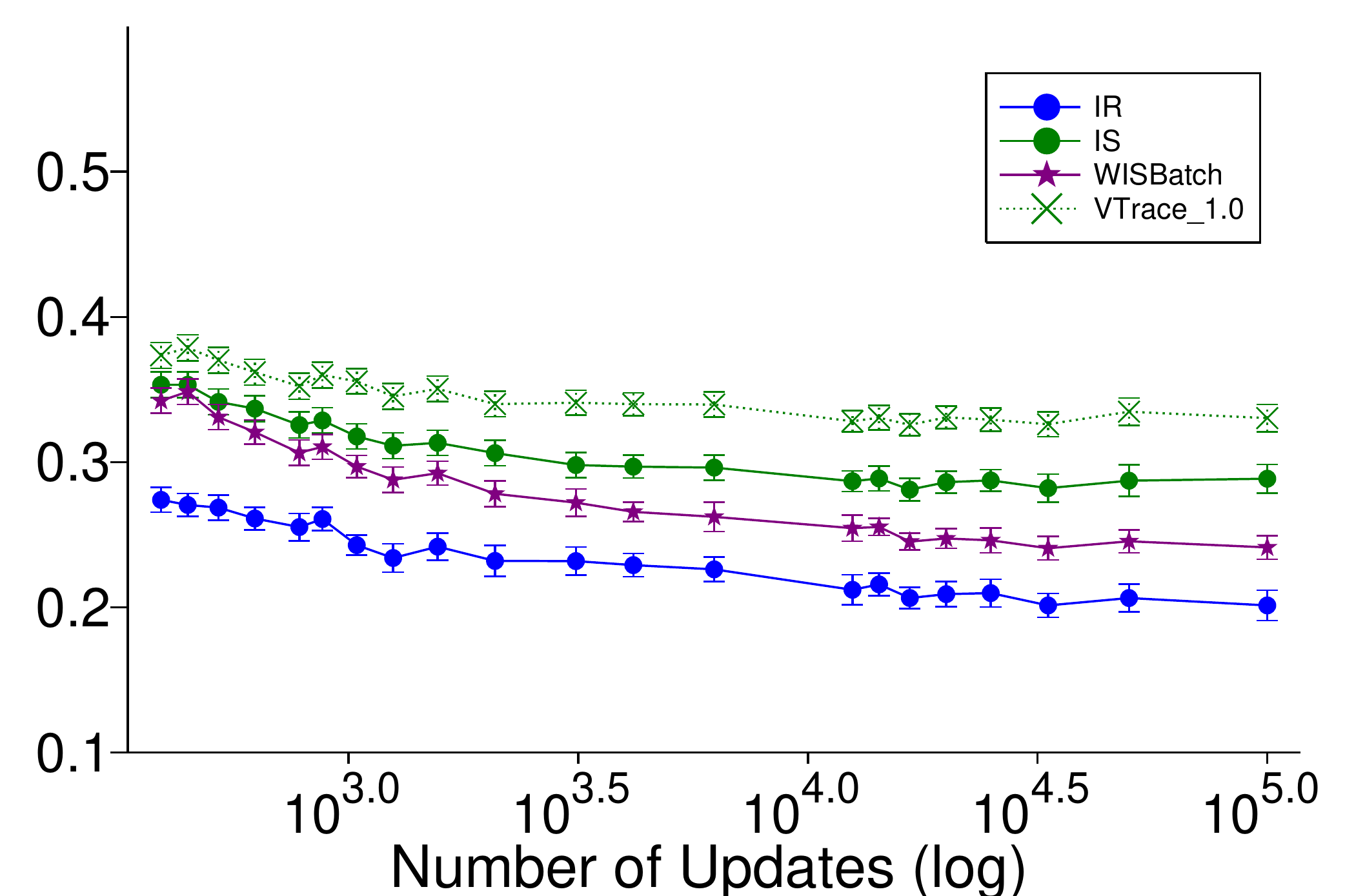}
  \end{subfigure}
  \caption{RMSProp Target Policy: {\bf Top}: persistent down, {\bf Bottom} favored down. Behaviour Policy: {\bf left} State Variant {\bf center} State Weight Variant {\bf right} Uniform. Sample efficiency plots.} \label{fig:app_frc_rmsprop}
\end{figure}

\begin{figure}
  \begin{subfigure}{0.33\textwidth}
    \includegraphics[width=\textwidth]{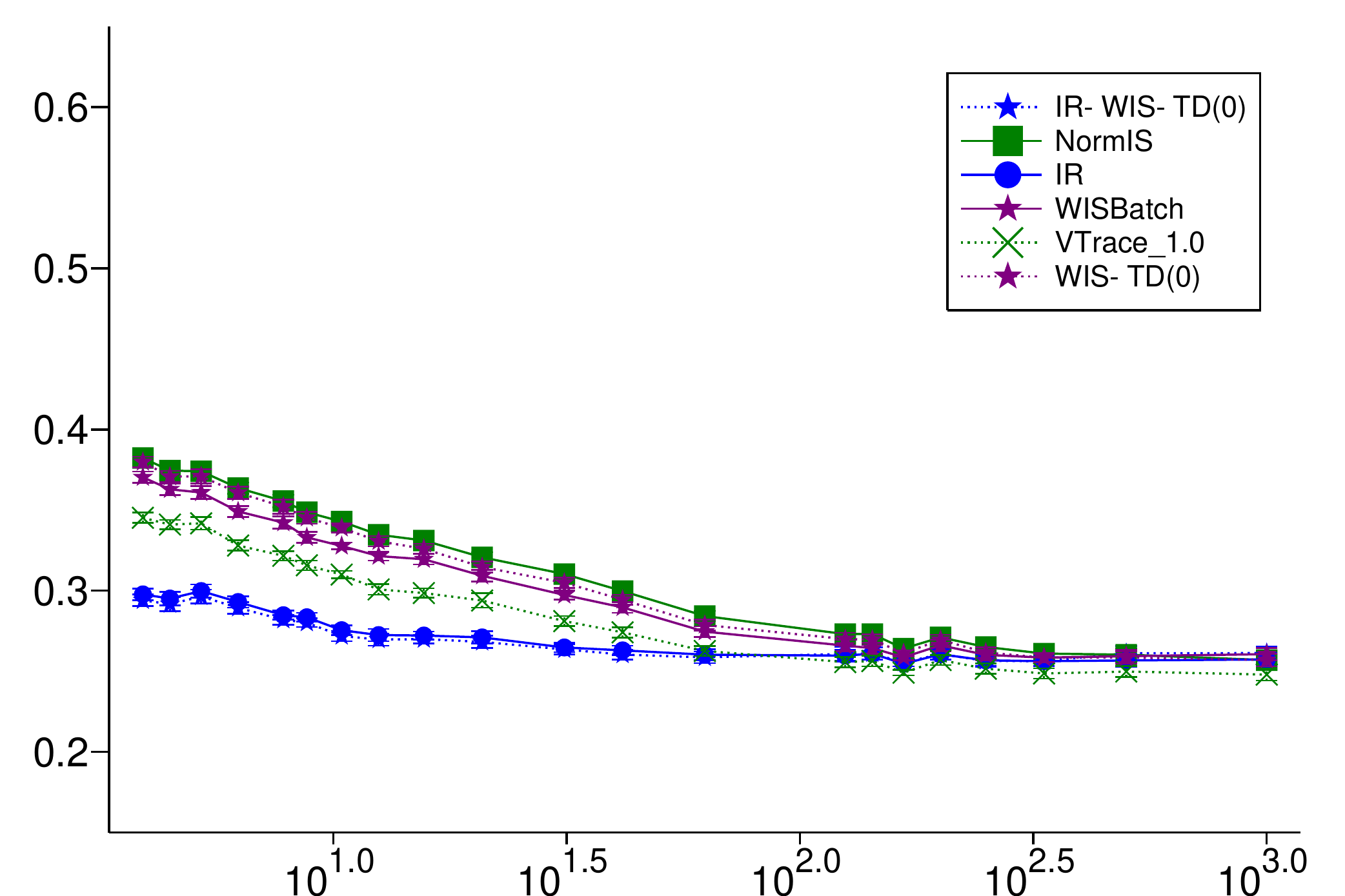}
  \end{subfigure}
  \begin{subfigure}{0.33\textwidth}
    \includegraphics[width=\textwidth]{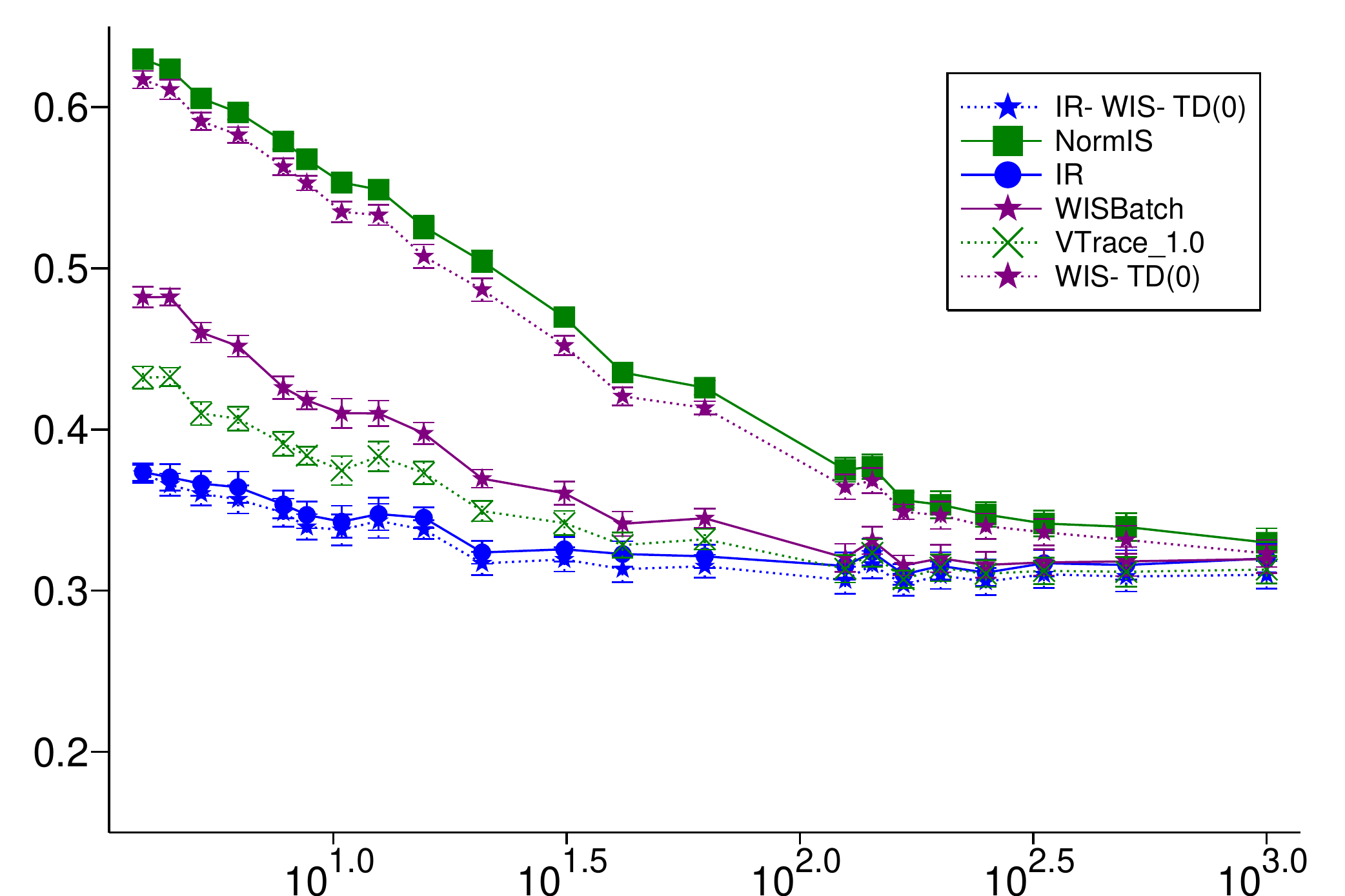}
  \end{subfigure}
  \begin{subfigure}{0.33\textwidth}
    \includegraphics[width=\textwidth]{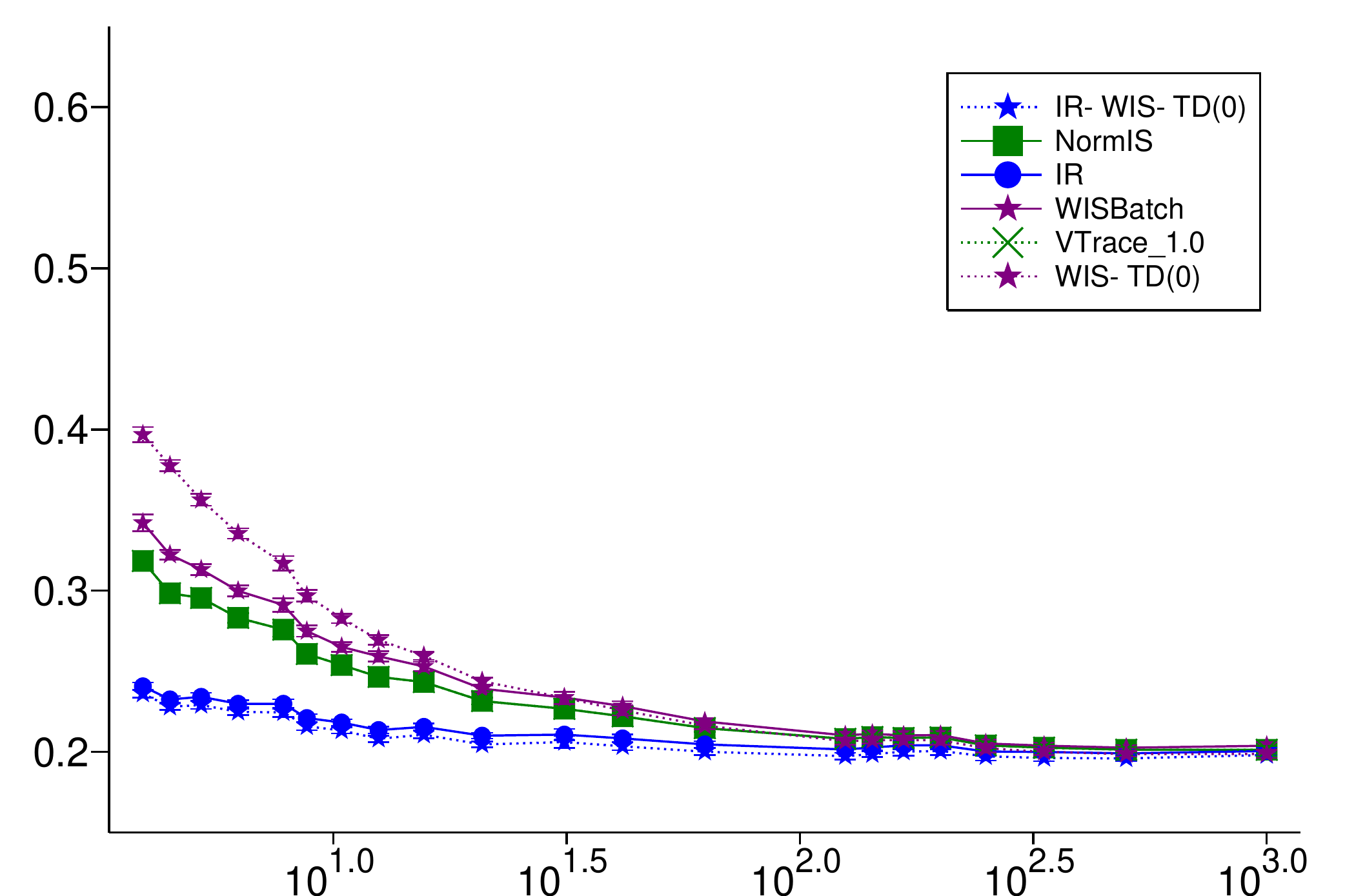}
  \end{subfigure}

  \begin{subfigure}{0.33\textwidth}
    \includegraphics[width=\textwidth]{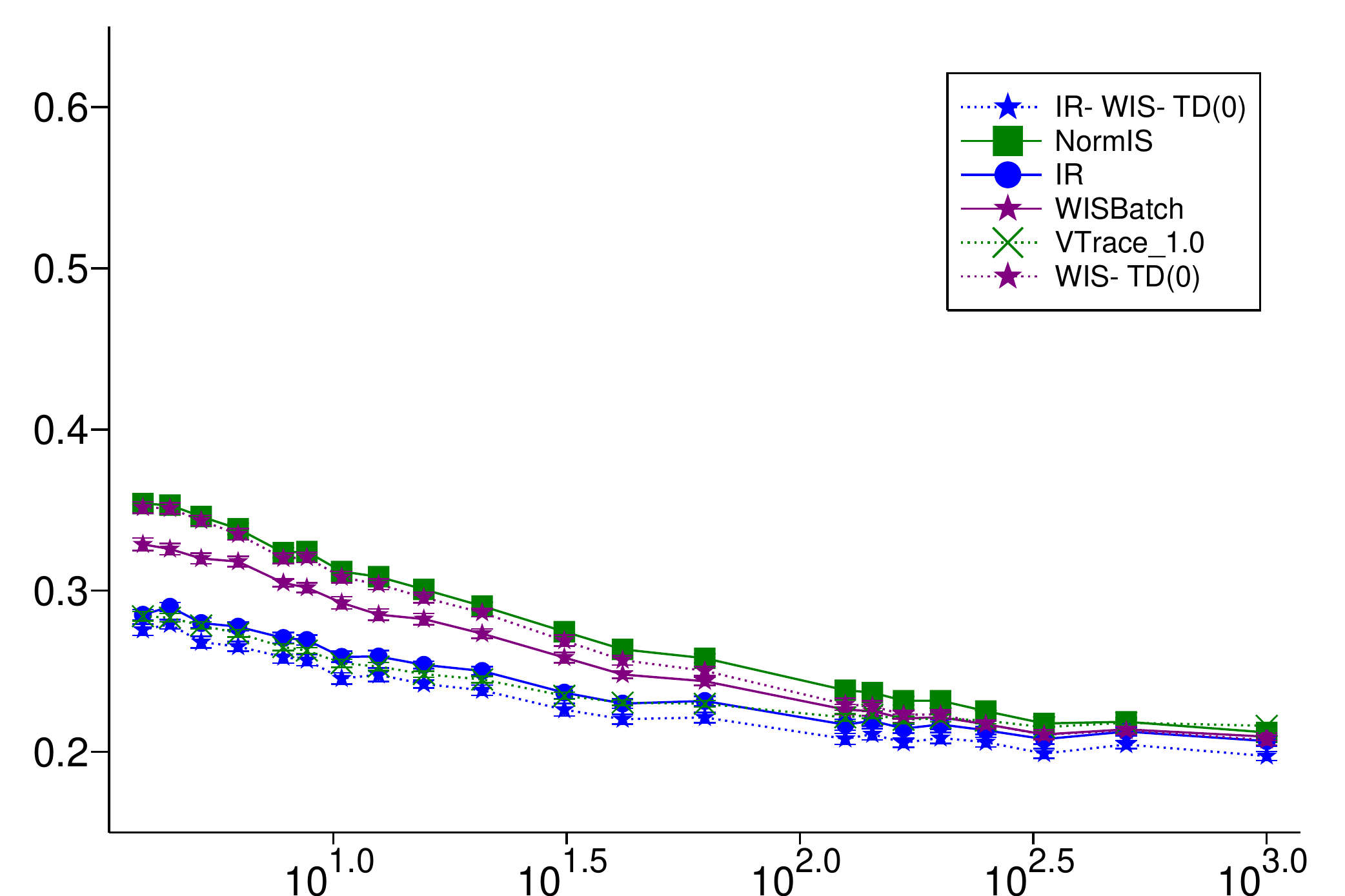}
  \end{subfigure}
  \begin{subfigure}{0.33\textwidth}
    \includegraphics[width=\textwidth]{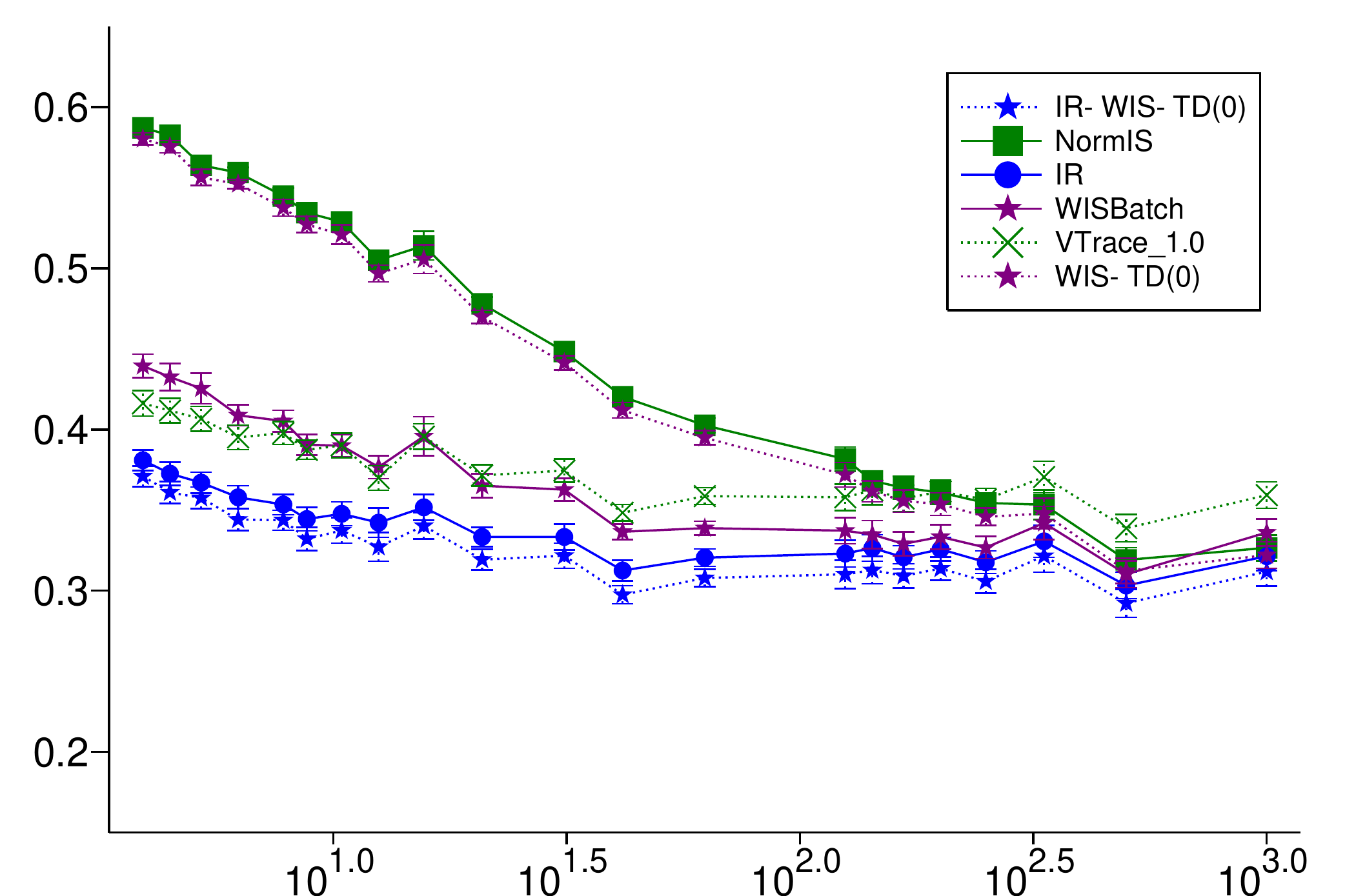}
  \end{subfigure}
  \begin{subfigure}{0.33\textwidth}
    \includegraphics[width=\textwidth]{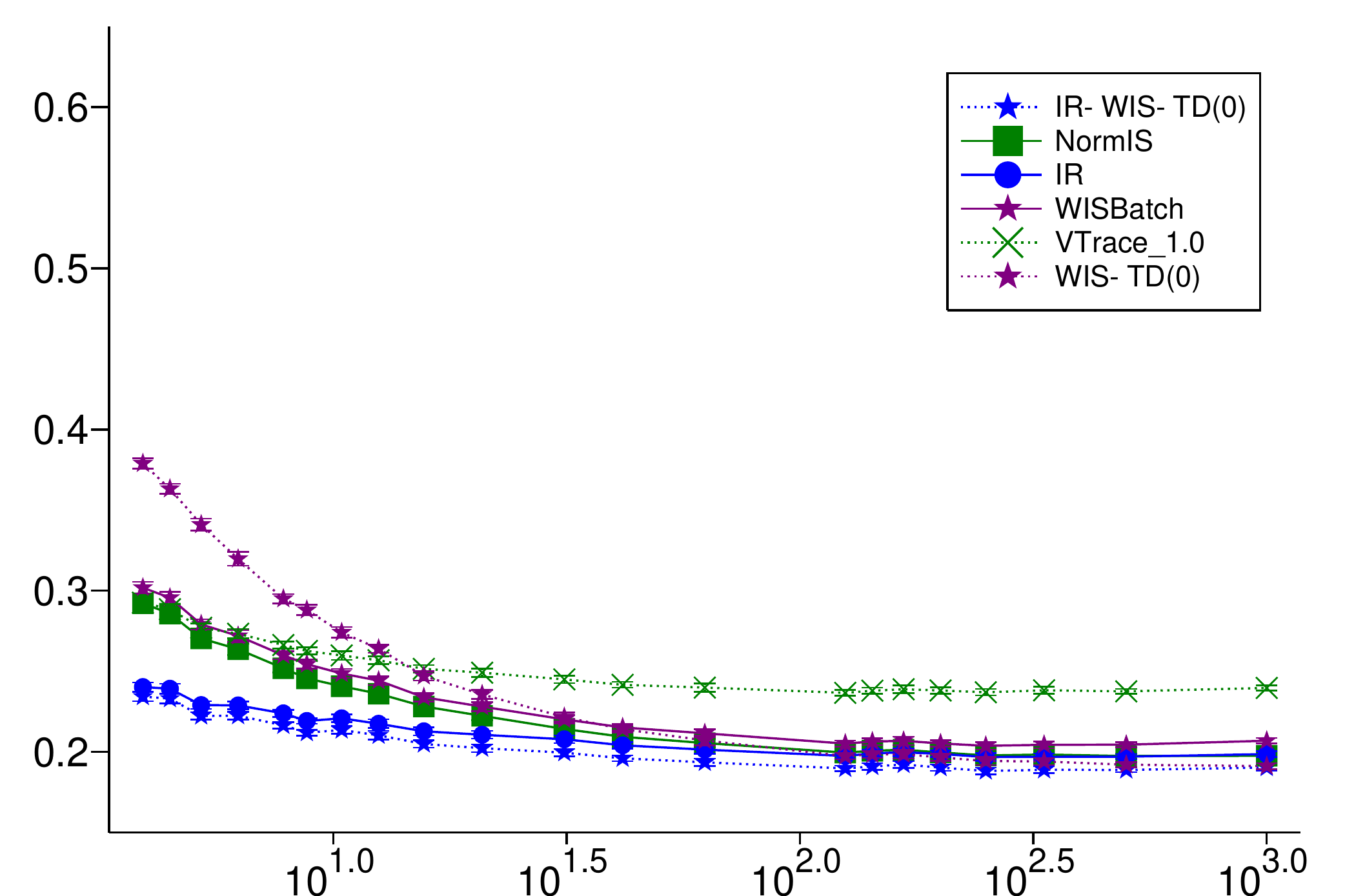}
  \end{subfigure}
  \caption{Incremental Experiments Target Policy: {\bf Top}: persistent down, {\bf Bottom} favored down. Behaviour Policy: {\bf left} State Variant {\bf center} State Weight Variant {\bf right} Uniform. Sample efficiency plots.} \label{fig:app_frc_incremental}
\end{figure}

\begin{figure}
  \begin{subfigure}{0.33\textwidth}
    \includegraphics[width=\textwidth]{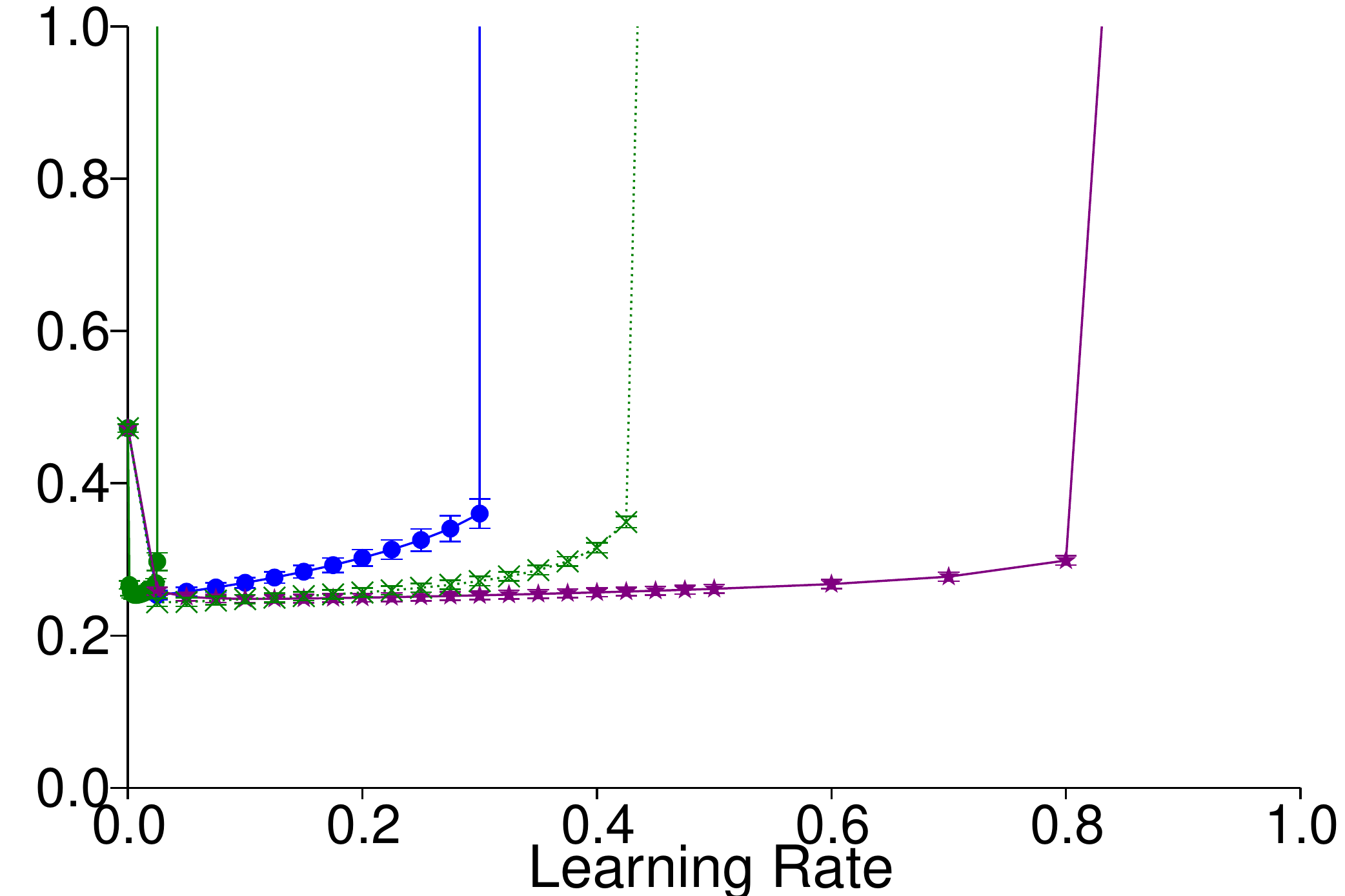}
  \end{subfigure}
  \begin{subfigure}{0.33\textwidth}
    \includegraphics[width=\textwidth]{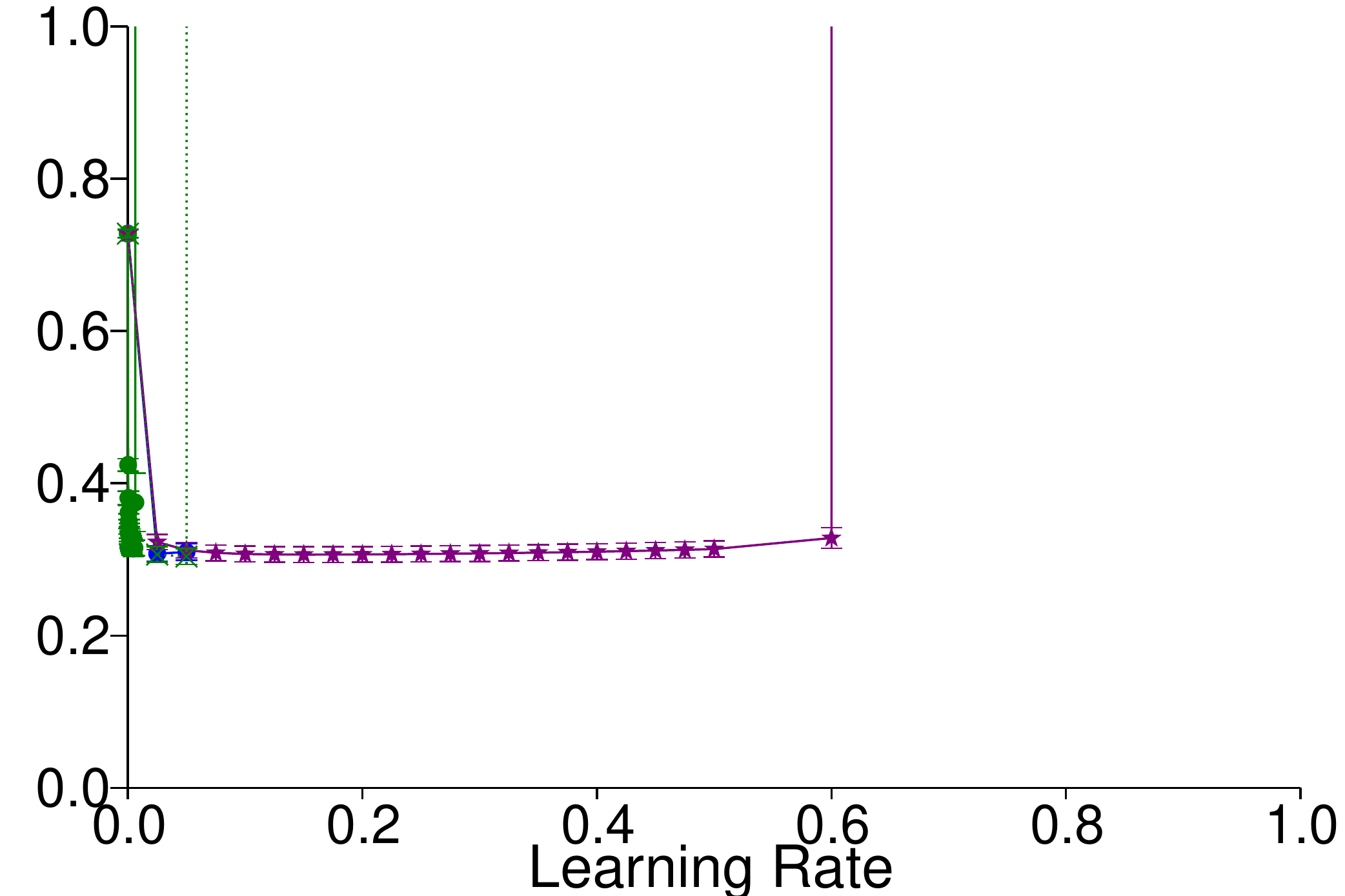}
  \end{subfigure}
  \begin{subfigure}{0.33\textwidth}
    \includegraphics[width=\textwidth]{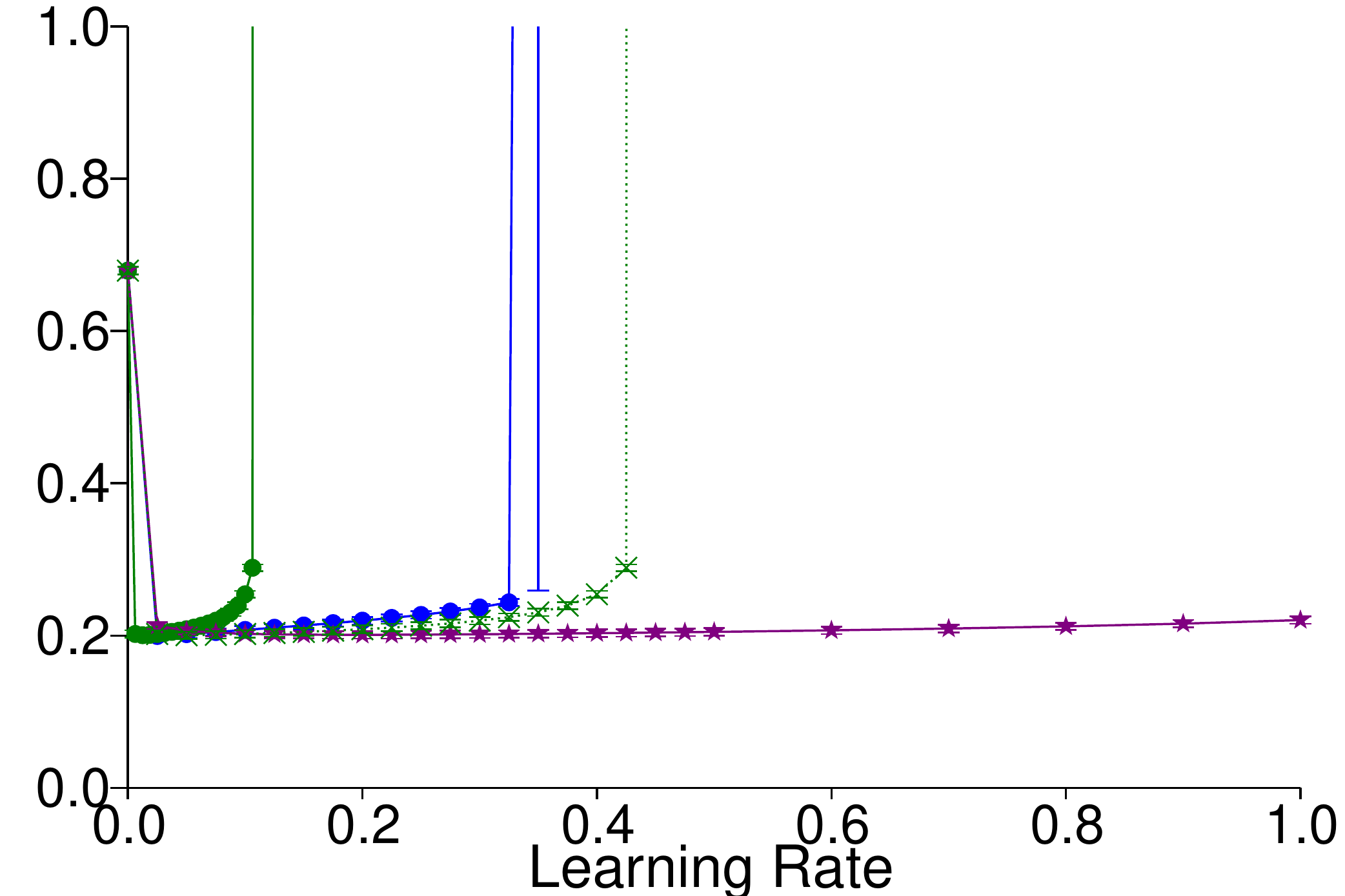}
  \end{subfigure}

  \begin{subfigure}{0.33\textwidth}
    \includegraphics[width=\textwidth]{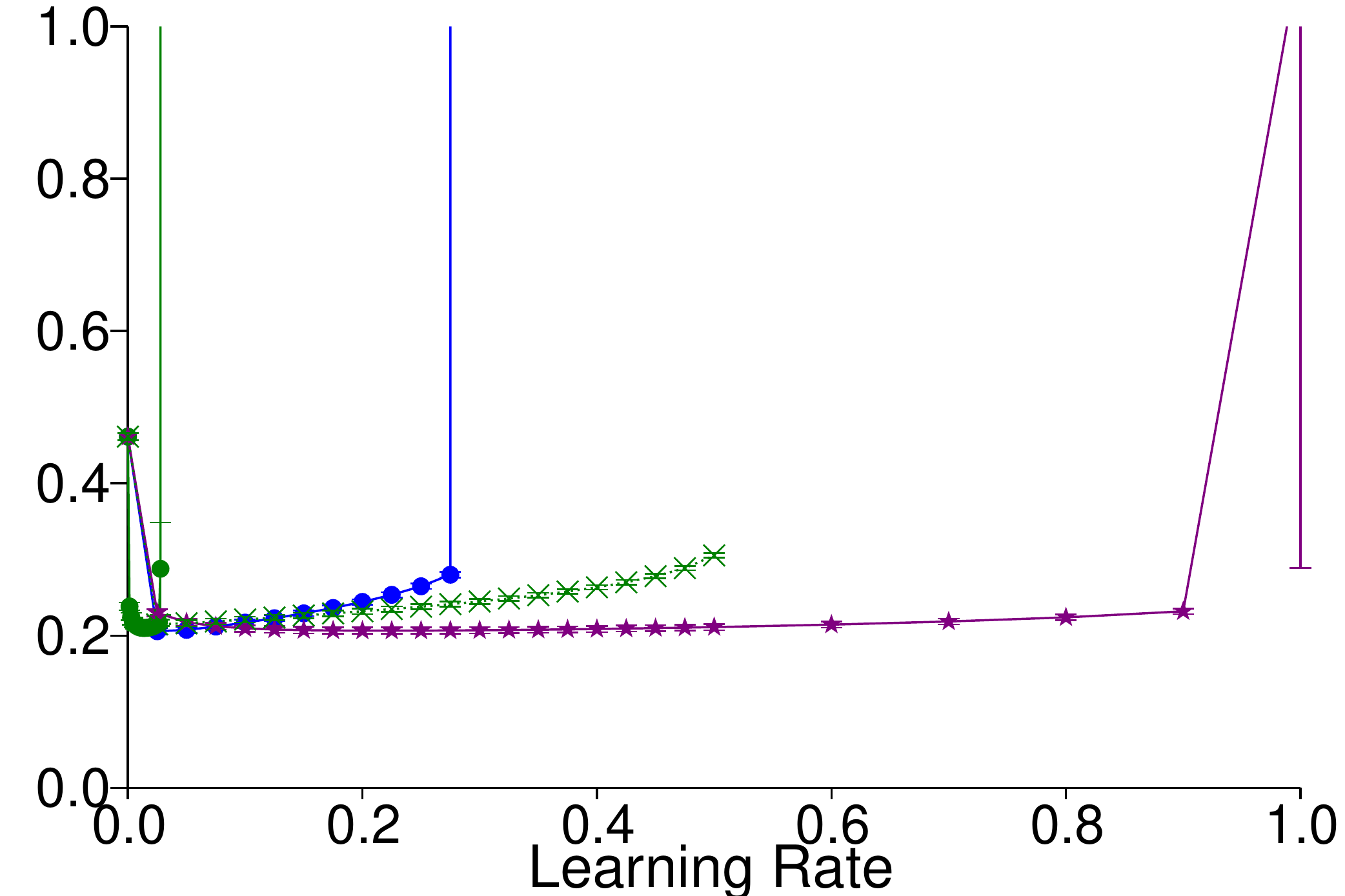}
  \end{subfigure}
  \begin{subfigure}{0.33\textwidth}
    \includegraphics[width=\textwidth]{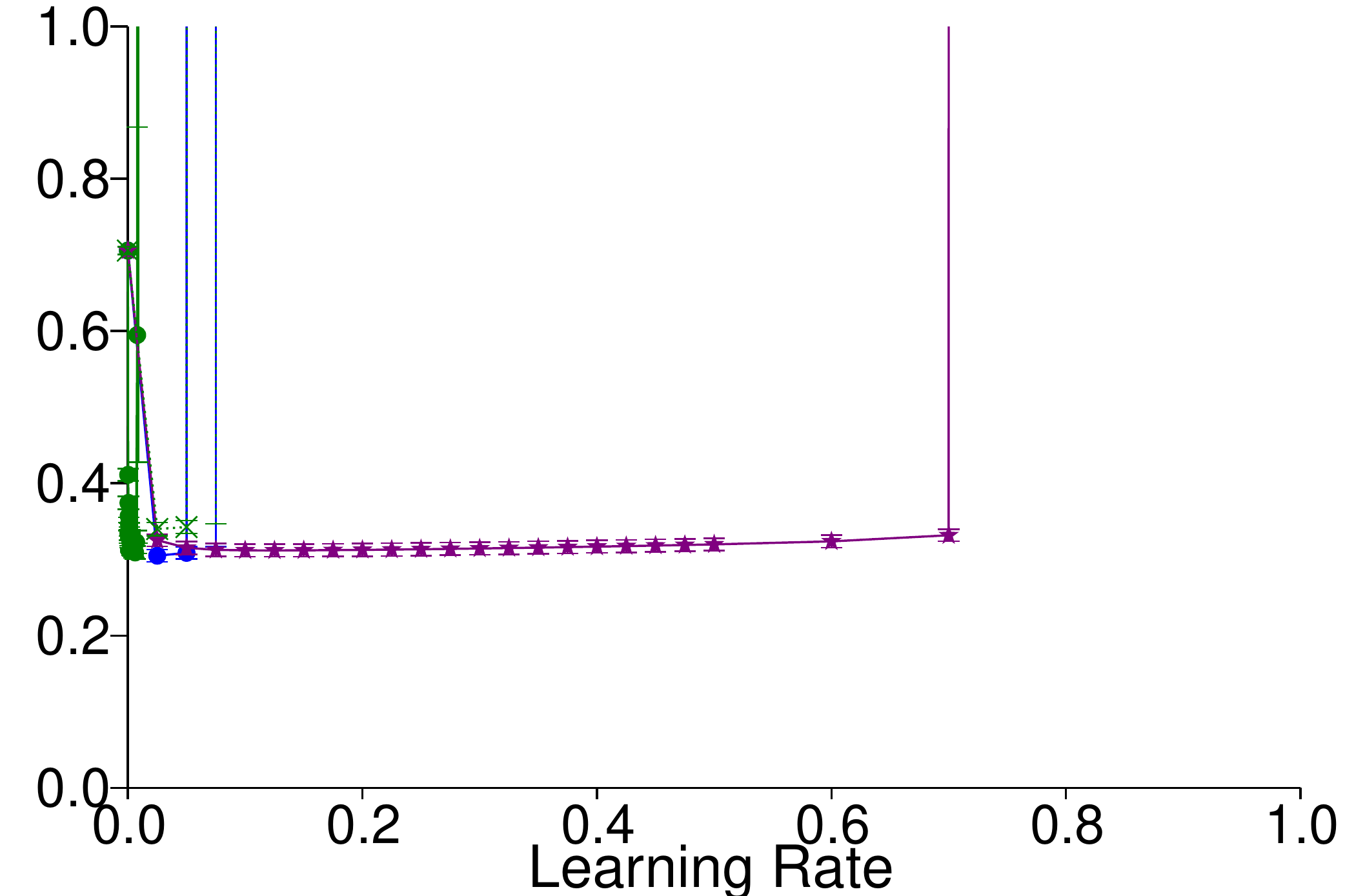}
  \end{subfigure}
  \begin{subfigure}{0.33\textwidth}
    \includegraphics[width=\textwidth]{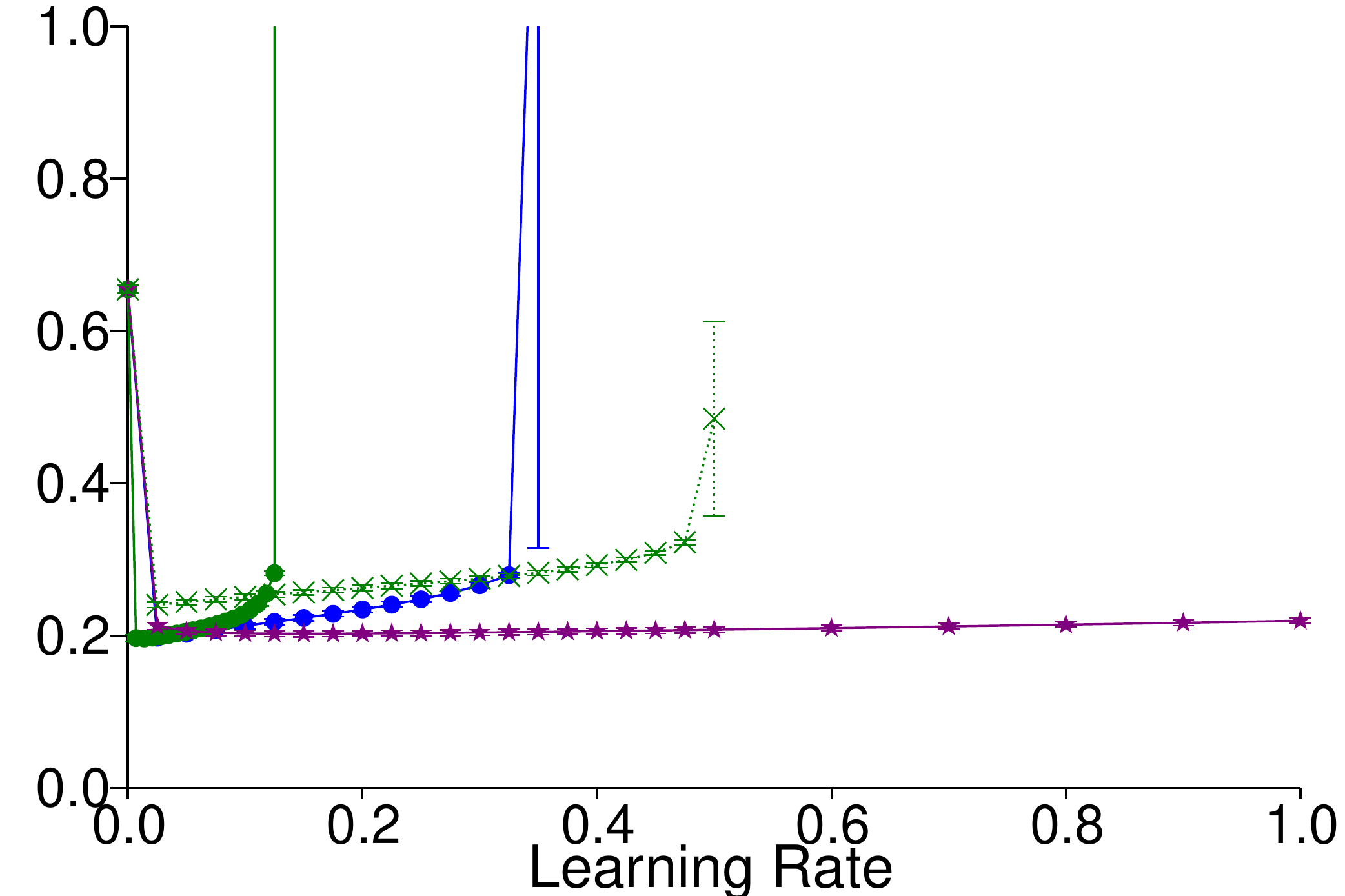}
  \end{subfigure}
  \caption{SGD: Target Policy: {\bf Top}: persistent down, {\bf Bottom} favored down. Behaviour Policy: {\bf left} State Variant {\bf center} State Weight Variant {\bf right} Uniform. Learning Rate Sensitivity} \label{fig:app_frc_sgd_sens}
\end{figure}

\end{document}